\newcommand{\ie}{i.e.\xspace}
\newcommand{\select}[2]{{#2}}
\def\amm{\mathrm{\sf\small DABA}} 
\def\mm{\mathrm{\sf\small DUBA}} 
\def\ceres{\mathrm{\sf\small Ceres}}
\def\deeplm{\mathrm{\sf\small DeepLM}}
\def\dr{\mathrm{\sf\small DR}}
\def\admm{\mathrm{\sf\small ADMM}}
\newcolumntype{P}[1]{>{\centering\arraybackslash}p{#1}}
\newcolumntype{M}[1]{>{\centering\arraybackslash}m{#1}}
\newtheorem{theorem}{Theorem}
\newtheorem{problem}{Problem}
\newtheorem{lemma}[theorem]{Lemma}
\newtheorem{assumption}{Assumption}
\newtheorem{proposition}[theorem]{Proposition}
\newcommand{\bdmath}{\begin{dmath}}
\newcommand{\edmath}{\end{dmath}}
\newcommand{\beq}{\begin{equation}}
\newcommand{\eeq}{\end{equation}}
\newcommand{\bdm}{\begin{displaymath}}
\newcommand{\edm}{\end{displaymath}}
\newcommand{\bea}{\begin{eqnarray}}
\newcommand{\eea}{\end{eqnarray}}
\newcommand{\beal}{\beq \begin{array}{ll}}
\newcommand{\eeal}{\end{array} \eeq}
\newcommand{\beas}{\begin{eqnarray*}}
\newcommand{\eeas}{\end{eqnarray*}}
\newcommand{\ba}{\begin{array}}
\newcommand{\ea}{\end{array}}
\newcommand{\bit}{\begin{itemize}}
\newcommand{\eit}{\end{itemize}}
\newcommand{\ben}{\begin{enumerate}}
\newcommand{\een}{\end{enumerate}}
\newcommand{\calE}{{\cal E}}
\newcommand{\calM}{{\cal M}}
\newcommand{\calP}{{\cal P}}
\newcommand{\calS}{{\cal S}}
\newcommand{\bfI}{\mathbf{I}}
\newcommand{\bfM}{\mathbf{M}}
\newcommand{\bfR}{\mathbf{R}}
\newcommand{\bfc}{\mathbf{c}}
\newcommand{\bfd}{\mathbf{d}}
\newcommand{\bfe}{\mathbf{e}}
\newcommand{\bfg}{\mathbf{g}}
\newcommand{\bfl}{\mathbf{l}}
\newcommand{\bfp}{\mathbf{p}}
\newcommand{\bft}{\mathbf{t}}
\newcommand{\bfu}{\mathbf{u}}
\newcommand{\bfx}{\mathbf{x}}
\newcommand{\bfy}{\mathbf{y}}
\newcommand{\sfk}{\sf k}
\newcommand{\sfn}{\sf n}
\newcommand{\etal}{\emph{et~al.}\xspace}
\newcommand{\hide}[1]{}
\newcommand{\hiddenText}{{\color{gray} hidden text.}}
\newcommand{\hideWithText}[1]{\hiddenText}
\newcommand{\zero}{{\mathbf 0}}
\newcommand{\Real}[1]{ { {\mathbb R}^{#1} } }
\newcommand{\reals}{\Real{}}
\newcommand{\SEthree}{\ensuremath{\mathrm{SE}(3)}\xspace}
\newcommand{\SOthree}{\ensuremath{\mathrm{SO}(3)}\xspace}
\newcommand{\SO}[1]{\ensuremath{\mathrm{SO}(#1)}\xspace}
\newcommand{\blue}[1]{{\color{blue}#1}}
\definecolor{myred}{rgb}{0.8500, 0.3250, 0.0980}
\definecolor{myblue}{rgb}{0, 0.4470, 0.7410}
\newcommand{\linkToPdf}[1]{\href{#1}{\blue{(pdf)}}}
\newcommand{\linkToPpt}[1]{\href{#1}{\blue{(ppt)}}}
\newcommand{\linkToCode}[1]{\href{#1}{\blue{(code)}}}
\newcommand{\linkToWeb}[1]{\href{#1}{\blue{(web)}}}
\newcommand{\linkToVideo}[1]{\href{#1}{\blue{(video)}}}
\newcommand{\award}[1]{\xspace} 
\newcommand{\half}{\frac{1}{2}}
\newcommand{\Fij}{F_{ij}\big(\bfc_i,\bfl_j\big)}
\newcommand{\Pijk}{P_{ij}\big(\bfc_i|\bfxk\big)}
\newcommand{\Qijk}{Q_{ij}\big(\bfl_j|\bfxk\big)}
\newcommand{\Ealphak}{E^\alpha\big(\bfx^\alpha|\bfxk\big)}
\newcommand{\lEalphak}{E^\alpha\big(\bfx^\alpha|\bflxk\big)}
\newcommand{\calEalpha}{\calE_\alpha}
\newcommand{\rayij}{\bfp_{ij}}
\newcommand{\pntij}{\bfu_{ij}}
\newcommand{\bfxa}{\bfx^\alpha}
\newcommand{\bfxak}{\bfx^{\alpha(\sfk)}}
\newcommand{\bfxakp}{\bfx^{\alpha(\sfk+1)}}
\newcommand{\bfxakm}{\bfx^{\alpha(\sfk-1)}}
\newcommand{\bfxkm}{\iterate{\bfx}{}{k-1}}
\newcommand{\bfxkp}{\iterate{\bfx}{}{k+1}}
\newcommand{\bfxbk}{\bfx^{\beta(\sfk)}}
\newcommand{\bflxk}{\iterate{\overline{\bfx}}{}{k}}
\newcommand{\bflxak}{\overline{\bfx}^{\alpha(\sfk)}}
\newcommand{\bflxbk}{\overline{\bfx}^{\beta(\sfk)}}
\newcommand{\iterate}[3]{{#1}_{#2}^{\sf (#3)}}
\newcommand{\bfRik}{\iterate{\bfR}{i}{k}}
\newcommand{\bftik}{\iterate{\bft}{i}{k}}
\newcommand{\bfcik}{\iterate{\bfc}{i}{k}}
\newcommand{\bfdik}{\iterate{\bfd}{i}{k}}
\newcommand{\bfljk}{\iterate{\bfl}{j}{k}}
\newcommand{\bfpijk}{\iterate{\bfp}{ij}{k}}
\newcommand{\bfgijk}{\iterate{\bfg}{ij}{k}}
\newcommand{\bfeijk}{\iterate{\bfe}{ij}{k}}
\newcommand{\bfxk}{\iterate{\bfx}{}{k}}
\newcommand{\bfRikp}{\iterate{\bfR}{i}{k+1}}
\newcommand{\bftikp}{\iterate{\bft}{i}{k+1}}
\newcommand{\bfcikp}{\iterate{\bfc}{i}{k+1}}
\newcommand{\bfljkp}{\iterate{\bfl}{j}{k+1}}
\newcommand{\bfpijkp}{\iterate{\bfp}{ij}{k+1}}
\newcommand{\bfRikm}{\iterate{\bfR}{i}{k-1}}
\newcommand{\bftikm}{\iterate{\bft}{i}{k-1}}
\newcommand{\bfdikm}{\iterate{\bfd}{i}{k-1}}
\newcommand{\bfljkm}{\iterate{\bfl}{j}{k-1}}
\newcommand{\bfldik}{\iterate{\overline{\bfd}}{i}{k}}
\newcommand{\bflRik}{\iterate{\overline{\bfR}}{i}{k}}
\newcommand{\bfltik}{\iterate{\overline{\bft}}{i}{k}}
\newcommand{\bflljk}{\iterate{\overline{\bfl}}{j}{k}}
\newcommand{\lambdaijk}{\iterate{\lambda}{ij}{k}}
\newcommand{\wijk}{\iterate{w}{ij}{k}}
\newcommand{\aijk}{\iterate{a}{ij}{k}}
\newcommand{\gammaak}{\gamma^{\alpha(\sfk)}}
\newcommand{\sak}{s^{\alpha(\sfk)}}
\newcommand{\sakm}{s^{\alpha(\sfk-1)}}
\newcommand{\sakp}{s^{\alpha(\sfk+1)}}
\newcommand{\Ealpha}{E^{\alpha}}
\newcommand{\Fak}{F^{\alpha(\sfk)}}
\newcommand{\Eak}{E^{\alpha(\sfk)}}
\newcommand{\lFak}{\overline{F}^{\alpha(\sfk)}}
\newcommand{\lFk}{\overline{F}^{(\sfk)}}
\newcommand{\Finf}{F^\infty}
\newcommand{\lFakm}{\overline{F}^{\alpha(\sfk-1)}}
\newcommand{\lFkm}{\overline{F}^{(\sfk-1)}}
\newcommand{\Fakp}{F^{\alpha(\sfk+1)}}
\newcommand{\Eakp}{E^{\alpha(\sfk+1)}}
\newcommand{\lFakp}{\overline{F}^{\alpha(\sfk+1)}}
\newcommand{\lFkp}{\overline{F}^{(\sfk+1)}}
\def\grad{\mathrm{grad}\,}
\def\SVDOPlus{{\sf ProjRot3D}}
\def\ProjGrad{{\sf ProjGrad}}
\crefname{section}{Sec.}{Secs.}
\crefname{assumption}{Assumption}{Assumptions}
\crefname{proposition}{Proposition}{Propositions}
\begin{document}
\title{
\vspace{-0.75em}
Decentralization and Acceleration Enables Large-Scale Bundle Adjustment
}

 \author{
 Taosha Fan$^{1}$,
 Joseph Ortiz$^{2}$,
 Ming Hsiao$^{3}$,
 Maurizio Monge$^{3}$,
 Jing Dong$^{3}$,
 Todd Murphey$^{4}$,
 Mustafa Mukadam$^{1}$\\[5pt]
 $^{1}$Meta AI,
 $^{2}$Imperial College London,
 $^{3}$Reality Labs Research,
 $^{4}$Northwestern University\\
 \vspace{-7mm}
 }

\twocolumn[{
	\renewcommand\twocolumn[1][]{#1}
	\maketitle
	\thispagestyle{empty}
	\begin{center}
		\vspace{-2em}
		\centering
        \includegraphics[width=\linewidth,trim={1.8cm 0 1.8cm 0}]{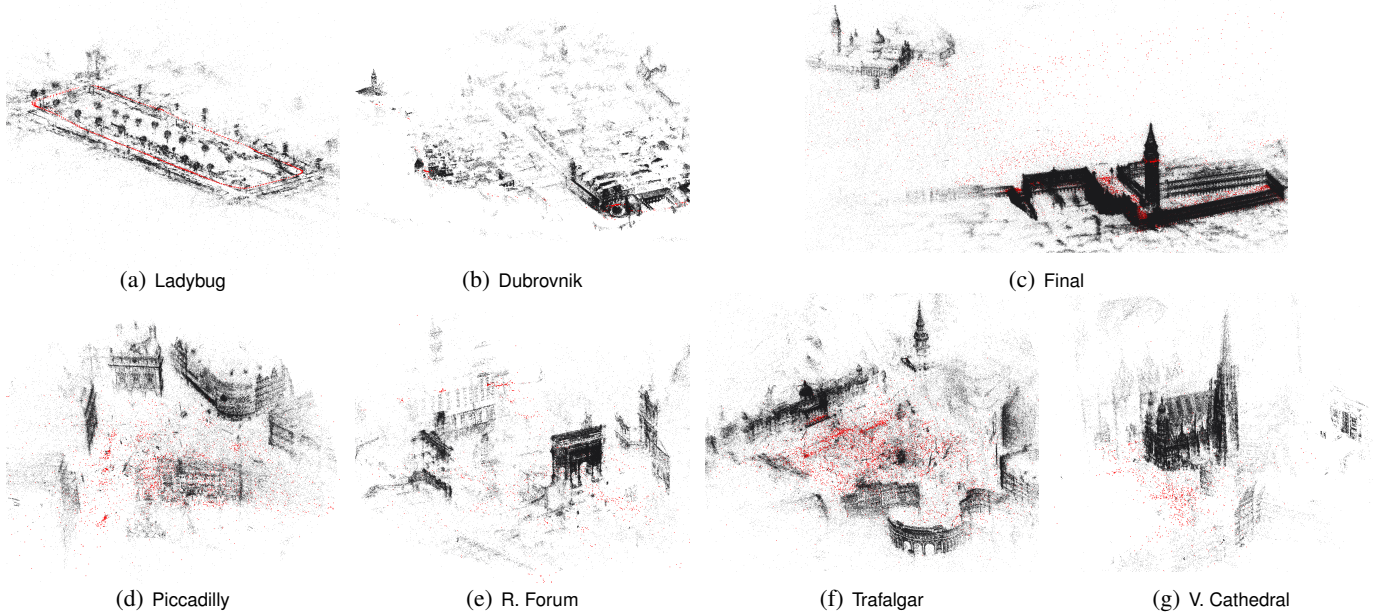}
		\captionof{figure}{3D reconstruction (black point cloud and red camera frames) of our decentralized bundle adjustment method $\amm$, with 8 devices and the Huber loss, on BAL \cite{agarwal2010bundle} (top row) and 1DSfM \cite{wilson2014robust} (bottom row) datasets.}
        \vspace{-0.25em}
		\label{fig::results}
	\end{center}
}]

\begin{abstract}
Scaling to arbitrarily large bundle adjustment problems requires data and compute to be distributed across multiple devices. Centralized methods in prior works are only able to solve small or medium size problems due to overhead in computation and communication. In this paper, we present a fully decentralized method that alleviates computation and communication bottlenecks to solve arbitrarily large bundle adjustment problems. We achieve this by reformulating the reprojection error and deriving a novel surrogate function that decouples optimization variables from different devices. This function makes it possible to use majorization minimization techniques and reduces bundle adjustment to independent optimization subproblems that can be solved in parallel. We further apply Nesterov's acceleration and adaptive restart to improve convergence while maintaining its theoretical guarantees. Despite limited peer-to-peer communication, our method has provable convergence to first-order critical points under mild conditions. On extensive benchmarks with public datasets, our method converges much faster than decentralized baselines with similar memory usage and communication load. Compared to centralized baselines using a single device, our method, while being decentralized, yields more accurate solutions with significant speedups of up to 953.7x over $\ceres$ and 174.6x over $\deeplm$. Code: \url{https://joeaortiz.github.io/daba}.
\end{abstract}

\IEEEpeerreviewmaketitle

\section{Introduction and Related Work}

\vspace{-0.5mm}
With the boom of photos and videos in recent decades, bundle adjustment has become one of the most fundamental and useful techniques in robotics \cite{cadena2016past,rosen4advances,thrun2005probabilistic}, computer vision~\cite{Agarwal:etal:ICCV2009,Jeong:etal:CVPR2010}, autonomous driving \cite{geiger2012we,song2015high}, AR/VR \cite{liu2016robust} and other areas. Bundle adjustment is the nonlinear optimization problem of estimating camera parameters and point positions from a collection of images \cite{triggs1999bundle}.
In the last decade, as image datasets are getting increasingly larger while the computing power of a single device reaches saturation, large-scale bundle adjustment on multiple devices has become more critical than ever. 

Even though efficient  solvers such as $\ceres$~\cite{ceres-solver}, {$\sf\small g2o$}~\cite{kummerle2011g}, {$\sf\small GTSAM$}~\cite{dellaert2012gtsam}, {\sf\small Theseus}~\cite{pineda2022theseus}, {\sf\small SymForce}~\cite{martiros2022symforce}, etc. have successfully solved small- to medium-scale bundle adjustment by exploiting the problem structure, they all operate on a single central device with a global view of the problem. However, these single-device centralized methods \cite{ceres-solver,kummerle2011g,dellaert2012gtsam,pineda2022theseus,martiros2022symforce} are unable to leverage parallelism and fail to scale to large problems due to the time and memory limitations.


Even though numerous multi-device methods have been proposed for bundle adjustment to exploit parallel computing, most of them \cite{ni2007out,zhou2020stochastic,huang2021deeplm,WuCVPR2011multicore,ren2022megba,tian2022distributed}  have to use a central device to maintain consistency. Ni \etal \cite{ni2007out} propose an out-of-core solution that alternates between solving for independent clusters in parallel and overlapping regions. {\sf\small PBA} \cite{WuCVPR2011multicore} implements multiple devices to compute preconditioned conjugate gradient steps. 
{\sf\small STBA} \cite{zhou2020stochastic} stochastically decomposes bundle adjustment using constraint relaxation to approximate Gauss-Newton directions. $\deeplm$ \cite{huang2021deeplm} is an efficient GPU-based Levenberg-Marquardt solver with a novel backward jacobian network. {\sf\small MegBA} \cite{ren2022megba} uses fast distributed preconditioned conjugate gradient method and Schur elimination on multiple GPUs. With a central device to collect information, these multi-device centralized methods, albeit more scalable than single-device ones \cite{ceres-solver,kummerle2011g,dellaert2012gtsam,pineda2022theseus,martiros2022symforce}, are still unsuitable for arbitrarily large bundle adjustment due to the communication bottlenecks of centralization.

Different from single/multi-device centralized methods requiring a central device \cite{ceres-solver,kummerle2011g,dellaert2012gtsam,pineda2022theseus,martiros2022symforce,ni2007out,zhou2020stochastic,huang2021deeplm,WuCVPR2011multicore,ren2022megba}, decentralized methods for bundle adjustment maintain consistency among devices via purely peer-to-peer communication. Decentralized methods can therefore scale to large problems by avoiding communication bottlenecks associated with a central device. Ortiz \etal \cite{ortiz2020bundle} use Gaussian Belief Propagation to solve bundle adjustment problems on a graph processor. Another popular family of methods are based on Douglas-Rachford ($\dr$)  and Alternating Direction Method of Multipliers ($\admm$) \cite{eriksson2016consensus, zhang2017dist, demmel2020distributed}.
Eriksson \etal \cite{eriksson2016consensus} uses point consensus among subproblems with Douglas-Rachford proximal splitting of the cameras.
To reduce the communication overhead, camera consensus and point splitting can be more efficient \cite{zhang2017dist}.
Demmel \etal \cite{demmel2020distributed} use a similar consensus method of parallel block coordinate descent for photometric bundle adjustment.
Although decentralized methods \cite{ortiz2020bundle,eriksson2016consensus, zhang2017dist, demmel2020distributed} are more scalable, they yield less accurate solutions than centralized methods while being slower and requiring careful parameter tuning. Moreover, they either lack provable convergence or make strict assumptions for it. 

We present  Decentralized and Accelerated Bundle Adjustment ($\amm$) to address the compute and communication bottleneck for bundle adjustment of arbitrary scale. Unlike prior work \cite{ortiz2020bundle,eriksson2016consensus, zhang2017dist, demmel2020distributed}, $\amm$ yields more accurate solutions than centralized methods with greater efficiency and less sensitivity to parameter tuning. $\amm$ also provides convergence guarantees to first-order critical points under less strict assumptions. 
By reformulating the reprojection error and deriving a novel  surrogate function, we decouple optimization variables from different devices to reduce bundle adjustment to independent subproblems on a single device. This is in contrast to \cite{eriksson2016consensus, zhang2017dist, demmel2020distributed} that makes local copies of optimization variables to formulate subproblems. We also implement Nesterov's acceleration \cite{nesterov1983method,nesterov2013introductory} and adaptive restart \cite{o2015adaptive} to improve convergence without loss of theoretical guarantees. On extensive benchmarks with public datasets, $\amm$ converges much faster than decentralized baselines with similar memory usage and communication load. Compared to centralized baselines using a single device, $\amm$, while being decentralized, yields more accurate solutions with significant speedups of up to 953.7x over $\ceres$ and 174.6x over $\deeplm$. 

\section{Background}\label{section::problem}

Bundle adjustment is the problem of jointly estimating camera extrinsics/intrinsics and point positions that represent the scene geometry, given a set of images showing several points from different views. Large-scale problems necessitate data to be distributed across multiple devices.
We focus on decentralized bundle adjustment that can work solely via peer-to-peer communication without the need for a central device~\cite{ortiz2020bundle,eriksson2016consensus, zhang2017dist, demmel2020distributed}. Decentralized methods are preferred if communication latency and bandwidth are much more expensive than computation, for instance in large-scale bundle adjustment.

Decentralized bundle adjustment can be formulated as an optimization problem minimizing reprojection errors, i.e., residuals between predicted and observed light reprojections~\cite{triggs1999bundle}. Given $M$ cameras and $N$ points partitioned onto $S$ devices $\calS\triangleq\{1,\,2,\,\cdots,\,S\}$ and the set $\calE$ of reprojection pairs for cameras and points, the optimization problem finds variables $\bfx$ which are camera extrinsics/intrinsics $\{\bfc_i\}_{i=1}^M$ and point positions $\{\bfl_j\}_{j=1}^N$, and 
constructs the objective function from individual penalties $\Fij$ on reprojection errors:
\begingroup
\setlength\abovedisplayskip{5pt}
\begin{equation}\label{eq::Fij}
	\Fij\triangleq \half\rho(\|\bfe_{ij}\|^2)
 \vspace{-0.25em}
\end{equation}
\endgroup
where
$\bfe_{ij}$ is any reprojection error defined on the reprojection pair $(i,\,j)\in \calE$ for camera $i$ and point $j$, and
$\rho(\cdot):\reals^+\rightarrow\reals$ is a  robust loss function for outlier rejection. 
We assume that $\rho(\cdot)$ is differentiable, concave and nondecreasing. This applies to a broad class of robust loss functions like Huber and Welsch \cite{fan2021mm_full}.
Then, $\bfx\triangleq\{\bfc_i\}_{i=1}^M\bigcup\{\bfl_j\}_{j=1}^N$ partitioned onto multiple devices can be found by minimizing the sum of all penalties over the set $\calE$ of reprojection pairs.
\begin{problem}[Decentralized Bundle Adjustment]
\label{problem::ba}
\setlength\abovedisplayskip{5pt}
\begin{equation}\label{eq::Fobj}
	\min_{\bfx} F(\bfx)\triangleq\sum_{(i,\,j)\in\calE} \Fij
 \vspace{-0.75em}
\end{equation}
where the optimization variables $\bfx\triangleq\{\bfc_i\}_{i=1}^M\bigcup\{\bfl_j\}_{j=1}^N$ are partitioned onto multiple devices. 
\end{problem}
\vspace{-0.5em}
\section{A Novel Reprojection Error}\label{section::error}
Even though there are various types of reprojection errors, all of them have cameras and points inseparable, and thus, are difficult to use in decentralized bundle adjustment where optimization variables are on multiple devices. In this section, we present a novel reprojection error to decouple camera extrinsics/intrinsics and point positions that in the next section is leveraged to formulate a surrogate function and create independent optimization subproblems solved in parallel.

In multi-view geometry, the undistorted reprojection ray is equal to the point position in the camera frame up to a scale factor \cite{hartley2003multiple}. Therefore, with the undistortion model in \cite{buj2010new,jose2009pose,kuke2013realtime}, we might assume that there exists a scalar $\lambda_{ij}\in\reals$ for each reprojection pair $(i,\,j)\in\calE$ such that
\begingroup
\setlength\abovedisplayskip{5pt}
\begin{equation}\label{eq::reprojection1}
\underbrace{\begin{bmatrix}
	\frac{1}{f_i}\bfu_{ij} \\
	1 + k_{i,1}\|\bfu_{ij}\|^2 + k_{i,2}\|\bfu_{ij}\|^4
\end{bmatrix}}_{\text{undistorted reprojection ray}} - \lambda_{ij}\cdot \underbrace{\bfR_i^\top\left(\bfl_j-\bft_i\right)}_{\substack{\text{point position in the} \\\text{camera frame}}}=\zero
\vspace{-0.1em}
\end{equation}
\endgroup
where $\bfu_{ij}\in\Real{2}$ is the observed distorted reprojection point on the camera plane, $f_i$ is the focal length, $k_{i,1}$, $k_{i,2}$ are the radial undistortion coefficients, $(\bfR_i,\,\bft_i)\in \SEthree$ is the camera extrinsics with $\bfR_i\in\SOthree$ and $\bft_i\in\Real{3}$, and $\bfl_j\in\Real{3}$ is the point position.  If we represent the camera intrinsics as  $d_{i,1}=f_i$, $d_{i,2}=f_i\cdot k_{i,1}$, $d_{i,3}=f_i\cdot k_{i,2}$,  the undistorted reprojection ray on the left-hand side of \cref{eq::reprojection1}  can be reformulated as
\begin{equation}\label{eq::ray}
	\rayij = \begin{bmatrix}
		\pntij \\ d_{i,1} + d_{i,2}\|\pntij\|^2 + d_{i,3}\|\pntij\|^4
	\end{bmatrix}\in\Real{3}.
\end{equation}
With \cref{eq::ray} and a slight abuse of notation for $\lambda_{ij}$, \cref{eq::reprojection1} is
\begingroup
\setlength\abovedisplayskip{4pt}
\begin{equation}\label{eq::err0}
\bfp_{ij} - \lambda_{ij}\cdot\bfR_i^\top(\bfl_j - \bft_i)=\zero.
	\vspace{-0.15em}
\end{equation}
\endgroup
In \cref{eq::err0}, it is possible to find $\lambda_{ij}\in\reals$ by solving
\begingroup
\setlength\abovedisplayskip{4pt}
\begin{equation}\label{eq::error_opt}
	\lambda_{ij} \leftarrow \arg\min_{\lambda_{ij}\in\reals}\|\bfp_{ij} -\lambda_{ij}\cdot\bfR_i^\top(\bfl_j - \bft_i)\|^2.
	\vspace{-0.35em}
\end{equation}
\endgroup
Since $\bfR\in\SOthree$ and $\bfR_i\bfR_i^\top=\bfI$ where $\bfI\in\Real{3\times3}$ is the identity,  we have  $\|\bfR_i^\top(\bfl_j - \bft_i)\|=\|\bfl_j-\bft_i\|$. If we assume $\|\bfR_i^\top(\bfl_j - \bft_i)\|=\|\bfl_j-\bft_i\|\neq 0$ that is common in bundle adjustment \cite{triggs1999bundle}, then \cref{eq::error_opt} has a unique solution at
\begingroup
\setlength\abovedisplayskip{4pt}
\begin{equation}\label{eq::lambdaij}
\lambda_{ij}  =\frac{(\bfl_j-\bft_i)^\top\bfR_i\bfp_{ij}}{\|\bfl_j-\bft_i\|^2}.
\end{equation}  
Substituting \cref{eq::lambdaij} into the left-hand side of \cref{eq::err0} yields the following reprojection error:
\begin{equation}\label{eq::error}
	\bfe_{ij}=\left(\bfI-\frac{\bfR_i^\top(\bfl_j-\bft_i)(\bfl_j-\bft_i)^\top\bfR_i}{\|\bfl_j-\bft_i\|^2}\right)\bfp_{ij}\in\Real{3}.
\end{equation}
\endgroup
where $\|\bfe_{ij}\|=0$ if and only if \cref{eq::err0} holds. Furthermore, $\bfe_{ij}$ geometrically results from projecting $\bfp_{ij}$ onto the normal plane of $\bfR_i^\top(\bfl_j-\bft_i)$; see \cref{fig::reprojection}. 

Recall that the undistorted reprojection ray $\bfp_{ij}$ in  \cref{eq::ray} depends on the camera intrinsics $\bfd_i\triangleq\begin{bmatrix}
d_{i,1} & d_{i,2} & d_{i,3}
\end{bmatrix}^\top\in\Real{3}$. Then, we conclude from \cref{eq::error} that $\bfe_{ij}$ is a function of camera extrinsics/intrinsics $\bfc_i\triangleq(\bfR_i,\,\bft_i,\,\bfd_i)\in \SEthree\times\Real{3}$ and point positions $\bfl_j\in\Real{3}$, with which \cref{problem::ba} is well formulated for optimization variables $\bfx=\{\bfc_i\}_{i=1}^M\bigcup\{\bfl_j\}_{j=1}^N$.

The reprojection error above is used in the next section to decouple optimization variables from different devices such that decentralized  bundle adjustment is reduced to independent optimization subproblems on a single device.

\section{Decouple Variables, Reduce to Subproblems}\label{section::mm}

Since objective function $F(\bfx)$ in \cref{problem::ba} has optimization variables $\bfx$ on multiple devices, it is difficult to minimize with restricted communication. We address this with a surrogate function $E(\bfx|\bfxk)$ that is an upper bound of $F(\bfx)$ but equal to it at the current iterate $\bfxk$:
\begingroup
\begin{equation}\label{eq::FEEF}
\setlength\abovedisplayskip{2.5pt}
F(\bfx)\leq E(\bfx|\bfxk)\,\text{ and }\, E(\bfxk|\bfxk)=F(\bfxk).
\end{equation}
\endgroup
If $E(\bfxkp|\bfxk)\leq E(\bfxk|\bfxk)$, then \cref{eq::FEEF} suggests
\begin{equation}
\nonumber
F(\bfxkp)\leq E(\bfxkp|\bfxk)\leq E(\bfxk|\bfxk) = F(\bfxk)
\end{equation}
where such a $\bfxkp$ can be found by solving
\begingroup
\setlength\abovedisplayskip{5pt}
\begin{equation}\label{eq::min_surrogate}
\bfxkp\leftarrow\min_{\bfx} E(\bfx|\bfxk).
\vspace{-0.25em}
\end{equation}
\endgroup
Therefore, even though $F(\bfx)$ is not optimized, \cref{eq::min_surrogate} still yields $F(\bfxkp)\leq F(\bfxk)$. Furthermore, if the surrogate function $E(\bfx|\bfxk)$ is in the form of
\begingroup
\setlength\abovedisplayskip{4pt}
\begin{equation}\label{eq::Esum}
 E(\bfx|\bfxk)\triangleq \sum_{\alpha\in\calS} \Ealphak
 \vspace{-0.5em}
\end{equation}
where $\Ealphak$ is a function of $\bfxa$, i.e., the camera extrinsics/intrinsics and point positions on a single device $\alpha$, then \cref{eq::min_surrogate} is equivalent to multiple independent optimization subproblems on a single device:
\begin{equation}\label{eq::update_mm}
\bfxakp\leftarrow\arg\min_{\bfxa} \Ealphak \text{ for device $\alpha\in\calS$}
\vspace{-0.5em}
\end{equation}
that  can be solved in parallel without inter-device communication once $\Ealphak$ is constructed. 
\endgroup

\begin{figure}[t]
	\centering
	\vspace{-1.5em}
	\includegraphics[width=0.275\textwidth]{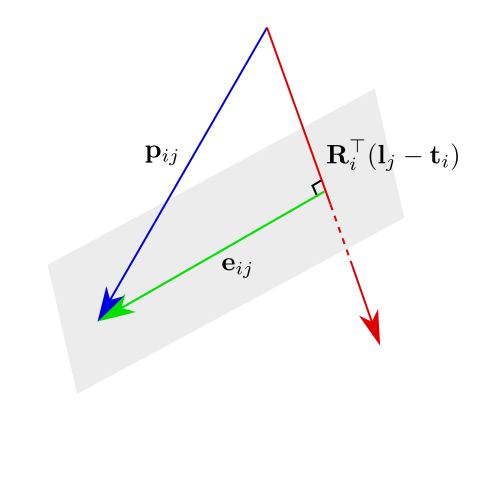}
	\vspace{-3em}
	\caption{Reprojection error $\bfe_{ij}$ in \cref{eq::error} geometrically results from projecting $\bfp_{ij}$ onto the normal plane of $\bfR_i^\top(\bfl_j-\bft_i)$.}\label{fig::reprojection}
	\vspace{-1.5em}
\end{figure}

The technique above of minimizing an upper bound of the objective function is referred as majorization minimization \cite{hunter2004tutorial,sun2016majorization}.  As its name suggests, majorization minimization has two steps: 1) constructing a surrogate function satisfying \cref{eq::FEEF} as ``majorization'' and 2) optimizing the surrogate function in \cref{eq::min_surrogate} as ``minimization''. Compared to belief propagation and consensus techniques, majorization minimization is usually faster for empirical implementation and easier for theoretical analysis as long as a proper surrogate function has been found. However, such a proper surrogate function is usually nontrivial. This is even more challenging for decentralized bundle adjustment where the surrogate function is required to satisfy not only \cref{eq::FEEF} but also \cref{eq::Esum}. This means that  $E(\bfx|\bfxk)$ must majorize objective function $F(\bfx)$ while decoupling optimization variables $\{\bfxa\}_{\alpha\in\calS}$ that are camera extrinsics/intrinsics and point positions on different devices. Fortunately, such a surrogate function $E(\bfx|\bfxk)$ exists and its derivation is the main contribution of this section.

Recall from \cref{eq::Fobj} that the objective function $F(\bfx)$ is the sum of $\Fij$. If we can majorize each individual $\Fij$ while decoupling $\bfc_i$ and $\bfl_j$, a surrogate function $E(\bfx|\bfxk)$ satisfying \cref{eq::FEEF,eq::Esum} is yielded. This is in fact possible with the reprojection error in \cref{eq::error} as long as $\|\bfl_j-\bft_i\|\neq0$, which results in the following proposition.

\begin{proposition}\label{proposition::majorize}
Suppose $\iterate{(\cdot)}{}{k}$ is the $\sfk$-th iterate of $(\cdot)$. Let
\vspace{-0.25em}
\begin{equation}\label{eq::P}
\!\!\!\Pijk \triangleq  \wijk\cdot  \left\|\bfR_i\bfp_{ij} + \lambdaijk\cdot\bft_i - \iterate{\bfg}{ij}{k}\right\|^2 + \half\aijk ,\!
\end{equation}
\begin{equation}\label{eq::Q}
\Qijk \triangleq \wijk\cdot \left\|\lambdaijk\cdot\bfl_j - \bfgijk\right\|^2 + \half\aijk
\end{equation}
where
\begingroup
\setlength\abovedisplayskip{7pt}
\begin{equation}\label{eq::a}
	\aijk \triangleq \half\rho(\|\bfeijk\|^2) - \half\nabla\rho(\|\bfeijk\|^2)\cdot \|\bfeijk\|^2,
\end{equation}
\begin{equation}\label{eq::w}
\wijk \triangleq \nabla\rho(\|\bfeijk\|^2),
\end{equation}
\begin{equation}\label{eq::gamma}
\lambdaijk \triangleq \frac{\big(\bfljk - \bftik\big)^\top\bfRik \bfpijk}{\big\|\bfljk - \bftik\big\|^2},
\end{equation}
\begin{equation}\label{eq::g}
\bfgijk \triangleq \half\bfRik\bfpijk + \half\lambdaijk\cdot\bftik + \half\lambdaijk\cdot\bfljk.
\end{equation}
\endgroup
Then,  
\begingroup
\setlength\abovedisplayskip{2pt}
\begin{equation}\label{eq::majorize}
	\Fij\leq \Pijk + \Qijk
 \vspace{-.25em}
\end{equation}
\endgroup
and the equality ``$=$'' holds if $\bfc_i=\bfcik$ and $\bfl_j=\bfljk$.
\end{proposition}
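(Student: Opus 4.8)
The plan is to majorize the single term $\Fij$ through a chain of three inequalities, each tight at the current iterate $\bfc_i=\bfcik$, $\bfl_j=\bfljk$, and then to group the resulting bound into a $\bfc_i$-part and an $\bfl_j$-part that match \cref{eq::P,eq::Q}. I would work entirely at the level of one reprojection pair, since summing over $\calE$ afterwards is immediate.

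First I would exploit that $\rho$ is concave and differentiable, so its graph lies below every tangent line: $\rho(y)\le \rho(\|\bfeijk\|^2)+\nabla\rho(\|\bfeijk\|^2)\,(y-\|\bfeijk\|^2)$ for all $y\ge 0$. Setting $y=\|\bfe_{ij}\|^2$ and multiplying by $\half$ gives $\Fij\le \half\wijk\|\bfe_{ij}\|^2+\aijk$ with $\wijk$ and $\aijk$ exactly as in \cref{eq::w,eq::a}; this is tight when $\|\bfe_{ij}\|^2=\|\bfeijk\|^2$, in particular at $\bfc_i=\bfcik$, $\bfl_j=\bfljk$. Second, I would ``freeze'' the scale factor. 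Because $\rho$ is nondecreasing, $\wijk=\nabla\rho(\|\bfeijk\|^2)\ge 0$; and because $\|\bfe_{ij}\|^2$ is by construction the minimum over $\lambda_{ij}$ of $\|\bfp_{ij}-\lambda_{ij}\bfR_i^\top(\bfl_j-\bft_i)\|^2$ (see \cref{eq::error_opt,eq::error}), substituting the fixed value $\lambda_{ij}=\lambdaijk$ can only increase it. Hence $\half\wijk\|\bfe_{ij}\|^2\le\half\wijk\|\bfp_{ij}-\lambdaijk\bfR_i^\top(\bfl_j-\bft_i)\|^2$, and rotating the residual by $\bfR_i\in\SOthree$ (which preserves its norm) turns the right-hand side into $\half\wijk\|\bfR_i\bfp_{ij}+\lambdaijk\bft_i-\lambdaijk\bfl_j\|^2$. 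This freezing step is tight at $\bfc_i=\bfcik$, $\bfl_j=\bfljk$ because there the optimal scale of \cref{eq::lambdaij} coincides with $\lambdaijk$ in \cref{eq::gamma}.

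Third comes the decoupling, which I expect to be the crux. The term $\|\bfR_i\bfp_{ij}+\lambdaijk\bft_i-\lambdaijk\bfl_j\|^2$ still couples $\bfc_i$ and $\bfl_j$ through a cross term. Writing $\vu\triangleq\bfR_i\bfp_{ij}+\lambdaijk\bft_i$ and $\vv\triangleq\lambdaijk\bfl_j$, I would apply the elementary bound $\|\vu-\vv\|^2=\|(\vu-\bfgijk)-(\vv-\bfgijk)\|^2\le 2\|\vu-\bfgijk\|^2+2\|\vv-\bfgijk\|^2$, valid for any anchor $\bfgijk$. Crucially, taking $\bfgijk$ to be the midpoint of $\vu$ and $\vv$ at the current iterate — which is precisely \cref{eq::g}, $\bfgijk=\half\bfRik\bfpijk+\half\lambdaijk\bftik+\half\lambdaijk\bfljk$ — makes this bound tight at $\bfc_i=\bfcik$, $\bfl_j=\bfljk$, since there $\vu-\bfgijk$ and $\vv-\bfgijk$ are equal and opposite. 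Multiplying by $\half\wijk$ and splitting the constant $\aijk$ evenly between the two pieces then recovers exactly $\Pijk+\Qijk$. As all three inequalities are simultaneously tight at $\bfc_i=\bfcik$, $\bfl_j=\bfljk$, the equality in \cref{eq::majorize} follows.

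The main obstacle is this decoupling step: the natural quadratic residual mixes $\bfc_i$ and $\bfl_j$, and a generic splitting would fail to be tight at the current iterate, which would break the descent guarantee underlying majorization minimization. The insight that makes it work is the ordering — first linearize the concave outer loss and freeze $\lambda_{ij}$ so that only a single quadratic in $(\bfc_i,\bfl_j)$ remains, then split that quadratic about its current midpoint $\bfgijk$. The factor of $2$ lost in the crude bound $\|\vu-\vv\|^2\le 2\|\vu-\bfgijk\|^2+2\|\vv-\bfgijk\|^2$ is exactly compensated by $\bfgijk$ lying halfway between $\vu$ and $\vv$ at the current iterate, which is what restores tightness and guarantees $E(\bfxk|\bfxk)=F(\bfxk)$.
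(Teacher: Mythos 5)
Your proposal is correct and follows essentially the same route as the paper's proof: the tangent-line majorization of the concave loss $\rho$, the freezing of the optimal scale at $\lambdaijk$ (valid since $\wijk\geq 0$ and $\|\bfe_{ij}\|^2$ is the minimum over $\lambda_{ij}$), and the midpoint splitting $\|\vu-\vv\|^2\leq 2\|\vu-\bfgijk\|^2+2\|\vv-\bfgijk\|^2$ with $\bfgijk$ chosen as the current-iterate midpoint to restore tightness. The only difference is cosmetic ordering — you linearize $\rho$ first and then bound the residual, whereas the paper bounds $\|\bfe_{ij}\|^2$ first and applies concavity last — which changes nothing.
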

\begin{proof}
Please refer to \select{\cite[App. C.1]{fan2023daba}}{App. \hyperref[proof::majorize]{C.1}}.
\end{proof}

With \cref{proposition::majorize}, we can majorize the objective function $F(\bfx)$ in \cref{eq::Fobj}. Suppose  $\calE'$ and $\calE''$ are the sets of reprojection pairs $(i,j)\in\calE$ with cameras and points from the same/different devices. With $\calE'$ and $\calE''$, we might split these penalty functions $\Fij$ in \cref{eq::Fobj} into two parts:
\begingroup
\setlength\abovedisplayskip{5pt}
\begin{equation}
\nonumber
F(\bfx) = \sum_{(i,\,j)\in\calE'} \Fij + \sum_{(i,\,j)\in\calE''} \Fij.
\vspace{-0.5em}
\end{equation}
\endgroup
If we apply \cref{eq::majorize} on $\Fij$ where $(i,j)\in\calE''$ are inter-device reprojection pairs, the equation above results in
\begingroup
\setlength\abovedisplayskip{2pt}
\begin{multline}\label{eq::upper_bnd}
F(\bfx) \leq  
\sum_{(i,\,j)\in\calE'}\Fij +\\ \sum_{(i,\,j)\in\calE''} \Big(\Pijk + \Qijk\Big)
\vspace{-0.5em}
\end{multline}
\endgroup
where the right-hand side has  camera extrinsics/intrinsics $\bfc_i$ and point positions $\bfl_j$ from different devices decoupled as $\Pijk$ and $\Qijk$. On the right-hand side of \cref{eq::upper_bnd}, collecting  $\Fij$, $\Pijk$, $\Qijk$ where $\bfc_i$ and $\bfl_j$ are on the same device $\alpha$  yields a function $\Ealphak$ where $\bfxa$ is the camera extrinsics/intrinsics and point positions on a single device $\alpha$. Then, summing $\Ealphak$ over all the devices $\alpha\in\calS$ and applying the right-hand side of \cref{eq::upper_bnd}  leads to a surrogate function  $E(\bfx|\bfxk)$ in the form of:
\begingroup
\setlength\abovedisplayskip{2pt}
\begin{multline}\label{eq::Ealpha}
\!\!\!\!\!\!E(\bfx|\bfxk)\triangleq\sum_{\alpha\in\calS} \Ealphak =\half\xi\sum_{\alpha\in\calS}\big\|\bfxa-\bfxak\big\|^2  +\\
\hspace{-0.75em}\sum_{(i,\,j)\in\calE'}\!\!\!\! \Fij   \!+\! \sum_{(i,\,j)\in\calE''} \!\!\Big(\!\Pijk \!+\! \Qijk \Big)\!\!\! 
\end{multline}
\endgroup 
where $\xi>0$ related with $\|\bfxa-\bfxak\|^2$ is any positive number close to zero and introduced for the purpose of  convergence analysis. Furthermore, \cref{eq::upper_bnd,eq::Ealpha} suggest the following proposition about the resulting $E(\bfx|\bfxk)$.

\begin{proposition}\label{proposition::surrogate}
$E(\bfx|\bfxk)= \sum_{\alpha\in\calS} \Ealphak$ and $F(\bfx)\leq E(\bfx|\bfxk)$ where the equality ``$=$'' holds if $\bfx=\bfxk$, \ie, $\bfc_i=\bfcik$ and  $\bfl_j=\bfljk$ for all the cameras and points.
\end{proposition}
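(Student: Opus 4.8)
The plan is to assemble the claim directly from \cref{proposition::majorize} together with the construction that led to \cref{eq::Ealpha}, verifying three things in turn: that $E(\bfx|\bfxk)$ genuinely separates into a per-device sum $\sum_{\alpha\in\calS}\Ealphak$, that it majorizes $F(\bfx)$, and that the two coincide at the current iterate $\bfxk$. No new estimates are needed here; essentially all the analytical work has already been done in \cref{proposition::majorize}, so the remaining task is careful bookkeeping of which variables each term depends on.

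For the upper bound, I would start from the exact splitting $F(\bfx) = \sum_{(i,j)\in\calE'}\Fij + \sum_{(i,j)\in\calE''}\Fij$ and apply the inequality $\Fij \leq \Pijk + \Qijk$ from \cref{proposition::majorize} to every inter-device pair $(i,j)\in\calE''$, which yields exactly \cref{eq::upper_bnd}. Since $\xi>0$, the extra regularizer $\half\xi\sum_{\alpha\in\calS}\|\bfxa-\bfxak\|^2$ appearing in \cref{eq::Ealpha} is a sum of scaled squared norms and hence nonnegative, so adjoining it only loosens the bound and preserves $F(\bfx)\leq E(\bfx|\bfxk)$.

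For the decomposition, the key observation is that every summand in \cref{eq::Ealpha} is local to a single device. By inspection of \cref{eq::P} and \cref{eq::Q}, the iterate-dependent quantities $\wijk$, $\aijk$, $\lambdaijk$, $\bfgijk$ are all constants, so the variable dependence of $\Pijk$ is confined to the camera extrinsics/intrinsics $\bfc_i=(\bfR_i,\bft_i,\bfd_i)$ (the intrinsics entering through $\bfp_{ij}$), while that of $\Qijk$ is confined to the point position $\bfl_j$. Consequently, for each inter-device pair $\Pijk$ may be assigned to the device holding camera $i$ and $\Qijk$ to the device holding point $j$; each same-device penalty $\Fij$ with $(i,j)\in\calE'$ already involves variables of one device only; and the term $\half\xi\|\bfxa-\bfxak\|^2$ is local to device $\alpha$ by construction. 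Collecting all terms that depend on the variables of a single device $\alpha$ therefore defines $\Ealphak$ as a function of $\bfxa$ alone, giving $E(\bfx|\bfxk)=\sum_{\alpha\in\calS}\Ealphak$.

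Finally, equality at $\bfx=\bfxk$ follows by evaluating each group of terms at the current iterate: the regularizer vanishes since $\bfxa=\bfxak$ for every $\alpha$, each same-device term $\Fij$ is unchanged, and for each inter-device pair the equality clause of \cref{proposition::majorize} gives $\Pijk+\Qijk=\iterate{F}{ij}{k}$ precisely when $\bfc_i=\bfcik$ and $\bfl_j=\bfljk$. Summing over $\calE'$ and $\calE''$ then recovers $\sum_{(i,j)\in\calE}\iterate{F}{ij}{k}=F(\bfxk)$. The only step requiring genuine care is the variable-dependence bookkeeping in the decomposition; once the separability of $\Pijk$ and $\Qijk$ into single-device functions is confirmed from their definitions, the upper-bound and equality claims are routine reassembly.
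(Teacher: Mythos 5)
Your proposal is correct and follows exactly the route the paper takes: its proof of \cref{proposition::surrogate} is simply the observation that the claim is immediate from \cref{proposition::majorize} together with the construction in \cref{eq::upper_bnd,eq::Ealpha}, and your write-up is just a careful expansion of that same assembly (majorize the inter-device terms, note the $\xi$-regularizer is nonnegative and vanishes at $\bfx=\bfxk$, and check the per-device separability of $\Pijk$ and $\Qijk$). No gaps.
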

\begin{proof}
Please refer to \select{\cite[App C.2]{fan2023daba}}{App. \hyperref[proof::surrogate]{C.2}}.
\end{proof}

\cref{proposition::surrogate} indicates that $E(\bfx|\bfxk)$ in \cref{eq::Ealpha} satisfies \cref{eq::FEEF,eq::Esum}. Then, with majorization minimization, \cref{problem::ba} is reduced to independent optimization subproblems on a single device in \cref{eq::update_mm} that result in $F(\bfxkp)\leq F(\bfxk)$. Moreover, the update rule of \cref{eq::update_mm} only has  peer-to-peer communication between neighboring devices sharing inter-device reprojection pairs when constructing $\Ealphak$ with $\Pijk$, $\Qijk$; see \cref{eq::P,eq::Q,eq::Ealpha}. Besides, camera extrinsics/intrinsics $\bfc_i$ and point positions $\bfl_j$ are exchanged with other devices if and only if having inter-device reprojection pairs. Thus,  \cref{eq::update_mm} using majorization minimization enables decentralized bundle adjustment.

\setlength{\textfloatsep}{7pt}

\section{Speedup and Guarantee Convergence}\label{section::acceleration}

The majorization minimization technique in the previous section always decreases the objective function but has slow convergence compared to centralized methods due to its underlying first-order method.
This is in fact a well-known issue for first-order methods, for which Nesterov's acceleration has been applied to get significant speedup \cite{nesterov1983method,nesterov2013introductory}. Nesterov's accelerated methods  have shown convergence guarantees only for convex optimization, but if adaptive restart \cite{o2015adaptive} is used the guarantees can be extended to nonconvex optimization \cite{li2015accelerated,fan2019proximal,fan2021mm_full,fan2020mm}. Thus, we are inspired to implement Nesterov's acceleration for empirical speedup and adaptive restart for theoretical guarantees. This strategy seems straightforward but the execution is not trivial since the adaptive restart requires a central device to evaluate the objective function, which is impossible for decentralized methods. In this section, we present Nesterov's acceleration and a novel adaptive restart scheme for decentralized bundle adjustment to speedup and guarantee the convergence.

Nesterov's acceleration  \cite{nesterov1983method} extrapolates the iterate $\bfxak$ with the momentum $\bfxak-\bfxakm$, which yields the extrapolated intermediate $\bflxak$:
\begingroup
\setlength\abovedisplayskip{5pt}
\begin{equation}
\label{eq::nesterov_x0}
\bflxak ={\sf Proj}\left(\bfxak + \gammaak\big(\bfxak-\bfxakm\big)\right)
\vspace{-0.25em}
\end{equation}
\endgroup
where ${\sf Proj}(\cdot)$ is an operator projecting optimization variables to corresponding manifolds, and $\gammaak\in\reals$ is the extrapolation ratio of the momentum $\bfxak-\bfxakm$ and updated with
\begingroup
\setlength\abovedisplayskip{5pt}
\begin{equation}\label{eq::nesterov_scalar}
	\sak = \begin{cases}
		1, \!&\!\! \sfk=0,\\
		\frac{\sqrt{4{\sakm}^2+1}+1}{2},\! &\!\! \sfk>0
	\end{cases}\;\;\text{and}\;\;\gammaak = \frac{\sak-1}{\sakp}.
\vspace{-0.15em}
\end{equation}
\endgroup
In terms of bundle adjustment, $\bflxak$ has camera extrinsics/intrinsics and point positions extrapolated with
\begin{subequations}\label{eq::nesterov_x}
\setlength\abovedisplayskip{4pt}
\begin{equation}\label{eq::nesterov_R}
	\bflRik = \SVDOPlus\left(\bfRik + \gammaak\big(\bfRik-\bfRikm\big)\right),
 \vspace{-0.1em}
\end{equation}
\begin{equation}\label{eq::nesterov_t}
	\bfltik = \bftik + \gammaak\big(\bftik-\bftikm\big),
 \vspace{-0.025em}
\end{equation}
\begin{equation}\label{eq::nesterov_d}
	\bfldik = \bfdik + \gammaak\big(\bfdik-\bfdikm\big),
 \vspace{-0.025em}
\end{equation}
\begin{equation}\label{eq::nesterov_l}
	\bflljk = \bfljk + \gammaak\big(\bfljk-\bfljkm\big)
 \vspace{-0.025em}
\end{equation}
\end{subequations}
where $\SVDOPlus(\cdot):\Real{3\times 3}\rightarrow \SO3$ is an operator projecting $3\times 3$ matrices to $\SO3$:
\begingroup
\setlength\abovedisplayskip{5pt}
    \begin{equation}\label{eq::proj_rot3d}
	\SVDOPlus(\bfM) \triangleq \arg\min_{\bfR\in\SO3}\|\bfR-\bfM\|^2
	\vspace{-0.35em}
\end{equation}
\endgroup
and has a closed-form solution \cite{umeyama1991least,levinson2020analysis}.  
With  $\bflRik$, $\bfltik$, $\bfldik$, $\bflljk$, we obtain $P_{ij}(\bfc_i|\bflxk)$ and $Q_{ij}(\bfl_j|\bflxk)$ in \cref{eq::P,eq::Q}. This further yields $\lEalphak$ in \cref{eq::Ealpha} as well as the accelerated update rule:
\begin{equation}\label{eq::update_amm}
	\bfxakp\leftarrow\arg\min_{\bfxa} \lEalphak \text{ for device $\alpha\in\calS$}.
\end{equation}
Note that $\Ealphak$ and $\lEalphak$ in \cref{eq::update_amm,eq::update_mm} only differ on whether $\bfxk$ or $\bflxk$ is conditioned on. Moreover, \cref{eq::nesterov_x,eq::nesterov_x0} indicate that the extrapolated intermediate terms $\overline{(\cdot)}^{(\sfk)}$ requires no inter-device communication, and thus, Nesterov's acceleration remains decentralized.

Nesterov's acceleration has no provable convergence due to the nonconvexity of bundle adjustment. Fortunately, such an issue can be resolved with adaptive restart \cite{li2015accelerated,fan2021mm_full,fan2019proximal,fan2020mm}. If the objective value $F(\bfxk)$ is not improved by $\bfxak$ from \cref{eq::update_amm}, adaptive restart updates $\bfxak$ again with \cref{eq::update_mm}. Recall that \cref{eq::update_mm} always decreases $F(\bfxk)$, with which the convergence is guaranteed.
However, it is impossible to evaluate $F(\bfxk)$ without a central device. Thus, other metrics than $F(\bfxk)$ are needed to trigger adaptive restart for decentralized bundle adjustment. Inspired by \cite{fan2021mm_full}, we present a novel adaptive restart scheme by introducing several local per device metrics. This adaptive restart scheme is decentralized but still guarantees the convergence. Due to space limitation, only the main procedure of the adaptive restart scheme is presented while the full analysis is in 
Apps. \hyperref[proof::amm]{C.3} and \hyperref[section::app::lemma::adaptive]{D.1}.   

For notational simplicity, let  $\Delta\Ealpha\big(\bfx|\bfxk\big)$ be defined by:
\begingroup
\setlength\abovedisplayskip{2pt}
\begin{multline}\label{eq::DEalpha}
\Delta \Ealpha\big(\bfx|\bfxk\big) \triangleq  -\half\xi\|\bfxa-\bfxak\|^2+\\ 
\half\!\sum_{(i,\,j)\in\calE''_\alpha}\!\!\!\Big(\!\Fij\!-\!\Pijk \!-\! \Qijk\!\Big)\!\!\!
\vspace{-0.35em}
\end{multline}
\endgroup
where $\calEalpha''$ is the set of inter-device reprojection pairs that has either the camera or the point from device $\alpha$ but not both. Note that $\Fij-\Pijk -\Qijk$ in \cref{eq::DEalpha} is the surrogate gap for majorization minimization.
With  $\Delta\Ealpha\big(\bfx|\bfxk\big)$, we introduce adaptive restart metrics $\Fak$, $\lFak$, $\Eakp$  recursively updated on each device $\alpha$:
\begin{enumerate}[leftmargin=0.475cm]
\item At initialization $\sfk=-1$, we set $\bfx^{\alpha\sf(-1)} = \bfx^{\alpha\sf(0)}$ and 
\begin{equation}\label{eq::Fainit}
	\begin{aligned}
	&\quad\; F^{\alpha(-1)} = \Ealpha\big(\bfx^{\alpha\sf(-1)}| \bfx^{\sf(-1)}\big),\\
	&\overline{F}^{\alpha(-1)} = F^{\alpha(-1)},\;\; E^{\alpha(0)}\!=\!F^{\alpha(-1)}.
	\end{aligned}
\end{equation}
\item For $\sfk\geq 0$, $\Fak$, $\lFak$, $\Eakp$ are updated with
\begin{equation}\label{eq::Fak}
\Fak = \Eak + \Delta \Ealpha\big(\bfxk|\bfxkm\big),
\end{equation}
\begin{equation}\label{eq::lFak}
\lFak = (1-\eta)\cdot\lFakm + \eta\cdot\Fak,
\end{equation}
\begin{equation}\label{eq::Eak}
\!\!\!\!\Eakp \!=\! \Ealpha\!\big(\bfxakp|\bfxk\big) + \Fak\! - \Ealpha\!\big(\bfxak|\bfxk\big).
\end{equation}
Note that $\eta\in(0,\, 1]$ in \cref{eq::lFak}.
\end{enumerate}
With  local metrics $\Fak$, $\lFak$, $\Eakp$ in \cref{eq::Fak,eq::lFak,eq::Eak,eq::Fainit},  the adaptive restart scheme on each device $\alpha$ independently executes the following procedure:
\begin{enumerate}[leftmargin=0.475cm]
\item Solve \cref{eq::update_amm} to update $\bfxakp$;
\item If $\Eakp > \lFak$, update $\bfxakp$ again with \cref{eq::update_mm}.
\end{enumerate}
In spite of no central device to evaluate the objective value $F(\bfxk)$, such a local adaptive restart scheme actually suffices for provable convergence to first-order critical points; see
\select{\cite[App. C.3]{fan2023daba}}{App. \hyperref[proof::amm]{C.3} for complete analysis}.
Moreover, \cref{eq::Fainit,eq::Eak,eq::Fak,eq::lFak,eq::DEalpha} suggest that  $\Fak$, $\lFak$, $\Eakp$ can be updated if each device $\alpha$ can communicate with its neighbors. Thus, the adaptive restart scheme is well-suited  for decentralized bundle adjustment.

By applying Nesterov's acceleration and adaptive restart while maintaining decentralization, we have achieved empirical speedup without losing theoretical guarantees. The resulting improvements are evaluated in \cref{section::experiment::efficiency}. 

\section{$\amm$: Putting it all Together}

We have presented a novel reprojection error in \cref{section::error}, decoupled optimization variables to reduce decentralized bundle adjustment to independent optimization subproblems in \cref{section::mm}, and improved the convergence with Nesterov's acceleration and adaptive restart in \cref{section::acceleration}. All of these result in  $\amm$, i.e., Decentralized and Accelerated Bundle Adjustment; see \cref{algorithm::amm}. As previously discussed, $\amm$ is fully decentralized and requires limited peer-to-peer communication. Furthermore, $\amm$ is guaranteed to converge to first-order critical points under mild conditions as the following proposition states.

\begin{algorithm}[t]
	\caption{The $\amm$ Method}
	\label{algorithm::amm}
	\begin{algorithmic}[1]
		\State\textbf{Input}: An initial iterate $\bfx^{(0)}\in $, $\xi > 0$, $0<\eta\leq 1$.
		\State\textbf{Output}: A sequence of iterates $\{\bfxk\}$.\vspace{0.1em} 
		\For{each device $\alpha\in\calS$}
		\State $\bfx^{\alpha(-1)}\leftarrow \bfx^{\alpha(0)}$ and $s^{\alpha(0)}\leftarrow 1$
		\State initialize $F^{\alpha(-1)}$, $\overline{F}^{\alpha(-1)}$, $E^{\alpha(0)}$  with \cref{eq::Fainit} 
		\EndFor
		\For{$\sfk\gets 0,\,1,\,2,\,\cdots$}
		\For{ each device $\alpha\in\calS$}
        \State{\color{gray}//  Nesterov's acceleration}\vspace{0.15em}
		\State  $\sakp\leftarrow\frac{\sqrt{4{\sak}^2+1}+1}{2}$ and $\gammaak \leftarrow \frac{\sak-1}{\sakp}$
		\State update $\bflxak$ with \cref{eq::nesterov_x} on device $\alpha$\vspace{0.1em}
        \State{\color{gray} // Inter-device communication}
		\State retrieve $\bfxbk,\,\bflxbk$ from neighboring devices $\beta$
        \State{\color{gray} // Majorization}
		\State construct $\Ealphak$, $\lEalphak\!$ with \cref{eq::Ealpha}
        \State{\color{gray} // Minimization}
		\State solve $\bfxakp\leftarrow\arg\min_{\bfxa} \lEalphak$ \label{line::amm::update_amm}
        \State{\color{gray}// Adaptive restart}
        \State update $\Fak$, $\lFak\!$, $\Eakp$ with \cref{eq::Fak,eq::lFak,eq::Eak}
		\If{$\Eakp>\lFak$} \label{line::amm::adaptive_restart::start}
  		\State solve $\bfxakp\leftarrow\arg\min_{\bfxa} \Ealphak$ \label{line::amm::update_mm}
		\State update $\Eakp$ with \cref{eq::Eak}
		\EndIf
		\EndFor
		\EndFor
	\end{algorithmic}
\end{algorithm}

\begin{proposition}\label{proposition::amm}
	If $\xi>0$ and $0<\eta\leq 1$, the  sequence of iterates $\{\bfxk\}$ from $\amm$ (\cref{algorithm::amm}) converges to first-order critical points under \select{Assumptions 1 to 4 in \cite[App. B]{fan2023daba}}{\cref{assumption::loss,assumption::bounded_intr,assumption::local_opt,assumption:nonzero} in App. \hyperref[section::app::assumptions]{B}}.
\end{proposition}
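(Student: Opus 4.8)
The plan is to read \cref{algorithm::amm} as an accelerated majorization--minimization method and to follow the three standard stages: (i) a monotone decrease of a suitable merit function, (ii) summability of the successive-iterate gaps $\|\bfxkp-\bfxk\|$, and (iii) first-order stationarity of every limit point. The one feature that forces new work is that no device can evaluate $F(\bfxk)$, so the entire argument has to be expressed through the per-device metrics $\Fak$, $\lFak$, $\Eakp$ of \cref{eq::Fak,eq::lFak,eq::Eak}. The first step I would carry out is therefore an \emph{aggregation identity}: by substituting \cref{eq::DEalpha} into \cref{eq::Fak} and \cref{eq::Eak} and inducting from the initialization \cref{eq::Fainit}, I would show $\sum_{\alpha\in\calS}\Fak=F(\bfxk)$ and $\sum_{\alpha\in\calS}\Eakp=E(\bfxkp|\bfxk)$. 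The key combinatorial fact is that each inter-device pair $(i,j)\in\calE''$ is held by exactly the two devices owning its camera and its point, so the factor $\half$ in \cref{eq::DEalpha} cancels this double counting and, together with \cref{proposition::surrogate} at the conditioning point, makes the local metrics reconstruct the global objective and the global surrogate exactly.

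Given the aggregation identity, the merit function is the averaged quantity $\overline{S}_\sfk\triangleq\sum_{\alpha\in\calS}\lFak$, which by \cref{eq::lFak} obeys the exponential-average recursion $\overline{S}_\sfk=(1-\eta)\overline{S}_{\sfk-1}+\eta\,F(\bfxk)$. The role of the decentralized restart test ``$\Eakp>\lFak$'' is to guarantee the non-monotone descent $F(\bfxkp)\le\overline{S}_\sfk$: when no device triggers a restart we may sum the per-device inequalities $\Eakp\le\lFak$ and invoke $F(\bfxkp)\le E(\bfxkp|\bfxk)=\sum_\alpha\Eakp$ from \cref{proposition::surrogate}; when a device does trigger it, the fallback to the plain update \cref{eq::update_mm} makes that device's surrogate decrease, $E^\alpha(\bfxakp|\bfxk)\le E^\alpha(\bfxak|\bfxk)$, so its recomputed $\Eakp\le\Fak$ is restored. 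The strict decrease needed for summability comes from the regularizer: the $\half\xi\|\bfxa-\bfxak\|^2$ term makes each $\Ealphak$ strongly convex with modulus $\xi$ in the variable being optimized, so the minimizer of \cref{eq::update_amm}/\cref{eq::update_mm} satisfies a quadratic growth bound yielding a decrease of at least $\tfrac{\xi}{2}\|\bfxakp-\bfxak\|^2$.

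Summing the resulting Lyapunov inequality over $\sfk$ and using that $F$ is bounded below (from nonnegativity of $\rho$ under \cref{assumption::loss}) while the iterates stay in a compact set (from \cref{assumption::bounded_intr} and from \cref{assumption:nonzero} keeping $\|\bfl_j-\bft_i\|$ bounded away from $0$, so that $\bfeijk$, $\lambdaijk$, $\wijk$, $\bfgijk$ in \cref{eq::a,eq::w,eq::gamma,eq::g} remain well defined and locally Lipschitz) forces $\sum_\sfk\|\bfxkp-\bfxk\|^2<\infty$, hence $\|\bfxkp-\bfxk\|\to0$. I expect the bookkeeping that keeps every quantity well defined on $\SEthree\times\Real3$ and $\SOthree$, including the $\SVDOPlus$ projection, to be routine and handled exactly as in the manifold MM analysis of \cite{fan2021mm_full}.

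For stationarity, along any subsequence with $\bfxk\to\bfx^\infty$ the first-order optimality of \cref{eq::update_amm} gives $\grad_{\bfxa}E^\alpha(\bfxakp|\bflxk)=0$; since $\gammaak$ is bounded and $\|\bfxkp-\bfxk\|\to0$, the extrapolated point $\bflxk$ also tends to $\bfx^\infty$, and the tangency built into \cref{proposition::majorize} (equality and matching first derivatives at the conditioning point) gives $\grad_{\bfxa}E^\alpha(\bfxa|\bfxk)\big|_{\bfxa=\bfxak}=\grad_{\bfxa}F(\bfxk)$. Passing to the limit with continuity of the Riemannian gradient then yields $\grad F(\bfx^\infty)=0$ under the regularity of \cref{assumption::local_opt}. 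The hard part is squarely the decentralized restart in the second step: I must show that the collection of purely \emph{local} tests $\{\Eakp>\lFak\}_{\alpha\in\calS}$, decided with neighbor communication only and possibly triggering on some devices but not others within the same iteration, still enforces the \emph{global} non-monotone descent $F(\bfxkp)\le\overline{S}_\sfk$. Proving that the per-device metrics remain faithful envelopes of their slice of $F$ under this asynchronous restart pattern---the content of the lemma referenced in App.~D.1---is where the real effort lies; the remaining pieces are standard accelerated-MM machinery.
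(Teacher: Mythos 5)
Your overall architecture matches the paper's proof in App.~C.3: aggregate the per-device metrics into the global objective and into $\lFk=\sum_{\alpha\in\calS}\lFak$ (the content of \cref{lemma::adaptive}), use the local test $\Eakp>\lFak$ to enforce $\Eakp\leq\lFak$ on every device whether or not it restarts, deduce $\|\bfxkp-\bfxk\|\to0$ and $\|\bfxkp-\bflxk\|\to0$, and close with tangency of the surrogate at the conditioning point plus Lipschitz continuity of the (Riemannian) gradients. You have also correctly identified the decentralized-restart bookkeeping as the locus of the new work.

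There is, however, a genuine gap in the one step that produces summability of $\|\bfxkp-\bfxk\|^2$. You derive the strict decrease $\tfrac{\xi}{2}\|\bfxakp-\bfxak\|^2$ from the claim that the proximal term makes $\Ealphak$ ``strongly convex with modulus $\xi$,'' so that the subproblem minimizer enjoys quadratic growth. This fails: $\Ealphak$ still contains the nonconvex intra-device terms $\Fij$ and is defined partly over $\SOthree$, and the paper takes $\xi$ arbitrarily close to zero, so it cannot dominate the negative curvature; \cref{assumption::local_opt} grants only a \emph{local} minimum, so even the comparison $\Ealpha(\bfxakp|\cdot)\le\Ealpha(\bfxak|\cdot)$ used in global-minimizer arguments is unavailable; and for the accelerated update the conditioning point is $\bflxak$, not $\bfxak$, so any growth bound would be in the wrong variable anyway. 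The paper needs no convexity at all: by the definition \cref{eq::Fak}, $\Fakp=\Eakp+\Delta\Ealpha\big(\bfxkp|\bfxk\big)$, and \cref{proposition::majorize} makes the surrogate-gap sum in \cref{eq::DEalpha} nonpositive, so $\Fakp\le\Eakp-\half\xi\|\bfxakp-\bfxak\|^2\le\lFak-\half\xi\|\bfxakp-\bfxak\|^2$; summing over devices and using $\lFkp-\lFk=\eta\big(F(\bfxkp)-\lFk\big)$ gives $\lFkp-\lFk\le-\half\eta\xi\|\bfxkp-\bfxk\|^2$, and convergence of $\lFk$ forces $\|\bfxkp-\bfxk\|\to0$. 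In short, the $\xi$-decrease is baked into how the restart metric $\Fak$ is defined relative to $\Eak$, not into any optimality or convexity property of the subproblem solution; without this correction your argument only yields the non-strict $F(\bfxkp)\le\lFk$ and cannot reach stationarity. (A minor further point: you invoke compactness of the iterates, which the paper neither assumes nor needs---boundedness and Lipschitz continuity of the gradients are derived directly from \cref{assumption::loss,assumption:nonzero,assumption::bounded_intr} in \cref{lemma::lipschitzF,lemma::lipschitzE}.)
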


\begin{proof}
Please refer to \select{\cite[App. C.3]{fan2023daba}}{App. \hyperref[proof::amm]{C.3}}.
\end{proof}

\begin{table*}[t]
    \caption{
    \textbf{Mean reprojection errors} with the \textbf{Trivial loss} and \textbf{Huber loss} on the datasets in \cref{table::large_dataset}.
    Decentralized methods $\dr$ \cite{eriksson2016consensus}, $\admm$ \cite{zhang2017dist}, $\amm$ (ours) are run for 1000 iterations with 4, 8, 16, 32 devices.
    Centralized methods $\ceres$ \cite{ceres-solver} and $\deeplm$ \cite{huang2021deeplm} are run for 40 iterations with single device as reference ($\deeplm$ does not support Huber loss).
    On each dataset (row), any decentralized method with best result is \textbf{bold}, and outperforming $\ceres$ and $\deeplm$ is {\color{red} red}.
    \textbf{$\amm$ (ours) achieves lowest reprojection error between decentralized methods and mostly outperforms centralized methods}.
    }
    \label{table::accuracy}
    \centering
    \setlength{\tabcolsep}{0.55em}

    \label{table::comp_trivial}
	\begin{tabular}{ccccccccccccccccc}
        \toprule
		\multicolumn{2}{c}{}& \multicolumn{15}{c}{Mean Reprojection Error with the Trivial Loss}\\
		\cmidrule(lr){3-17}
		\multicolumn{2}{c}{Dataset} &\multirow{2}{*}{Init} & \multirow{2}{*}{$\ceres$} & \multirow{2}{*}{$\deeplm$} &  \multicolumn{3}{c}{4 Devices} & \multicolumn{3}{c}{8 Devices} & \multicolumn{3}{c}{16 Devices} & \multicolumn{3}{c}{32 Devices}\\
		\cmidrule(lr){6-8} \cmidrule(lr){9-11} \cmidrule(lr){12-14} \cmidrule(lr){15-17}
		\multicolumn{2}{c}{} & & & & {\,\scriptsize $\dr$\,} & {\scriptsize $\admm$} & {\scriptsize $\amm$} & {\,\scriptsize $\dr$\,} & {\scriptsize $\admm$} & {\scriptsize $\amm$} & {\,\scriptsize $\dr$\,} & {\scriptsize $\admm$}  & {\scriptsize $\amm$}  & {\,\scriptsize $\dr$\,} & {\scriptsize $\admm$} &  {\scriptsize $\amm$}\\
		\cmidrule(lr){1-2} \cmidrule(lr){3-17}
		{\multirow{3}{*}{\rotatebox[origin=c]{90}{\scriptsize BAL \cite{agarwal2010bundle}}}} 
		& {\scriptsize\sf Ladybug} & {10.48} & {0.707} &  {0.710} &{0.837} & {\color{red}0.698} & \textbf{\color{red}0.690} & {0.846} & {\color{red}0.703} & \textbf{\color{red}0.690} & {0.850} & {0.711} & \textbf{\color{red}0.690} & {0.859} & {0.723} & \textbf{\color{red}0.690} \\
		& {\scriptsize\sf Venice} & {26.33} & {0.468} &  {0.466} &{0.515} & {0.516} & \textbf{\color{red}0.465} & {0.515} & {0.525} & \textbf{\color{red}0.465} & {0.516} & {0.532} & \textbf{\color{red}0.466} & {0.517} & {0.544} & \textbf{0.473} \\
		& {\scriptsize\sf Final} & {12.57} & {0.855} &  {0.848} &{2.017} & {0.876} & \textbf{\color{red}0.828} & {2.042} & {0.903} & \textbf{\color{red}0.828} & {2.053} & {0.908} & \textbf{\color{red}0.829} & {2.123} & {0.921} & \textbf{\color{red}0.833} \\
        \cmidrule(lr){1-2} \cmidrule(lr){3-17}
		{\multirow{6}{*}{\rotatebox[origin=c]{90}{\scriptsize 1DSfM \cite{wilson2014robust}}}} 
		& {\scriptsize\sf Gen. Markt} & {8.181} & {4.024} &  {4.028} &{4.835} & {4.390} & \textbf{\color{red}3.977} & {4.870} & {4.616} & \textbf{\color{red}3.986} & {4.900} & {4.646} & \textbf{\color{red}3.988} & {4.942} & {4.688} & \textbf{\color{red}4.017} \\
		& {\scriptsize\sf Piccadily} & {11.26} & {4.379} &  {4.424} &{6.040} & {4.634} & \textbf{\color{red}4.218} & {6.073} & {4.761} & \textbf{\color{red}4.217} & {6.148} & {4.850} & \textbf{\color{red}4.249} & {6.193} & {5.027} & \textbf{\color{red}4.312} \\
		& {\scriptsize\sf R. Forum} & {6.407} & {1.211} &  {1.177} &{2.194} & {1.381} & \textbf{\color{red}1.152} & {2.211} & {1.446} & \textbf{\color{red}1.161} & {2.232} & {1.516} & \textbf{\color{red}1.171} & {2.251} & {1.585} & \textbf{\color{red}1.168} \\
		& {\scriptsize\sf Trafalgar} & {10.77} & {3.702} &  {3.886} &{5.065} & {3.994} & \textbf{\color{red}3.604} & {5.263} & {4.111} & \textbf{\color{red}3.601} & {5.169} & {4.185} & \textbf{\color{red}3.624} & {5.193} & {4.395} & \textbf{\color{red}3.657} \\
		& {\scriptsize\sf U. Square} & {10.56} & {3.992} &  {3.977} &{5.376} & {4.233} & \textbf{\color{red}3.893} & {5.426} & {4.401} & \textbf{\color{red}3.888} & {5.485} & {4.519} & \textbf{\color{red}3.925} & {5.538} & {4.619} & \textbf{\color{red}3.975} \\
		& {\scriptsize\sf V. Cathedral} & {9.506} & {1.957} &  {1.959} &{3.119} & {2.068} & \textbf{\color{red}1.777} & {3.139} & {2.021} & \textbf{\color{red}1.770} & {3.273} & {2.057} & \textbf{\color{red}1.831} & {3.328} & {2.216} & \textbf{\color{red}1.834} \\
		\bottomrule
	\end{tabular}
    
    \vfill
    \vspace{2mm}

    \begin{tabular}{c cccccccccccccccc}
        \toprule
		\multicolumn{2}{c}{}& \multicolumn{15}{c}{Mean Reprojection Error with the Huber Loss}\\
		\cmidrule(lr){3-17}
		\multicolumn{2}{c}{Dataset} &\multirow{2}{*}{Init} & \multirow{2}{*}{$\ceres$} & \multirow{2}{*}{$\deeplm$} &  \multicolumn{3}{c}{4 Devices} & \multicolumn{3}{c}{8 Devices} & \multicolumn{3}{c}{16 Devices} & \multicolumn{3}{c}{32 Devices}\\
		\cmidrule(lr){6-8} \cmidrule(lr){9-11} \cmidrule(lr){12-14} \cmidrule(lr){15-17}
		\multicolumn{2}{c}{} & & & & {\,\scriptsize $\dr$\,} & {\scriptsize $\admm$} & {\scriptsize $\amm$} & {\,\scriptsize $\dr$\,} & {\scriptsize $\admm$} & {\scriptsize $\amm$} & {\,\scriptsize $\dr$\,} & {\scriptsize $\admm$}  & {\scriptsize $\amm$}  & {\,\scriptsize $\dr$\,} & {\scriptsize $\admm$} &  {\scriptsize $\amm$}\\
		\cmidrule(lr){1-2} \cmidrule(lr){3-17}
		{\multirow{3}{*}{\rotatebox[origin=c]{90}{\scriptsize BAL \cite{agarwal2010bundle}}}} 
		& {\scriptsize\sf Ladybug} & {3.267} & {0.704}  & - & {0.834} & {\color{red}0.698} & \textbf{\color{red}0.690} & {0.841} & {\color{red}0.703} & \textbf{\color{red}0.690} & {0.844} & {0.712} & \textbf{\color{red}0.690} & {0.848} & {0.723} & \textbf{\color{red}0.690} \\
		& {\scriptsize\sf Venice} & {4.750} & {0.468}  & - & {0.515} & {0.516} & \textbf{\color{red}0.465} & {0.515} & {0.524} & \textbf{\color{red}0.465} & {0.516} & {0.530} & \textbf{\color{red}0.465} & {0.517} & {0.544} & \textbf{0.473} \\
		& {\scriptsize\sf Final} & {7.573} & {0.815}  & - & {1.995} & {0.859} & \textbf{\color{red}0.796} & {2.020} & {0.882} & \textbf{\color{red}0.795} & {2.032} & {0.886} & \textbf{\color{red}0.796} & {2.101} & {0.898} & \textbf{\color{red}0.815} \\
		\cmidrule(lr){1-2} \cmidrule(lr){3-17}
		{\multirow{6}{*}{\rotatebox[origin=c]{90}{\scriptsize 1DSfM \cite{wilson2014robust}}}} 
		& {\scriptsize\sf Gen. Markt} & {8.031} & {3.942}  & - & {4.777} & {4.320} & \textbf{\color{red}3.893} & {4.812} & {4.525} & \textbf{\color{red}3.914} & {4.842} & {4.561} & \textbf{\color{red}3.925} & {4.886} & {4.598} & \textbf{\color{red}3.942} \\
		& {\scriptsize\sf Piccadily} & {10.34} & {4.136}  & - & {5.739} & {4.334} & \textbf{\color{red}3.951} & {5.776} & {4.441} & \textbf{\color{red}3.963} & {5.851} & {4.564} & \textbf{\color{red}3.961} & {5.892} & {4.689} & \textbf{\color{red}3.978} \\
		& {\scriptsize\sf R. Forum} & {6.247} & {1.136}  & - & {2.149} & {1.296} & \textbf{\color{red}1.120} & {2.163} & {1.406} & \textbf{\color{red}1.127} & {2.187} & {1.471} & \textbf{\color{red}1.133} & {2.204} & {1.516} & \textbf{\color{red}1.133} \\
		& {\scriptsize\sf Trafalgar} & {9.900} & {3.544}  & - & {4.816} & {3.722} & \textbf{\color{red}3.363} & {4.939} & {3.876} & \textbf{\color{red}3.377} & {5.000} & {3.952} & \textbf{\color{red}3.401} & {4.944} & {4.149} & \textbf{\color{red}3.438} \\
		& {\scriptsize\sf U. Square} & {10.18} & {3.987}  & - & {5.242} & {4.096} & \textbf{\color{red}3.755} & {5.291} & {4.261} & \textbf{\color{red}3.763} & {5.348} & {4.366} & \textbf{\color{red}3.763} & {5.400} & {4.470} & \textbf{\color{red}3.818} \\
		& {\scriptsize\sf V. Cathedral} & {9.085} & {1.748}  & - & {3.008} & {1.928} & \textbf{\color{red}1.709} & {3.027} & {1.923} & \textbf{\color{red}1.681} & {3.156} & {1.956} & \textbf{\color{red}1.740} & {3.206} & {2.088} & \textbf{\color{red}1.734} \\
		\bottomrule
	\end{tabular}
	\vspace{-1em}
\end{table*}

Compared to $\amm$, decentralized methods \cite{eriksson2016consensus,zhang2017dist} have much stricter  assumptions for provable convergence, making them sensitive to parameter tuning in practice. Even though \cref{algorithm::amm} requires $\bfxakp$ to solve $\Ealphak$ or $\lEalphak$ for convergence guarantees, $\amm$ still empirically achieves good performances with approximate solutions, and thus, consumes fewer computational resources. Furthermore, as is shown  in the next section, $\amm$ is much faster to converge to more accurate solutions than the consensus methods \cite{eriksson2016consensus,zhang2017dist}, which suggests that  $\amm$ is preferable for large-scale decentralized bundle adjustment.

\section{Evaluation}\label{section::experiment}

\begin{table}[t]
	\centering
	\setlength{\tabcolsep}{0.3em}
	\caption{Largest Bundle adjustment datasets of more than 700 cameras in BAL \cite{agarwal2010bundle} and 1DSfM \cite{wilson2014robust}.}\label{table::large_dataset}
	\begin{tabular}{P{0.015\textwidth}  P{0.075\textwidth} P{0.07\textwidth} P{0.075\textwidth}P{0.095\textwidth}}
		\toprule			
		\multicolumn{2}{c}{Dataset} & \# Cameras & \# Points &\# Observations  \\
		\cmidrule(lr){1-2} \cmidrule(lr){3-5}
 		{\multirow{3}{*}{\rotatebox[origin=c]{90}{\scriptsize BAL \cite{agarwal2010bundle}}}} & {\scriptsize\sf Ladybug} & 1723 & 156502 & 678718  \\
		&{\scriptsize\sf Venice} & 1778 & 993923 & 5001946  \\
		& {\scriptsize\sf Final} & 13682 & 4456117 & 28987644 \\
		\cmidrule(lr){1-2} \cmidrule(lr){3-5}
		{\multirow{6}{*}{\rotatebox[origin=c]{90}{\scriptsize 1DSfM \cite{wilson2014robust}}}} & {\scriptsize\sf Gen. Markt} & 706 & 93672 & 364029  \\
		& {\scriptsize\sf Piccadilly} & 2289 & 209504 & 999878  \\
		& {\scriptsize\sf R. Forum} & 1063 & 265047 &  1292756  \\
		& {\scriptsize\sf Trafalgar} & 5032 & 388956 & 1826071  \\
		& {\scriptsize\sf U. Square} & 796 & 46066 & 230811  \\
		& {\scriptsize\sf V. Cathedral} & 836 & 265553 & 1333280 \\
		\bottomrule
	\end{tabular}
    \vspace{-0.5em}
\end{table}

We perform extensive benchmarks on the BAL \cite{agarwal2010bundle} and 1DSfM \cite{wilson2014robust} datasets and compare our method $\amm$ (\cref{algorithm::amm}) with centralized (single device) methods $\ceres$\cite{ceres-solver} and $\deeplm$\cite{huang2021deeplm}, and decentralized methods $\dr$\cite{eriksson2016consensus} and $\admm$ \cite{zhang2017dist}. For all decentralized methods, the datasets are partitioned by evenly distributing measurements to each device if there are more than 700 cameras. Otherwise, the partitioning is determined according to the number of cameras per device. $\dr$, $\admm$ and $\amm$ are implemented in CUDA,  $\ceres$ in C++, and $\deeplm$ in CUDA \& Python. All experiments are conducted on a computer with 80 Intel Xeon 2.2GHz CPU cores and 8 Nvidia V100 GPUs, and OpenMPI is used for inter-device communication. We compare all methods across metrics on accuracy, efficiency, memory usage and communication load.

\vspace{-0.2em}
\subsection{Accuracy}\label{section::experiment::accuracy}
\vspace{-0.1em}

We evaluate the mean reprojection errors of all methods. 
The centralized methods are run for 40 iterations and decentralized methods for 1000 iterations to ensure sufficient accuracy.
In the default solver options, $\ceres$ and $\deeplm$ have a maximum of 50 and 20 iterations, respectively. However, we found that these centralized methods usually exhibit limited improvement after 30 iterations, and the optimization time per iteration tends to increase significantly beyond this point. 
Thus, we run $\ceres$ and $\deeplm$ for 40 iterations to balance accuracy and time. 
We report in \cref{table::accuracy} the mean reprojection errors with the trivial loss and Huber loss on the nine largest datasets of more than 700 cameras in BAL \& 1DSfM (see \cref{table::large_dataset}), while the results for the remaining datasets of less than 700 cameras are in
\select{\cite[App. A.1]{fan2023daba}}{App. \hyperref[section::app::accuracy]{A.1}}.
The  3D reconstruction results of $\amm$ (ours) with 8 devices and the Huber loss on some of these datasets are shown  in \cref{fig::results}.

As shown in \cref{table::accuracy} and 
\select{\cite[App. A.1]{fan2023daba}}{App. \hyperref[section::app::accuracy]{A.1}},
$\amm$  yields the most accurate results compared to the other decentralized methods by a large margin across the board---all 20 datasets, both types of losses, and all numbers of devices. Additionally, $\amm$, while being fully decentralized, outperforms centralized methods $\ceres$ and $\deeplm$ on most of the datasets as summarized in \cref{table::stats}.
Unsurprisingly, decentralized methods converge to lower accuracy as the number of devices increases, since more devices lead to more relaxed upper bounds for $\amm$ and greater consensus errors for $\dr$ and $\admm$. However, we also find that larger problems are less impacted by the increasing number of devices. This validates that decentralized methods like $\amm$ are suitable for large-scale bundle adjustment.

\begin{figure*}[t]
	\centering
	\begin{tabular}{cccc}
		\hspace{-0.5em}\subfloat[][4 devices]{\includegraphics[trim =0mm 0mm 0mm 0mm,width=0.2425\textwidth]{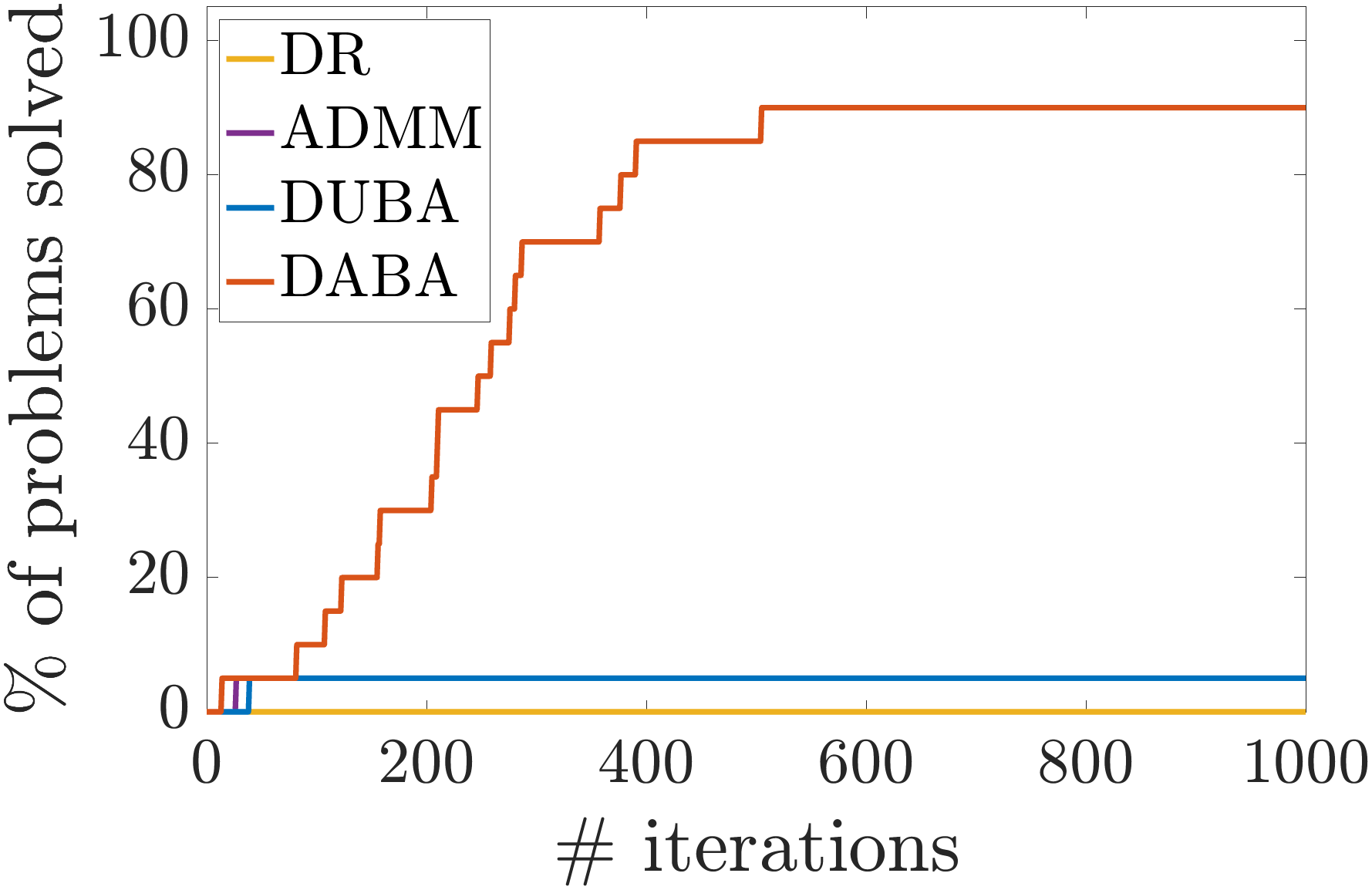}} &
		\hspace{-0.6em}\subfloat[][8 devices]{\includegraphics[trim =0mm 0mm 0mm 0mm,width=0.2425\textwidth]{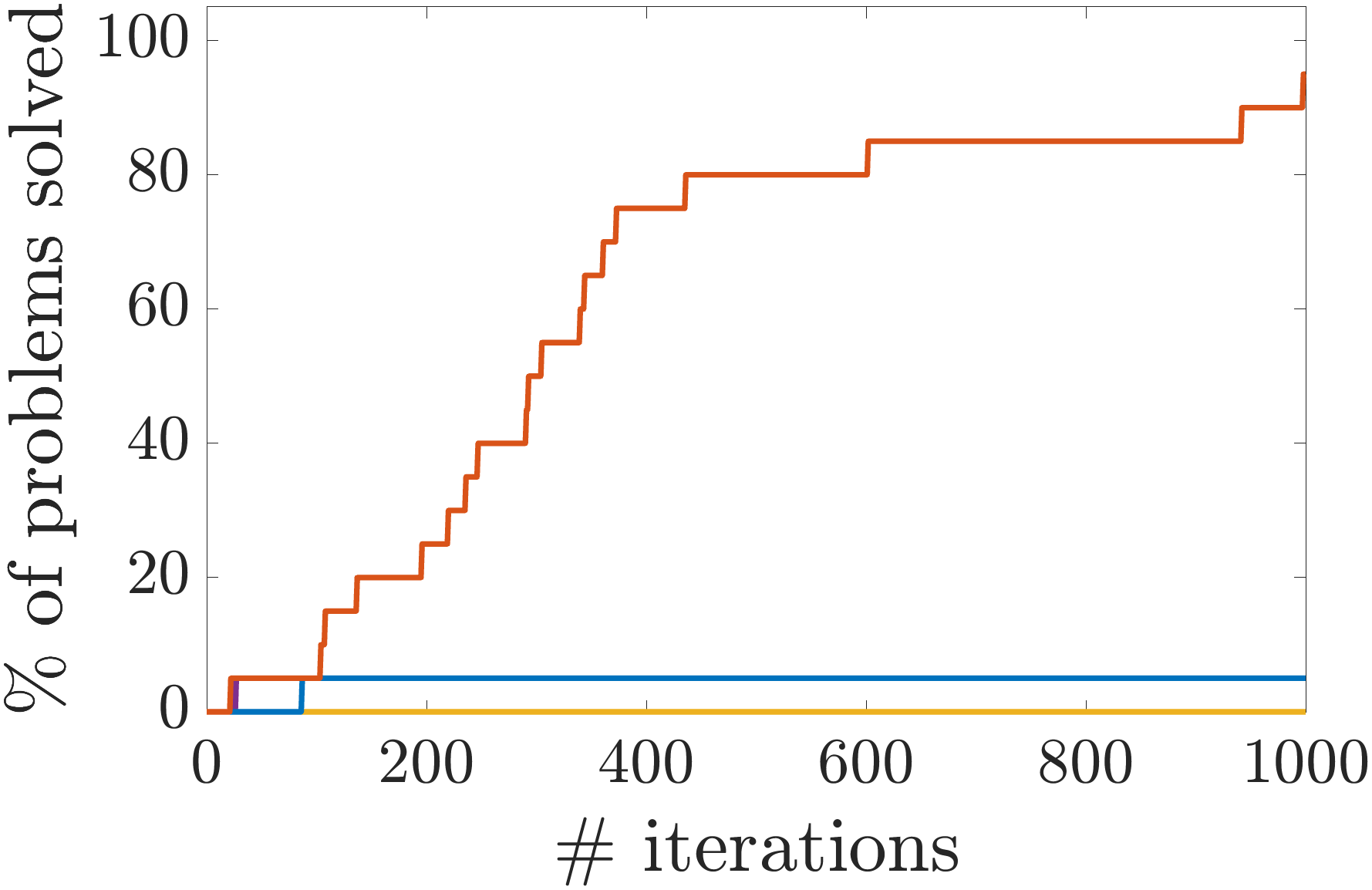}} &
		\hspace{-0.6em}\subfloat[][16 devices]{\includegraphics[trim =0mm 0mm 0mm 0mm,width=0.2425\textwidth]{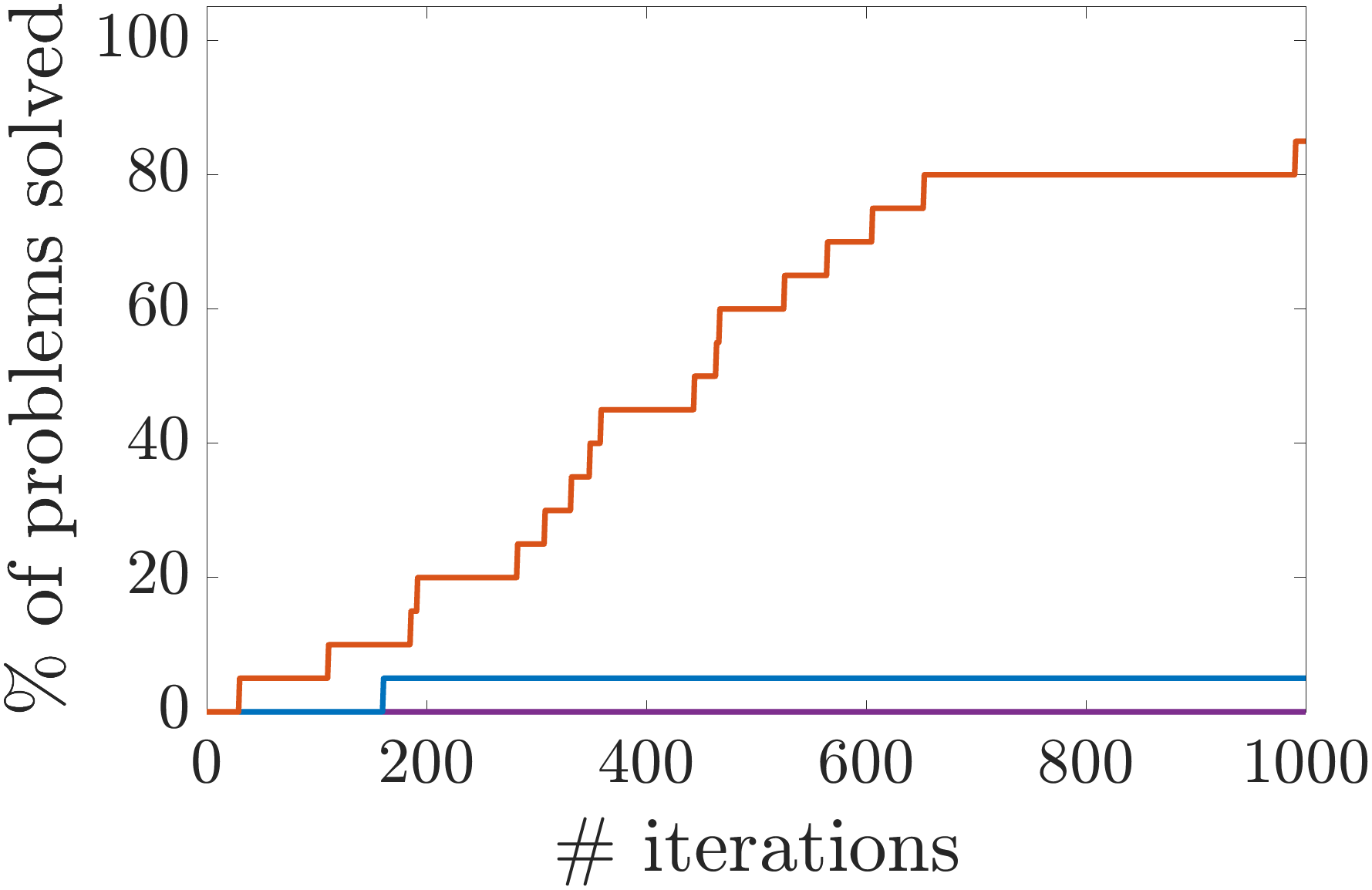}}&
		\hspace{-0.6em}\subfloat[][32 devices]{\includegraphics[trim =0mm 0mm 0mm 0mm,width=0.2425\textwidth]{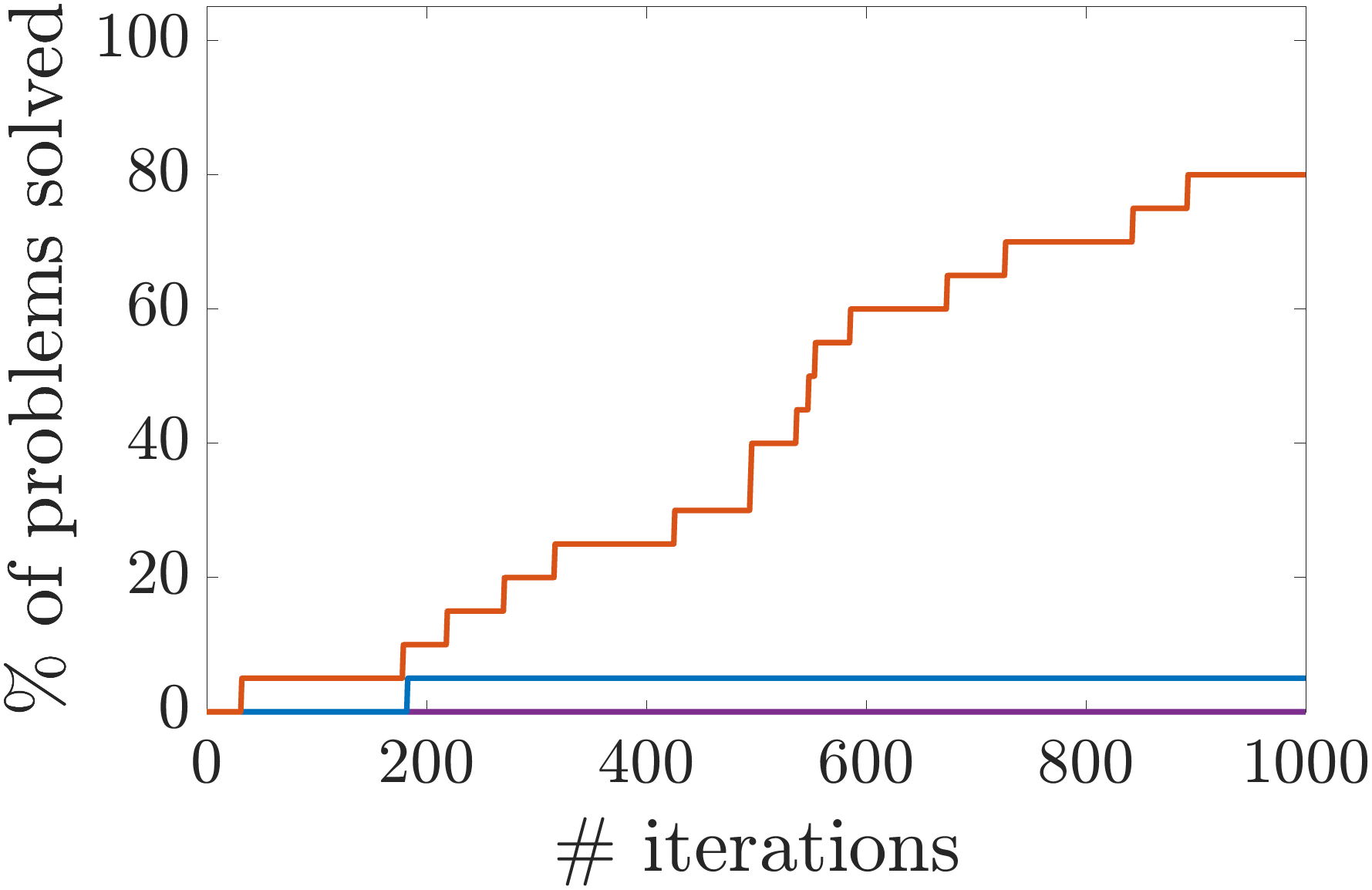}}
        \vspace{0.25em}
  \\
		\hspace{-0.5em}\subfloat[][4 devices]{\includegraphics[trim =0mm 0mm 0mm 0mm,width=0.2425\textwidth]{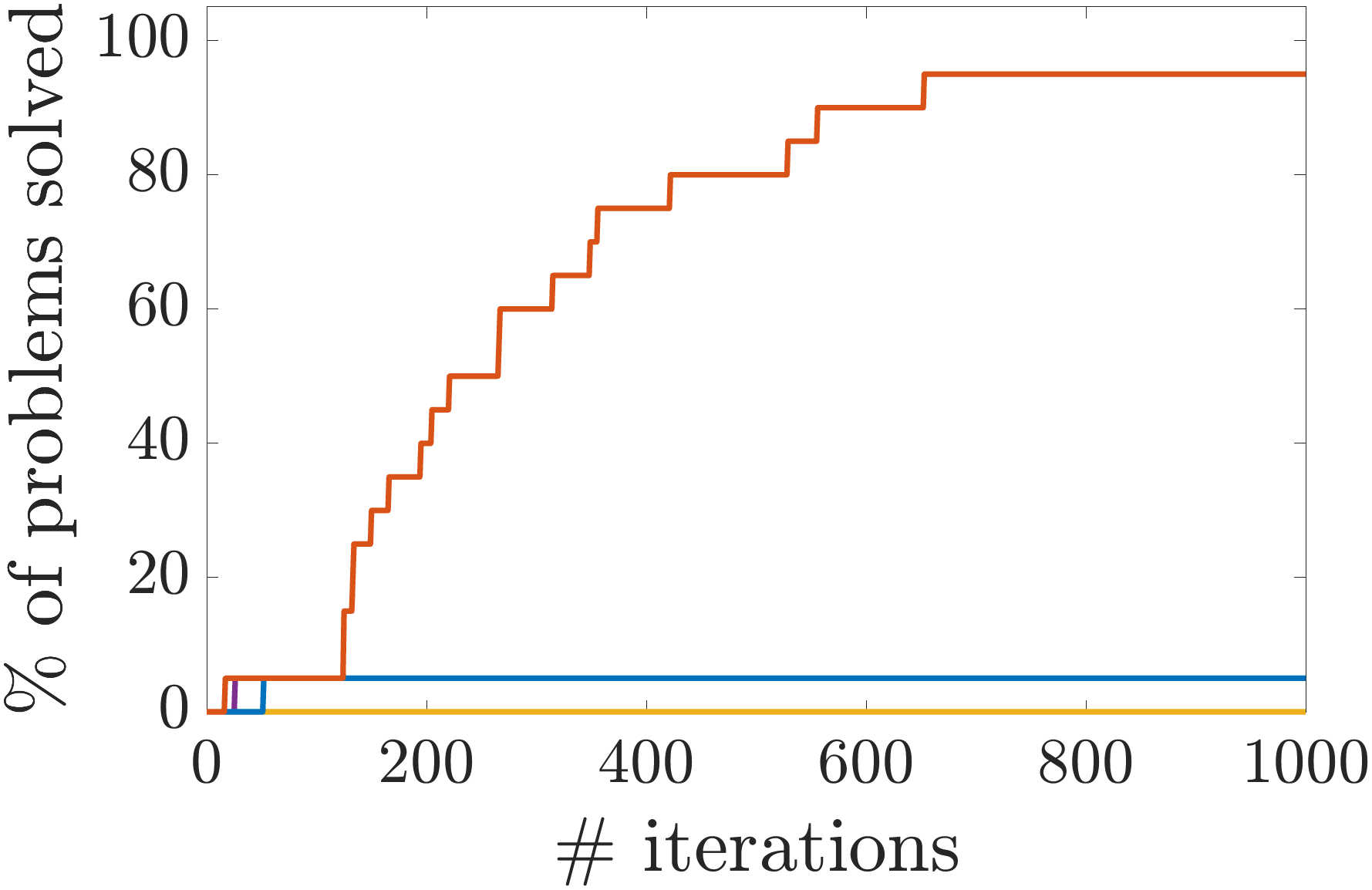}} &
		\hspace{-0.6em}\subfloat[][8 devices]{\includegraphics[trim =0mm 0mm 0mm 0mm,width=0.2425\textwidth]{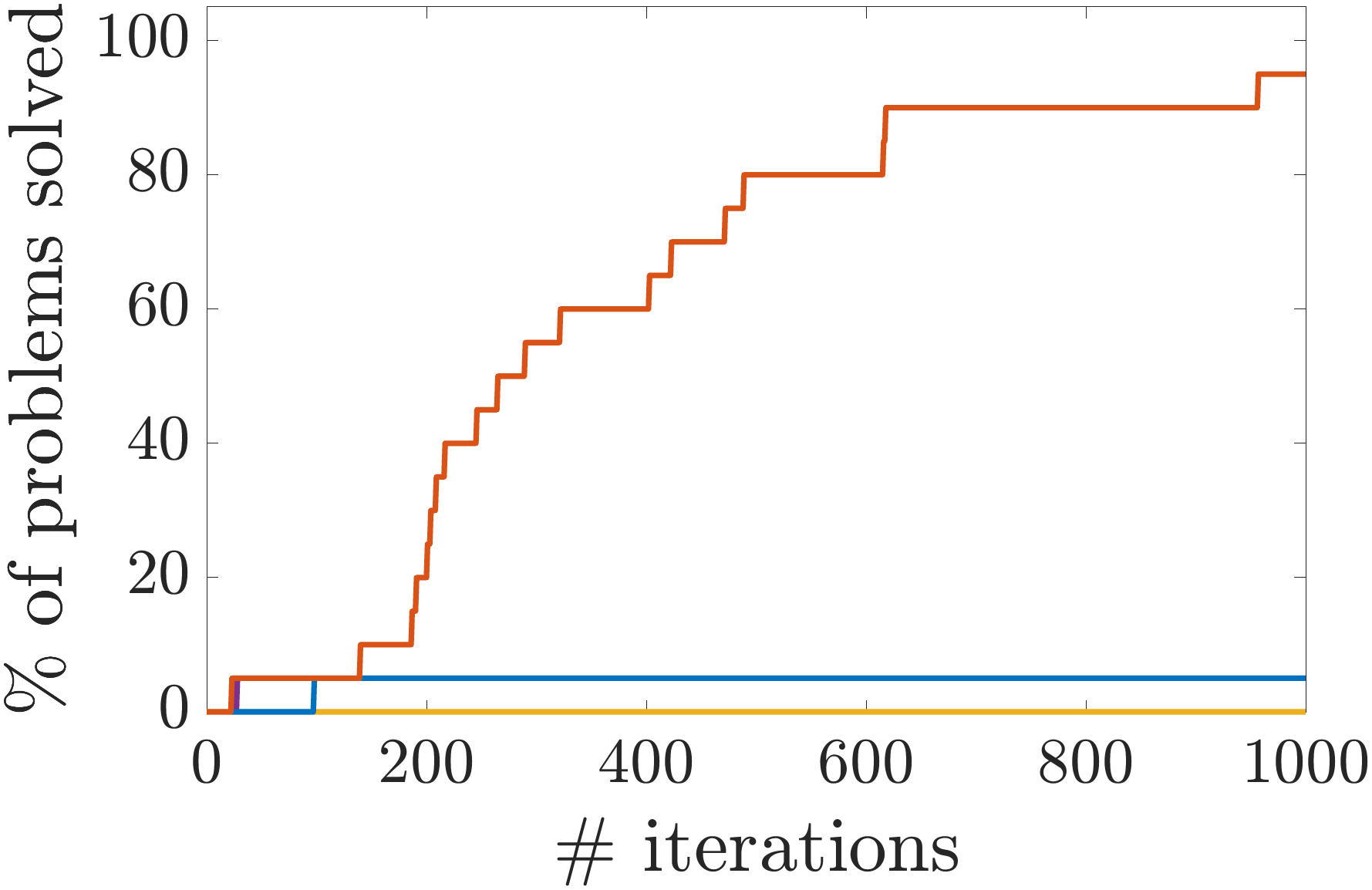}} &
		\hspace{-0.6em}\subfloat[][16 devices]{\includegraphics[trim =0mm 0mm 0mm 0mm,width=0.2425\textwidth]{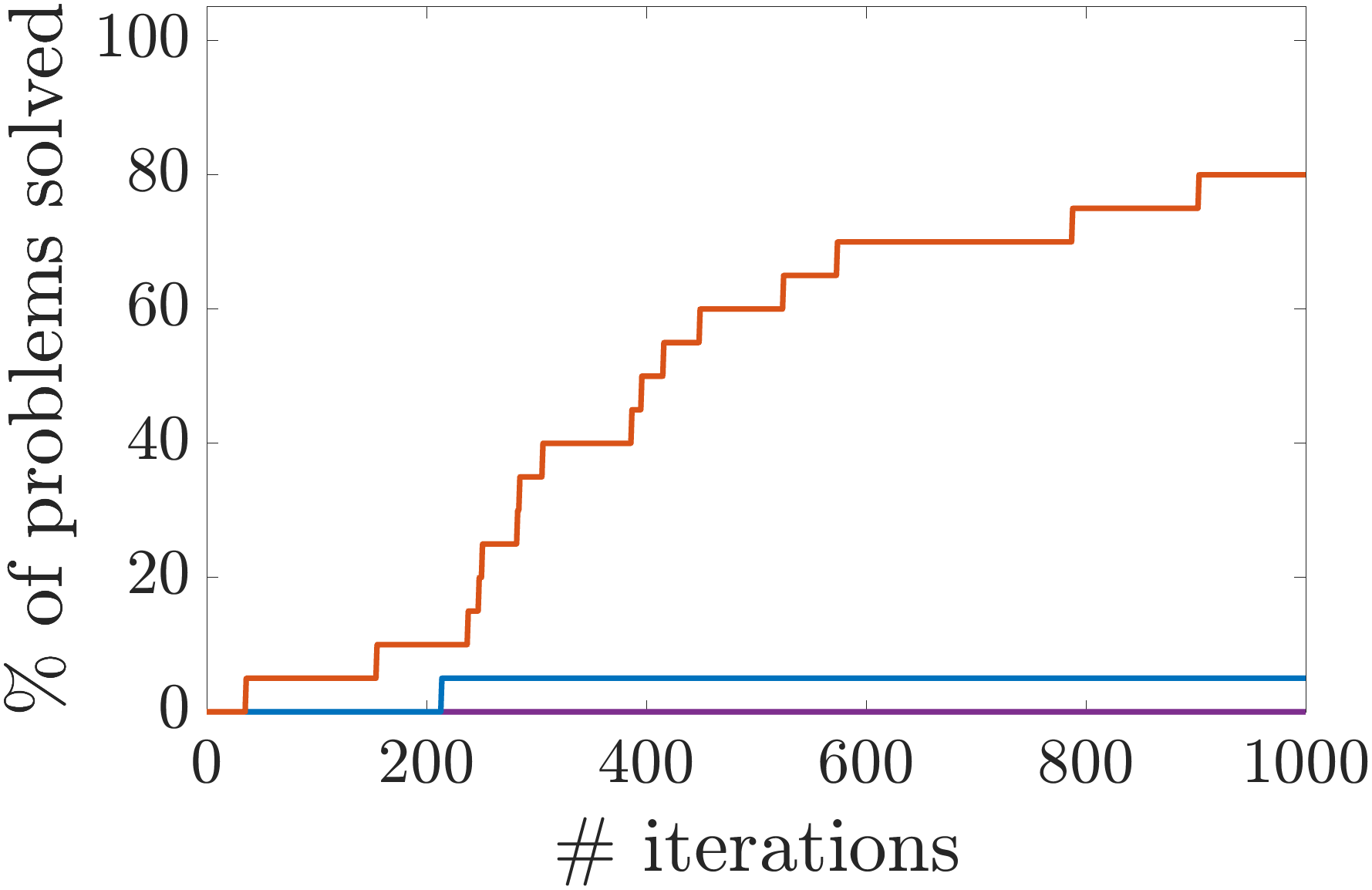}}&
		\hspace{-0.6em}\subfloat[][32 devices]{\includegraphics[trim =0mm 0mm 0mm 0mm,width=0.2425\textwidth]{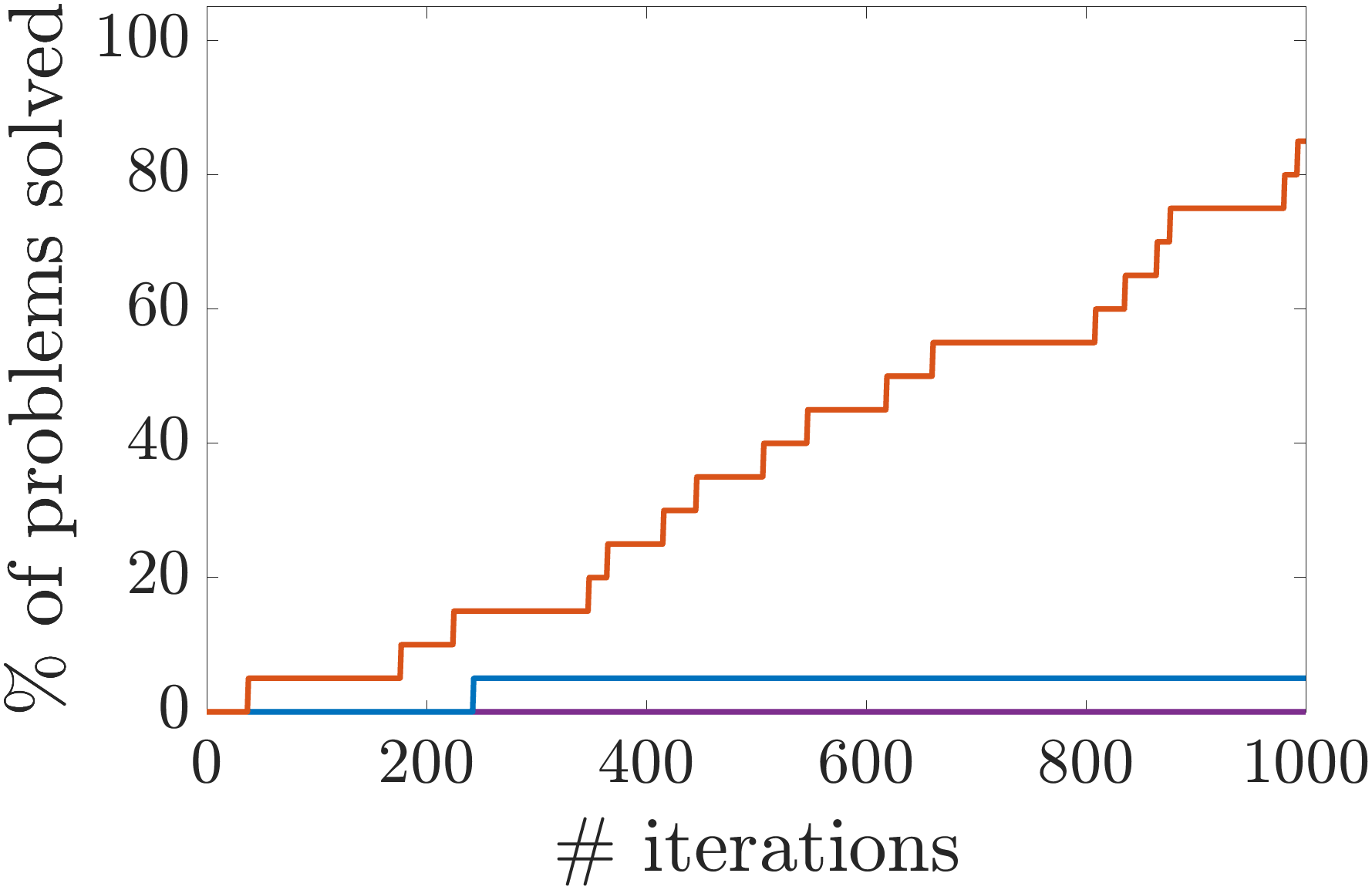}}
	\end{tabular}
	\caption{\textbf{Performance profiles} with respect to the number of iterations for $\amm$ (ours) compared to other decentralized methods on the 20 datasets in BAL and 1DSfM with (a)-(d) the trivial loss and (e)-(h) the Huber loss. Reference objective values $F_{\text{ref}}$ is from $\ceres$ \cite{ceres-solver} and suboptimality tolerance $\Delta=1\times10^{-4}$.}\label{fig::perf}
 \vspace{-1em}
	\end{figure*}
\begin{table}[t]
    \centering
    \renewcommand{\arraystretch}{1.0}
    \caption{The number of datasets out of All  in BAL \& 1DSfM (20 datasets) and Largest in \cref{table::large_dataset} (9 datasets) where $\amm$ is more accurate than $\ceres$ \cite{ceres-solver} and $\deeplm$ \cite{huang2021deeplm}.}
    \setlength{\tabcolsep}{0.65em}
    \begin{tabular}{cccccc}
    \toprule
    Datasets & Loss & 4 Devices & 8 Devices & 16 Devices & 32 Devices        \\
    \midrule
    \multirow{2}{*}{All} 
    & Trivial &  18/20 & 18/20 & 15/20  & 14/20\\
    & Huber & 19/20 & 19/20 & 16/20 & 17/20\\
    \midrule
    \multirow{2}{*}{Largest} 
     & Trivial & 9/9 & 9/9 & 9/9 & 8/9\\
     & Huber & 9/9 & 9/9 & 9/9 & 8/9\\
    \bottomrule 
    \end{tabular}
    \label{table::stats}
    \vspace{-0.25em}
\end{table}

$\dr$, $\admm$ and $\amm$ use Levenberg–Marquardt (LM) algorithm to solve subproblems, e.g., \cref{eq::update_amm,eq::update_mm}. We found $\dr$ and $\admm$ need up to 20 inner LM steps to converge to acceptable accuracy, whereas $\amm$ achieves better accuracy with only one successful inner LM step for all setups. This suggests that $\amm$ is more robust to approximated solutions of subproblems and more time efficient due to fewer inner LM steps, which is evaluated in the next section.

\subsection{Efficiency}\label{section::experiment::efficiency}

First, we evaluate the time speedup of $\amm$ (ours) with 4 and 8 devices against  centralized methods $\ceres$ \cite{ceres-solver} and $\deeplm$ \cite{huang2021deeplm} on the largest datasets of more than 700 cameras in \cref{table::large_dataset}. Each device uses one GPU and we consider together both the computation and communication time. The time speedup of $\amm$ for a problem $p$ is $\frac{T_{\Delta}(p)}{T_{\text{\sf\tiny DABA}}(p)}$, where $T_{\text{\sf\tiny DABA}}(p)$ is the time that $\amm$ takes to reach the reference objective value $F_{\text{ref}}$ while $T_{\Delta}(p)$  is the time that $\ceres$ or $\deeplm$ takes to reach a target objective value $F_{\Delta}(p)>F_{\text{ref}}$:
\begingroup
\begin{equation}\label{eq::Fdelta}
	F_{\Delta}(p)\triangleq F_{\text{ref}} + \Delta\cdot
	(F_{\text{init}} - F_{\text{ref}}) 
	\vspace{-0.25em}
\end{equation}
\endgroup
where $F_{\text{init}}$ is the initial objective value and $\Delta\in(0,\,1)$ is the suboptimality tolerance. Here, we choose $F_{\text{ref}}$ to be the smallest objective value separately achieved by $\ceres$ and $\deeplm$ for 40 iterations and $\Delta=2.5\times 10^{-4}$   since $\ceres$ and $\deeplm$  have a slow convergence around $F_{\text{ref}}$. In contrast, $\amm$ has to exactly attain $F_{\text{ref}}< F_\Delta(p)$. This means when measuring time, our method is subjected to a stricter convergence criteria. Nevertheless, \cref{table::speedup} indicates that $\amm$ is significantly faster in all setups, where $\amm$ is 23.9$\sim$588.3x faster than $\ceres$ and 2.6$\sim$174.6x faster than $\deeplm$ for the trivial loss, and 69.7$\sim$953.7x faster than $\ceres$ for the Huber loss. In summary, while $\amm$ might take several hundred iterations to reach accuracies comparable to $\ceres$ and $\deeplm$, our multi-GPU implementation makes $\amm$ empirically far more time-efficient for large-scale bundle adjustment problems.

Next, we analyze the efficiency of decentralized methods in terms of numbers of iterations. In addition to $\dr$ \cite{eriksson2016consensus}, $\admm$ \cite{zhang2017dist}, $\amm$ (ours), we also implement $\mm$ (Decentralized and Unaccelerated Bundle Adjustment) to ablate our method without acceleration and adaptive restart. The only difference between $\mm$ and $\amm$ is that $\bfxakp$ in $\mm$ is always yielded by \cref{eq::update_mm}, whereas  $\bfxakp$ in $\amm$ results from \cref{eq::update_amm} unless  adaptive restart  is triggered. Here, we compute the performance profiles \cite{dolan2002benchmarking} of all decentralized methods with respect to the number of iterations on all datasets. Given an optimizer and a problem set $\calP$, the performance profile at iteration $\sfk$ refers to the percentage of problems $p\in\calP$ solved, i.e. the optimizer attains the target objective value $F_{\Delta}(p)$ in \cref{eq::Fdelta} with the reference objective value $F_{\text{ref}}$ from $\ceres$ and suboptimality tolerance $\Delta=1\times10^{-4}$. The performance profiles with the trivial loss and the Huber loss on 4, 8, 16, 32 devices are shown in \cref{fig::perf}.
$\amm$ solves more problems than other methods and takes fewer iterations to reach more accurate results. $\amm$ is also much faster than $\mm$, demonstrating that Nesterov's acceleration and adaptive restart significantly improve performance.

\begin{table}[t]
	\centering
	\setlength{\tabcolsep}{0.1em}
    \renewcommand{\arraystretch}{1.0}
    \caption{
    \textbf{Time speedup} of $\amm$ with 4 and 8 devices over $\ceres$ \cite{ceres-solver} and $\deeplm$ \cite{huang2021deeplm} on datasets of more than 700 cameras.
    \textbf{$\amm$ achieves speedups on all ranging from 23.9$\sim$953.7x over $\ceres$ and 2.6$\sim$174.6x over $\deeplm$}.
    }
    \label{table::speedup}
    \begin{tabular}{P{0.02\textwidth} P{0.08\textwidth} P{0.055\textwidth} P{0.055\textwidth} P{0.055\textwidth} P{0.055\textwidth} P{0.07\textwidth} P{0.07\textwidth}}
		\toprule
		\multicolumn{2}{c}{} & \multicolumn{6}{c}{Time Speedup}\\
		 \cmidrule(lr){3-8}
		\multicolumn{2}{c}{\multirow{3}{*}{Dataset}} &  \multicolumn{4}{c}{Trivial Loss} &  \multicolumn{2}{c}{Huber Loss}\\
		\cmidrule(lr){3-6} \cmidrule(lr){7-8}
		\multicolumn{2}{c}{}  & \multicolumn{2}{c}{4 Devices} & \multicolumn{2}{c}{8 Devices} & 4 Devices & 8 Devices\\
		\cmidrule(lr){3-4} \cmidrule(lr){5-6} \cmidrule(lr){7-8}
		\multicolumn{2}{c}{}   & {\scriptsize $\ceres$} & {\scriptsize $\deeplm$} & {\scriptsize $\ceres$} & {\scriptsize $\deeplm$} & \multicolumn{2}{c}{\scriptsize $\ceres$}  \\
		\cmidrule(lr){1-2} \cmidrule(lr){3-8}
		{\multirow{3}{*}{\rotatebox[origin=c]{90}{\scriptsize BAL \cite{agarwal2010bundle}}}}
		& {\scriptsize\sf Ladybug}     & 575.1  & 174.6  & 588.3 & 165.1 & 811.3 & 953.7 \\
		& {\scriptsize\sf Venice}        & 23.9    & 2.6      & 43.9    &   4.8   & 89.2   & 148.4 \\
		& {\scriptsize\sf Final}            & 185.0  & 22.7    & 258.2  & 32.8 &  173.2   & 250.6 \\
		\cmidrule(lr){1-2} \cmidrule(lr){3-8}
		{\multirow{6}{*}{\rotatebox[origin=c]{90}{\scriptsize 1DSfM\cite{wilson2014robust}}}} 
		& {\scriptsize\sf Gen. Markt}   & 99.7     & 24.6   & 65.4    & 16.2 & 102.0  & 65.0 \\
		& {\scriptsize\sf Piccadilly}      & 117.2    & 18.1    & 170.5  & 23.8 & 139.3  & 155.2 \\
		& {\scriptsize\sf R. Forum}       & 138.3   &  11.0    & 181.8  & 11.2 &  69.7    & 97.1 \\
		& {\scriptsize\sf Trafalgar}       & 148.2   &  29.7   & 239.6 & 41.7 & 174.1   & 228.0  \\
		& {\scriptsize\sf U. Square}     & 59.2     &   9.7    & 49.9    & 8.6 & 162.3    & 117.1 \\
		& {\scriptsize\sf V. Cathedral} & 308.8  & 32.6   & 477.9  & 50.9 & 176.6   & 419.8 \\
		\bottomrule
	\end{tabular}
\vspace{-0.15em}
\end{table}
\begin{figure*}[t]
	\centering
	\begin{tabular}{cccc}
		\hspace{-0.5em}\subfloat[][{\sf\small Venice}]{\includegraphics[trim =0mm 0mm 0mm 0mm,width=0.2425\textwidth]{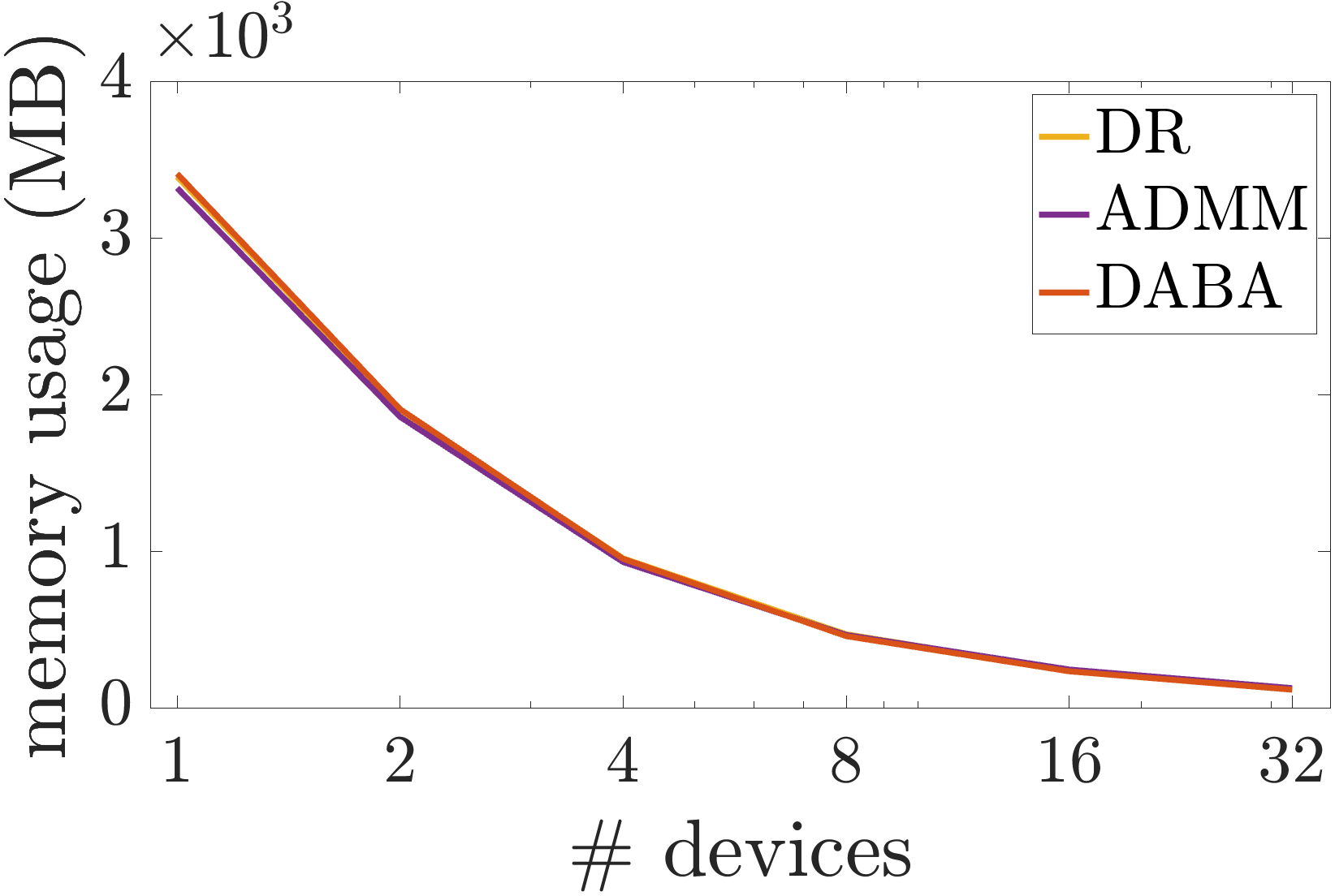}} &
		\hspace{-0.6em}\subfloat[][{\sf\small Final}]{\includegraphics[trim =0mm 0mm 0mm 0mm,width=0.2425\textwidth]{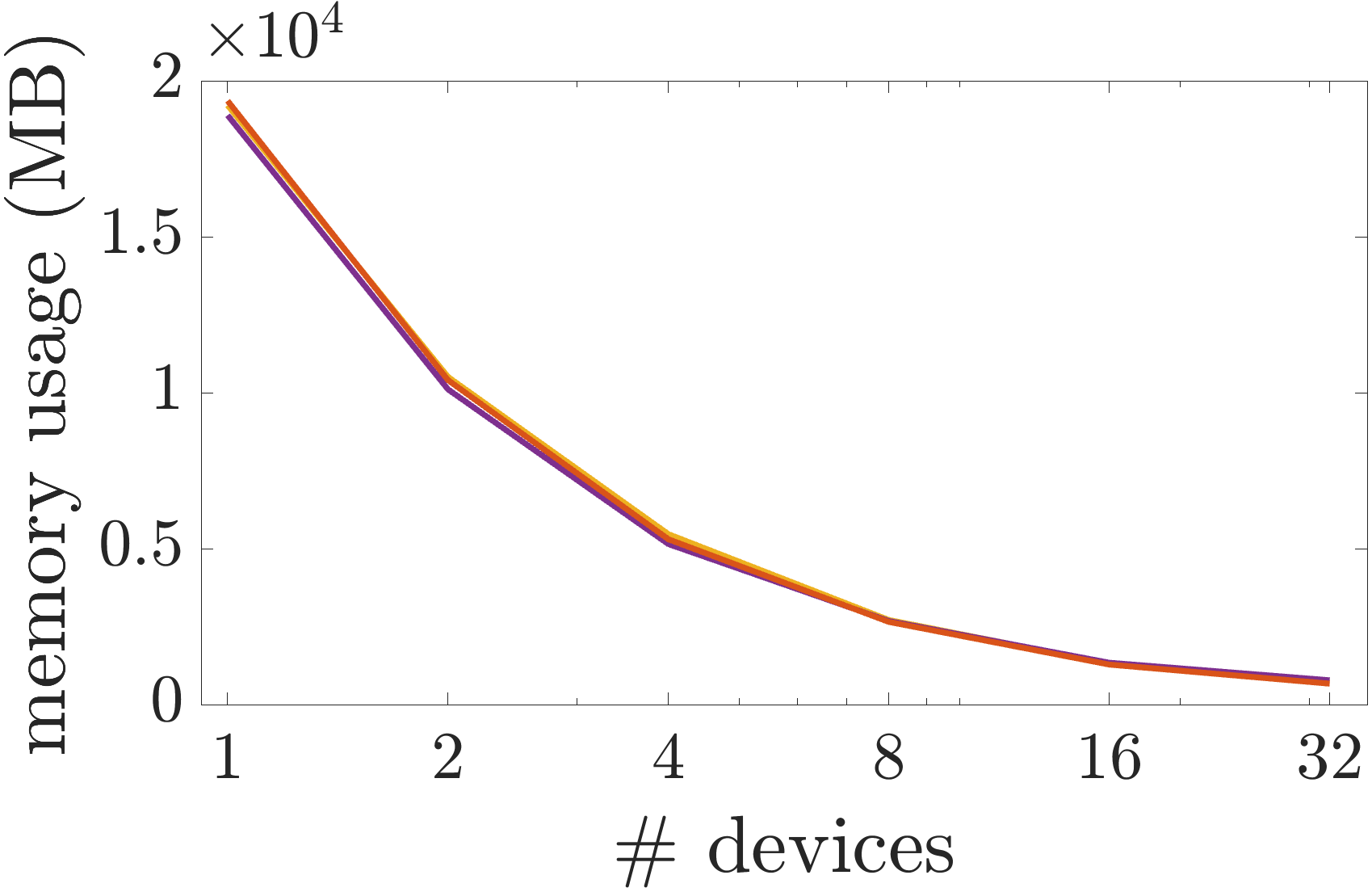}} &
		\hspace{-0.6em}\subfloat[][{\sf\small Piccadilly}]{\includegraphics[trim =0mm 0mm 0mm 0mm,width=0.2425\textwidth]{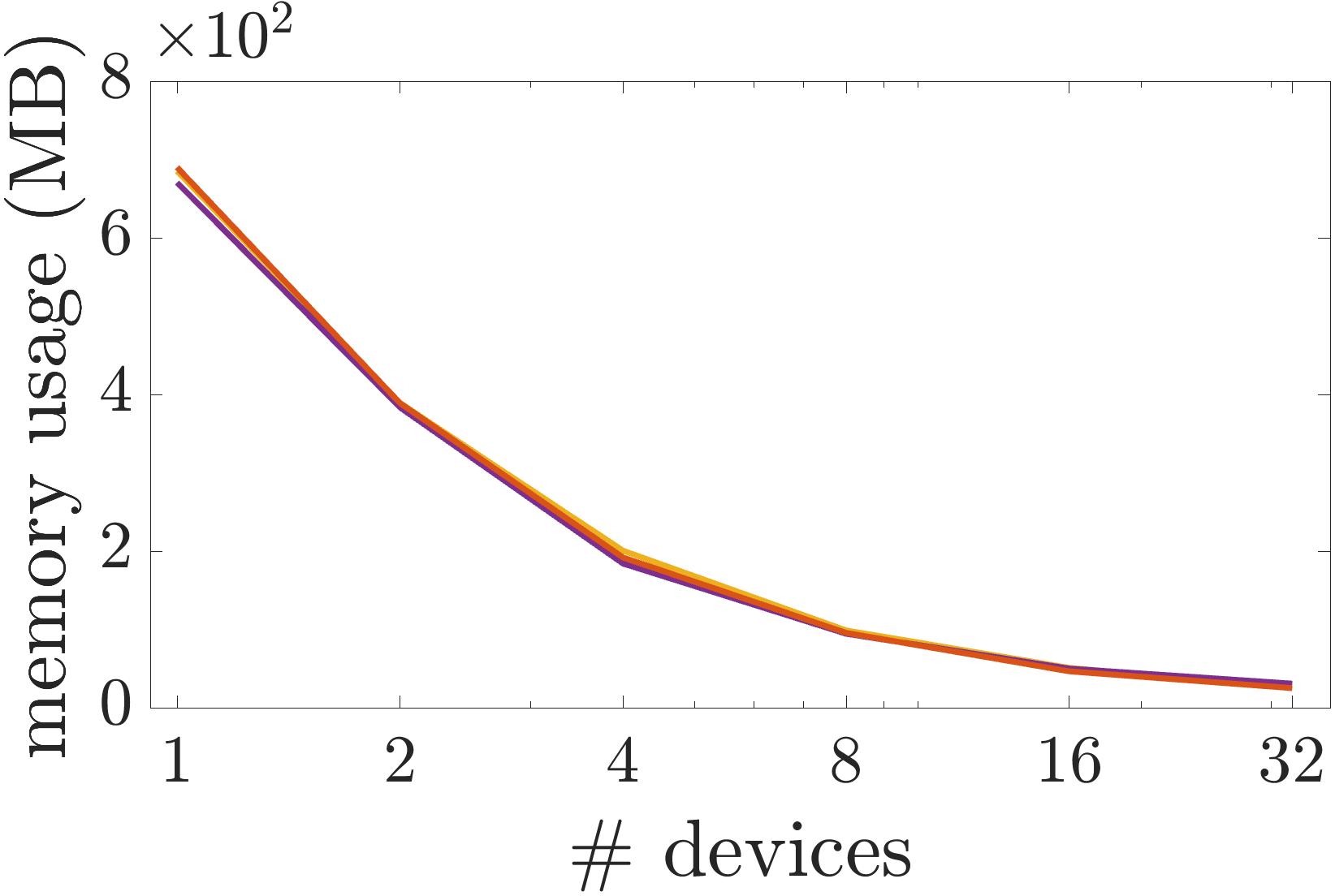}}&
		\hspace{-0.6em}\subfloat[][{\sf\small Trafalgar}]{\includegraphics[trim =0mm 0mm 0mm 0mm,width=0.2425\textwidth]{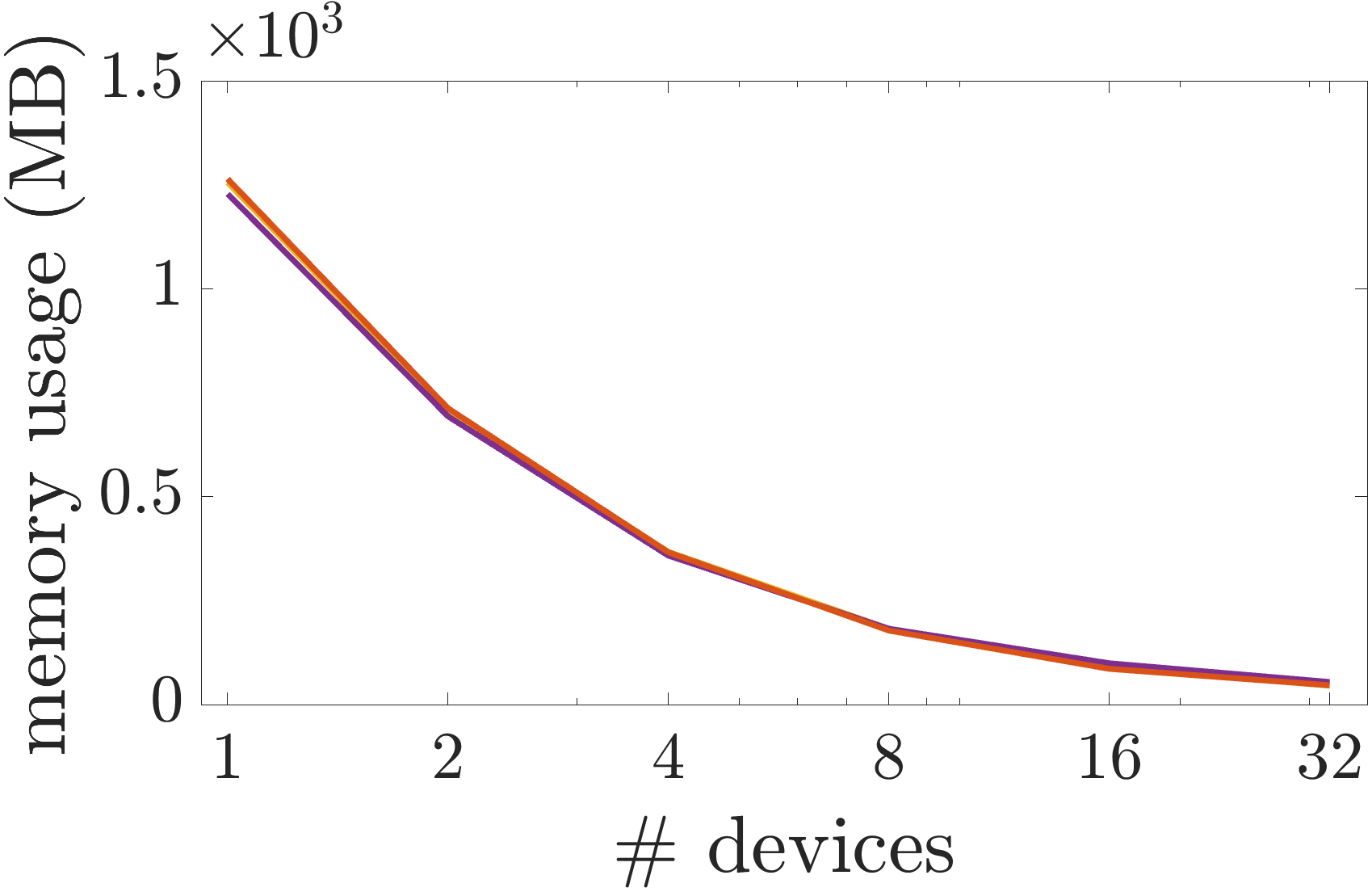}}
	\end{tabular}
	\begin{tabular}{cccc}
		\hspace{-0.5em}\subfloat[][{\sf\small Venice}]{\includegraphics[trim =0mm 0mm 0mm 0mm,width=0.2425\textwidth]{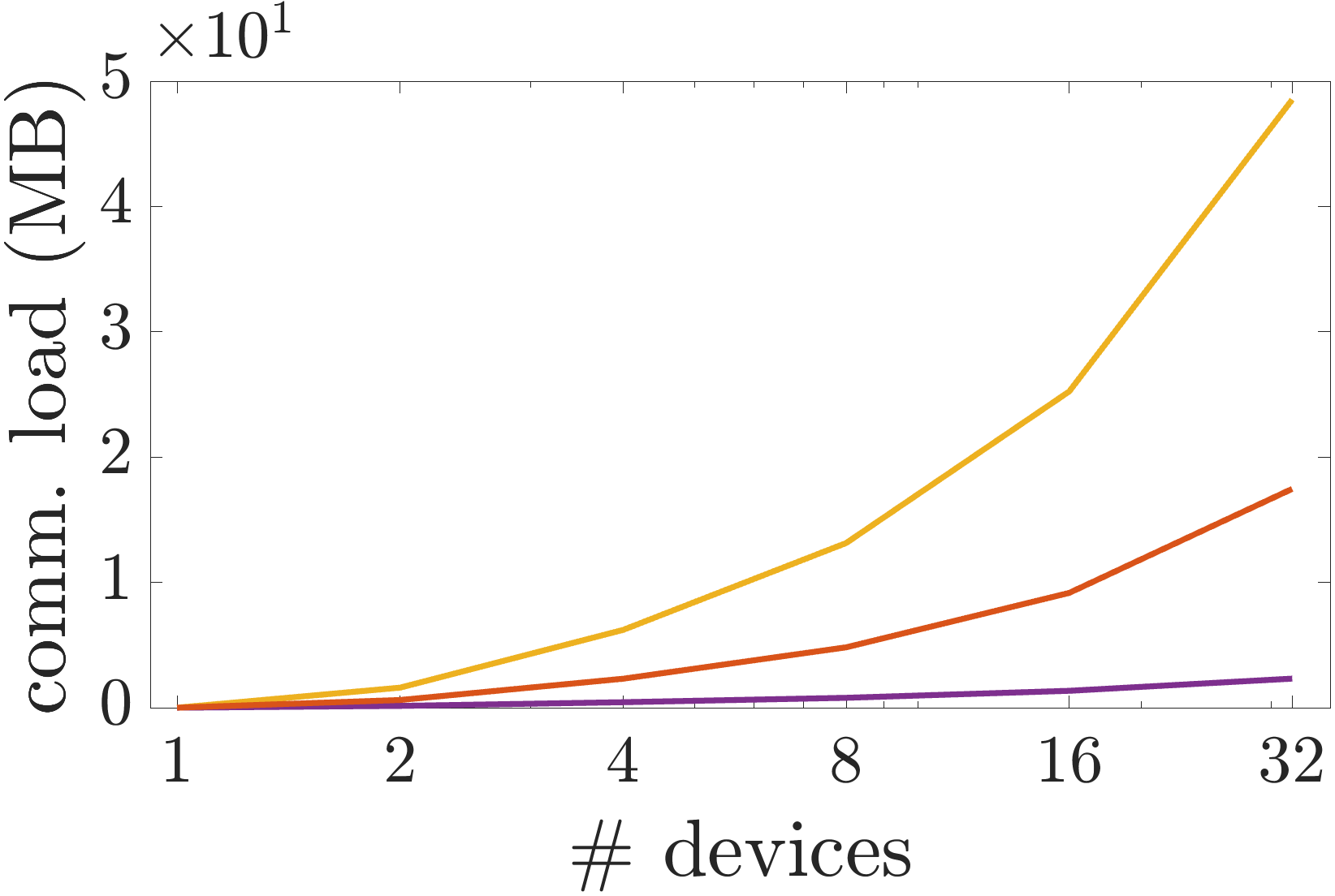}} &
		\hspace{-0.6em}\subfloat[][{\sf\small Final}]{\includegraphics[trim =0mm 0mm 0mm 0mm,width=0.2425\textwidth]{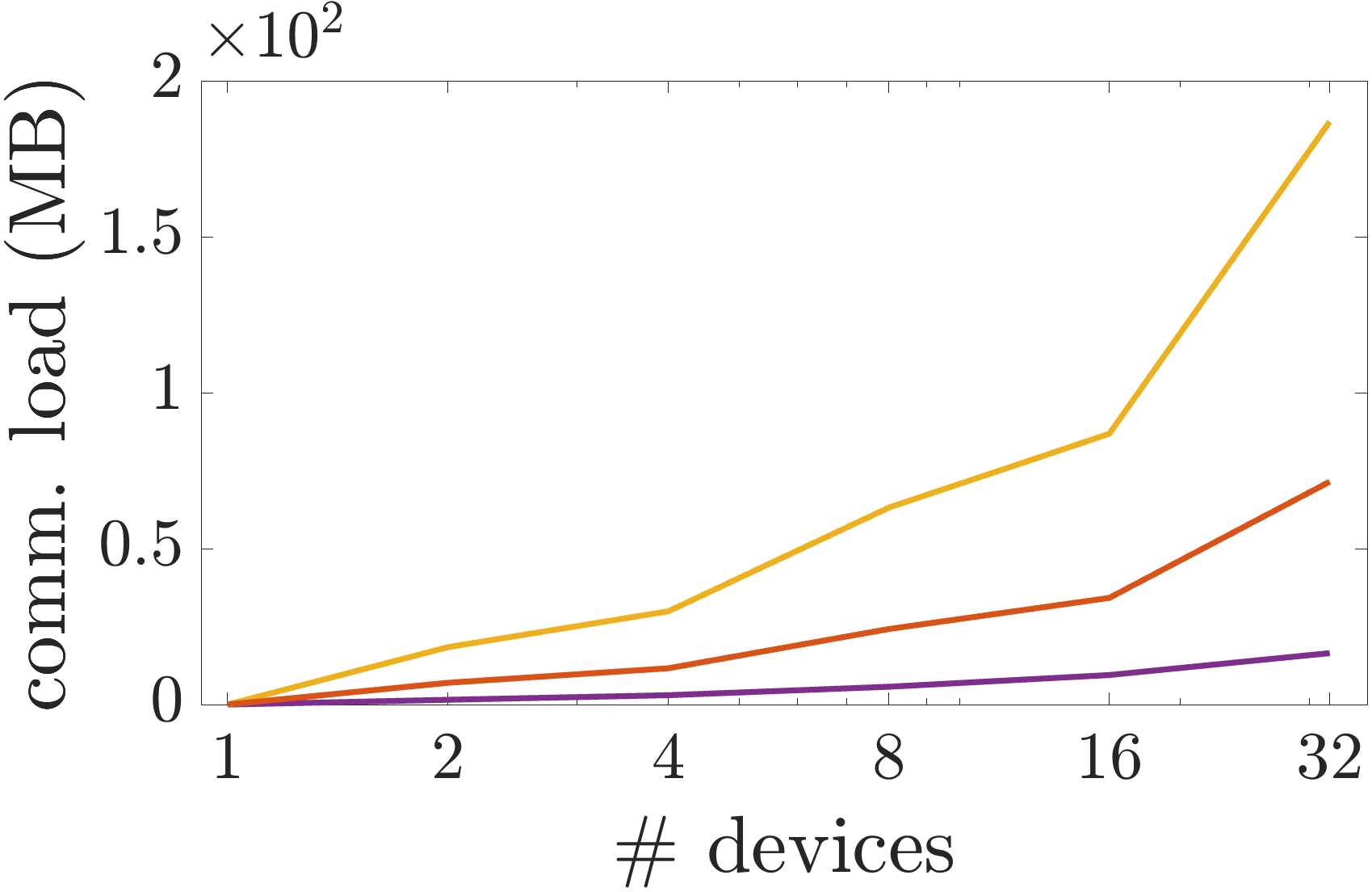}} &
		\hspace{-0.6em}\subfloat[][{\sf\small Piccadilly}]{\includegraphics[trim =0mm 0mm 0mm 0mm,width=0.2425\textwidth]{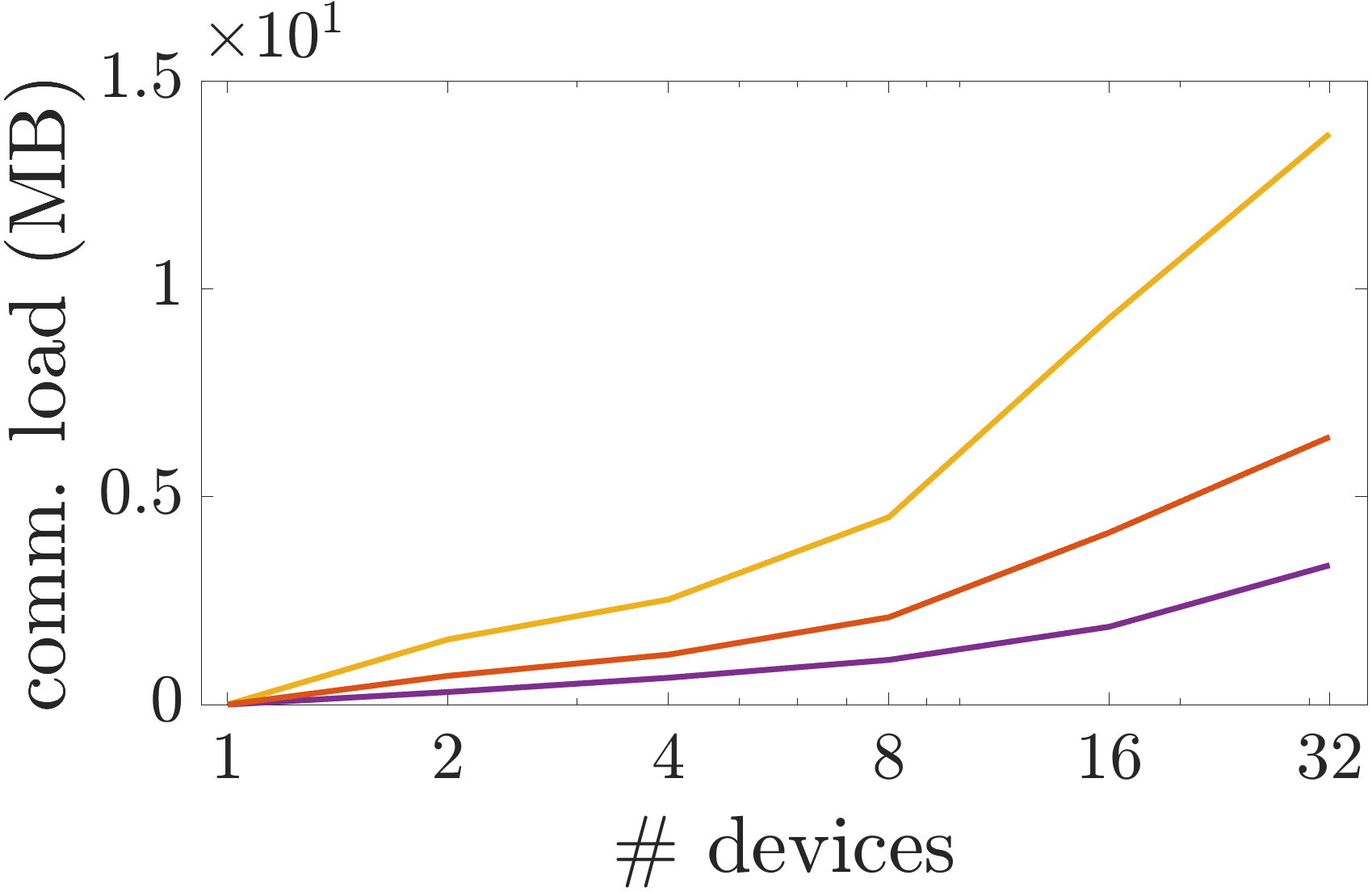}}&
		\hspace{-0.6em}\subfloat[][{\sf\small Trafalgar}]{\includegraphics[trim =0mm 0mm 0mm 0mm,width=0.2425\textwidth]{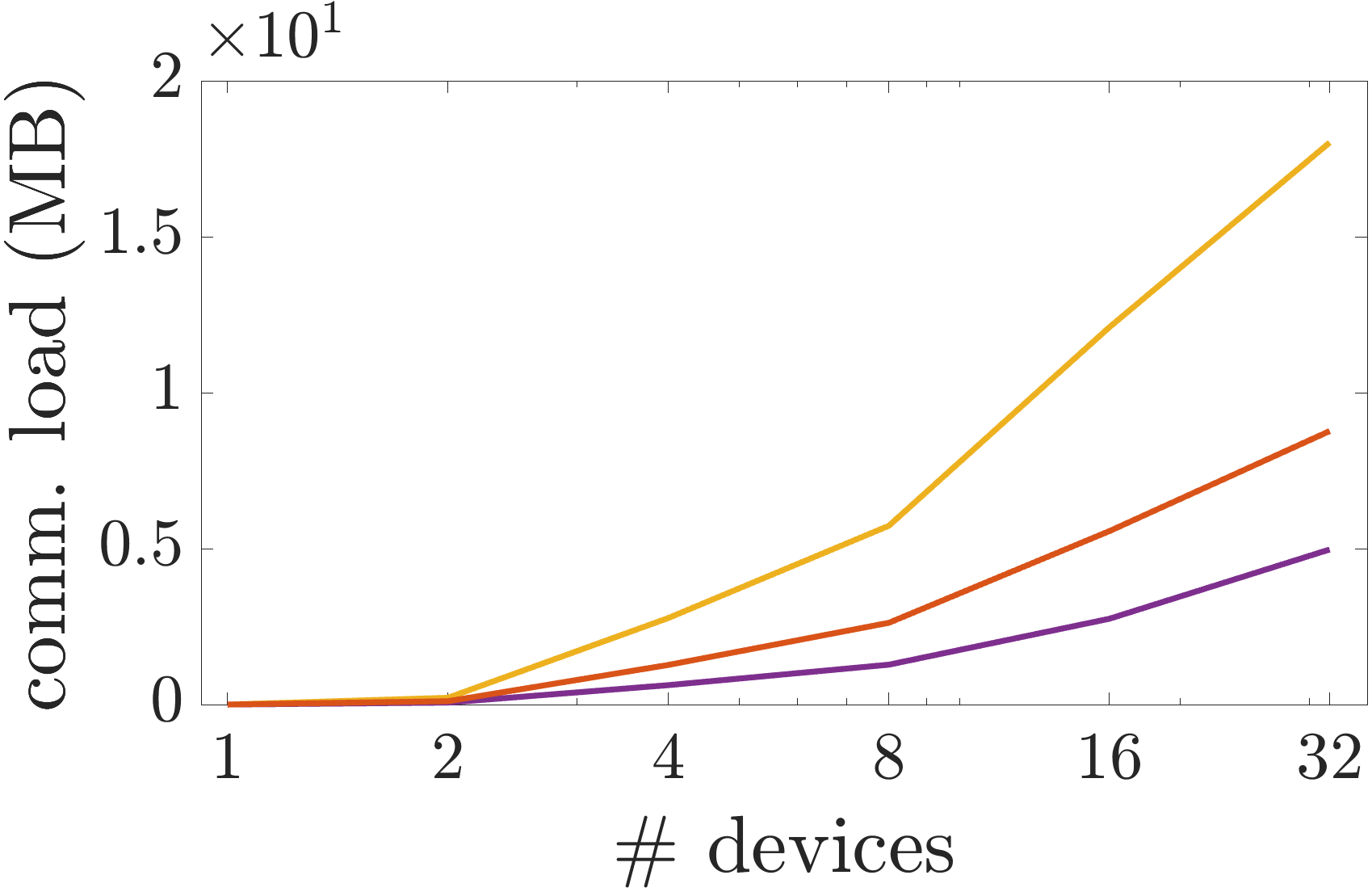}}
	\end{tabular}
	\caption{\textbf{Max memory usage per device} (a)-(d) and \textbf{total communication load per iteration} (e)-(h) for $\amm$ (ours) compared to decentralized methods on four largest datasets, {\sf\small Venice} and {\sf\small Final} in BAL, and {\sf\small Piccadilly} and {\sf\small Trafalgar} in 1DSfM.}\label{fig::mem_and_comm}
	\vspace{-1.5em}
\end{figure*}

\vspace{-0.1em}
\subsection{Memory \& Communication}

We evaluate the maximum memory usage per device and the total communication load per iteration for decentralized methods with 1, 2, 4, 8, 16, 32 devices on the four largest datasets in \cref{table::large_dataset}, i.e., {\sf\small Venice}, {\sf\small Final} in BAL  and {\sf\small Piccadilly}, {\sf\small Trafalgar} in 1DSfM.
In \cref{fig::mem_and_comm}(a) to \ref{fig::mem_and_comm}(d), we observe that, as expected, the maximum memory usage per device decreases as the number of devices increases, enabling scaling to large bundle adjustment problems.  Furthermore, $\amm$ takes almost the same memory as the others while being more efficient and accurate. 
In \cref{fig::mem_and_comm}(e) to \ref{fig::mem_and_comm}(h), we observe that the total communication load generally increases with the number of devices, also expected, since more devices require more communication. 
Since $\admm$  communicates merely cameras while $\amm$ exchanges points as well as cameras, $\amm$ has more communication load than $\admm$.
Despite the  larger communication load, the superior  efficiency and accuracy of $\amm$ compared to the other decentralized methods (see \cref{section::experiment::efficiency,section::experiment::accuracy}) makes it desirable for arbitrarily large-scale bundle adjustment.

\section{Conclusion}\label{section::conclusion}

We have presented $\amm$, a decentralized and accelerated method for large-scale bundle adjustment with provable convergence to first-order critical points. The key insight of our method is to decouple optimization variables and reduce bundle adjustment to independent subproblems with majorization minimization. This is achieved through the complete analysis of a novel reprojection error. We also implement Nesterov's acceleration for empirical speedup and adaptive restart for theoretical guarantees while maintaining decentralization. Compared to decentralized baselines \cite{ortiz2020bundle,eriksson2016consensus,zhang2017dist,demmel2020distributed}, our method has less strict assumptions for provable convergence, no need for specific parameter tuning, and more robustness to approximate solutions of subproblems. On extensive benchmarks with public datasets, our method has similar memory and communication overhead but outperforms decentralized baselines \cite{eriksson2016consensus,zhang2017dist} with a large margin in terms of accuracy and efficiency. As a result of multi-GPU implementation, our method, albeit decentralized, is more accurate on large-scale datasets with a speedup of up to 953.7x and 174.6x over centralized baselines $\ceres$ \cite{ceres-solver} and $\deeplm$ \cite{huang2021deeplm}, respectively. In the future, we are planning to theoretically relax the local minimum conditions of $\bfxakp$ in \cref{eq::update_amm,eq::update_mm}; extend our method for  bundle adjustment with lines and planes; and implement our method on multi-robot large-scale 3D reconstruction. 

\section*{Acknowledgments}
TM was supported by the National Science Foundation under award 1837515.

{\small
\bibliographystyle{IEEEtran}
\bibliography{mybib}
}

\clearpage

\begin{appendices}
\def\thesubsectiondis{\thesection.\arabic{subsection}.} 
\crefname{section}{App.}{Apps.}
\numberwithin{equation}{section}

\onecolumn

\section{ Evaluation}
We provide the complete evaluation and comparison results of our method $\amm$ (\cref{algorithm::amm})  on all the 20 datasets in BAL \cite{agarwal2010bundle} and 1DSfM \cite{wilson2014robust}; see  \cref{table::dataset}.

\begin{table}[H]
	\centering
	\setlength{\tabcolsep}{0.3em}
	\caption{Bundle adjustment datasets in BAL \cite{agarwal2010bundle} and 1DSfM \cite{wilson2014robust}.}\label{table::dataset}
	\begin{tabular}{P{0.015\textwidth}  P{0.085\textwidth} P{0.07\textwidth} P{0.075\textwidth}P{0.095\textwidth}}
		\toprule			
		\multicolumn{2}{c}{Dataset} & \# Cameras & \# Points &\# Observations  \\
		\cmidrule(lr){1-2} \cmidrule(lr){3-5}
		{\multirow{5}{*}{\rotatebox[origin=c]{90}{\scriptsize BAL \cite{agarwal2010bundle}}}}
		& {\scriptsize\sf Trafalgar} & 257 & 65132 & 225911  \\
		& {\scriptsize\sf Ladybug} & 1723 & 156502 & 678718  \\
		& {\scriptsize\sf Dubrovnik} & 356 & 226730 & 1255268  \\
		& {\scriptsize\sf Venice} & 1778 & 993923 & 5001946  \\
		& {\scriptsize\sf Final} & 13682 & 4456117 & 28987644 \\
		\cmidrule(lr){1-2} \cmidrule(lr){3-5}
		{\multirow{15}{*}{\rotatebox[origin=c]{90}{\scriptsize 1DSfM \cite{wilson2014robust}}}}
		& {\scriptsize\sf Alamo} & 571 & 151085 & 891301  \\
		& {\scriptsize\sf Ellis Island} & 234 & 29164 & 130903  \\
		& {\scriptsize\sf Gen. Markt} & 706 & 93672 & 364029  \\
		& {\scriptsize\sf M. Metropolis} & 346 & 55679 & 255987  \\
		& {\scriptsize\sf M. N. Dame} & 459 & 158005 & 860116 \\
		& {\scriptsize\sf N. Dame} & 547 & 273590 & 1534747 \\
		& {\scriptsize\sf NYC Library} & 338 & 74249 & 303955  \\
		& {\scriptsize\sf P. del Popolo} & 335 & 37609 & 195016  \\
		& {\scriptsize\sf Piccadilly} & 2289 & 209504 & 999878  \\
		& {\scriptsize\sf R. Forum} & 1063 & 265047 &  1292756  \\
		& {\scriptsize\sf T. of London} & 483 & 151328 & 797022  \\
		& {\scriptsize\sf Trafalgar} & 5032 & 388956 & 1826071  \\
		& {\scriptsize\sf U. Square} & 796 & 46066 & 230811  \\
		& {\scriptsize\sf V. Cathedral} & 836 & 265553 & 1333280 \\
		& {\scriptsize\sf Y. Minster} & 422 & 152591 & 701989  \\
		\bottomrule
	\end{tabular}
	\vspace{-0.5em}
\end{table}
\subsection{Accuracy}\label{section::app::accuracy}
We report  in \cref{table::all_accuracy} the mean reprojection errors with the trivial loss and Huber loss on all the  20 datasets in BAL \cite{agarwal2010bundle} and 1DSfM \cite{wilson2014robust} (see  \cref{table::dataset}).

\begin{table}[p]
	\renewcommand{\arraystretch}{1.02}
	\setlength{\tabcolsep}{0.625em}
	\centering
	\caption{
		\textbf{Mean reprojection errors} with the \textbf{Trivial loss} and \textbf{Huber loss} on all the  20 datasets in BAL \cite{agarwal2010bundle} and 1DSfM \cite{wilson2014robust} (see  \cref{table::dataset}).
		Decentralized methods $\dr$ \cite{eriksson2016consensus}, $\admm$ \cite{zhang2017dist}, $\amm$ (ours) are run for 1000 iterations with 4, 8, 16, 32 devices.
		Centralized methods $\ceres$ \cite{ceres-solver} and $\deeplm$ \cite{huang2021deeplm} are run for 40 iterations with single device as reference ($\deeplm$ does not support Huber loss).
		On each dataset (row), any decentralized method with best result is \textbf{bold}, and outperforming $\ceres$ and $\deeplm$ is {\color{red} red}.
		\textbf{$\amm$ (ours) achieves lowest reprojection error between decentralized methods and mostly outperforms centralized methods}.
	}
	\label{table::all_accuracy}
	\centering
	\setlength{\tabcolsep}{0.55em}
	\renewcommand{\arraystretch}{1.0}
	\label{table::comp_trivial_all}
	\begin{tabular}{ccccccccccccccccc}
		\toprule
		\multicolumn{2}{c}{}& \multicolumn{15}{c}{Mean Reprojection Error with the Trivial Loss}\\
		\cmidrule(lr){3-17}
		\multicolumn{2}{c}{Dataset} &\multirow{2}{*}{Init} & \multirow{2}{*}{$\ceres$} & \multirow{2}{*}{$\deeplm$} &  \multicolumn{3}{c}{4 Devices} & \multicolumn{3}{c}{8 Devices} & \multicolumn{3}{c}{16 Devices} & \multicolumn{3}{c}{32 Devices}\\
		\cmidrule(lr){6-8} \cmidrule(lr){9-11} \cmidrule(lr){12-14} \cmidrule(lr){15-17}
		\multicolumn{2}{c}{} & & & & {\,\scriptsize $\dr$\,} & {\scriptsize $\admm$} & {\scriptsize $\amm$} & {\,\scriptsize $\dr$\,} & {\scriptsize $\admm$} & {\scriptsize $\amm$} & {\,\scriptsize $\dr$\,} & {\scriptsize $\admm$}  & {\scriptsize $\amm$}  & {\,\scriptsize $\dr$\,} & {\scriptsize $\admm$} &  {\scriptsize $\amm$}\\
		\cmidrule(lr){1-2} \cmidrule(lr){3-17}
		{\multirow{5}{*}{\rotatebox[origin=c]{90}{\scriptsize BAL \cite{agarwal2010bundle}}}} 
		& {\scriptsize\sf Trafalgar} & {1.527} & {0.452} &  {0.453} &{0.493} & {0.487} & \textbf{0.453} & {0.494} & {0.491} & \textbf{0.455} & {0.496} & {0.492} & \textbf{0.455} & {0.497} & {0.501} & \textbf{0.458} \\
		& {\scriptsize\sf Ladybug} & {10.48} & {0.707} &  {0.710} &{0.837} & {\color{red}0.698} & \textbf{\color{red}0.690} & {0.846} & {\color{red}0.703} & \textbf{\color{red}0.690} & {0.850} & {0.711} & \textbf{\color{red}0.690} & {0.859} & {0.723} & \textbf{\color{red}0.690} \\
		& {\scriptsize\sf Dubrovnik} & {3.765} & {0.423} &  {0.423} &{1.998} & {0.473} & \textbf{\color{red}0.423} & {2.038} & {0.484} & \textbf{\color{red}0.423} & {2.051} & {0.490} & \textbf{0.424} & {2.067} & {0.496} & \textbf{0.424} \\
		& {\scriptsize\sf Venice} & {26.33} & {0.468} &  {0.466} &{0.515} & {0.516} & \textbf{\color{red}0.465} & {0.515} & {0.525} & \textbf{\color{red}0.465} & {0.516} & {0.532} & \textbf{\color{red}0.466} & {0.517} & {0.544} & \textbf{0.473} \\
		& {\scriptsize\sf Final} & {12.57} & {0.855} &  {0.848} &{2.017} & {0.876} & \textbf{\color{red}0.828} & {2.042} & {0.903} & \textbf{\color{red}0.828} & {2.053} & {0.908} & \textbf{\color{red}0.829} & {2.123} & {0.921} & \textbf{\color{red}0.833} \\
        \cmidrule(lr){1-2} \cmidrule(lr){3-17}
		{\multirow{15}{*}{\rotatebox[origin=c]{90}{\scriptsize 1DSfM \cite{wilson2014robust}}}} 
		& {\scriptsize\sf Alamo} & {3.616} & {1.152} &  {1.203} &{1.540} & {1.347} & \textbf{\color{red}1.135} & {1.550} & {1.325} & \textbf{\color{red}1.136} & {1.575} & {1.438} & \textbf{\color{red}1.138} & {1.575} & {1.445} & \textbf{\color{red}1.137} \\
		& {\scriptsize\sf Ellis Island} & {11.98} & {5.061} &  {5.022} &{6.862} & {6.015} & \textbf{\color{red}5.016} & {6.926} & {6.469} & \textbf{\color{red}4.973} & {6.998} & {6.421} & \textbf{5.309} & {7.058} & {6.375} & \textbf{5.043} \\
		& {\scriptsize\sf Gen. Markt} & {8.181} & {4.024} &  {4.028} &{4.835} & {4.390} & \textbf{\color{red}3.977} & {4.870} & {4.616} & \textbf{\color{red}3.986} & {4.900} & {4.646} & \textbf{\color{red}3.988} & {4.942} & {4.688} & \textbf{\color{red}4.017} \\
		& {\scriptsize\sf M. Metro.} & {4.599} & {2.424} &  {2.378} &{2.784} & {2.522} & \textbf{\color{red}2.376} & {2.797} & {2.554} & \textbf{2.382} & {2.808} & {2.609} & \textbf{2.380} & {2.818} & {2.695} & \textbf{2.387} \\
		& {\scriptsize\sf M. N. Dame} & {7.335} & {3.704} &  {3.754} &{4.421} & {3.763} & \textbf{\color{red}3.697} & {4.452} & {3.802} & \textbf{\color{red}3.702} & {4.471} & {3.858} & \textbf{\color{red}3.699} & {4.492} & {3.963} & \textbf{\color{red}3.700} \\
		& {\scriptsize\sf N. Dame} & {9.442} & {3.312} &  {3.312} &{3.712} & {3.526} & \textbf{3.316} & {3.741} & {3.557} & \textbf{\color{red}3.312} & {3.762} & {3.618} & \textbf{\color{red}3.307} & {3.785} & {3.693} & \textbf{3.323} \\
		& {\scriptsize\sf NYC Library} & {5.140} & {1.887} &  {1.888} &{2.405} & {1.997} & \textbf{\color{red}1.863} & {2.412} & {2.019} & \textbf{\color{red}1.840} & {2.444} & {2.089} & \textbf{\color{red}1.849} & {2.461} & {2.132} & \textbf{\color{red}1.861} \\
		& {\scriptsize\sf P. del Popolo} & {6.552} & {2.438} &  {2.377} &{3.014} & {2.544} & \textbf{\color{red}2.336} & {3.037} & {2.616} & \textbf{\color{red}2.353} & {3.049} & {2.827} & \textbf{2.433} & {3.065} & {2.907} & \textbf{\color{red}2.372} \\
		& {\scriptsize\sf Piccadily} & {11.26} & {4.379} &  {4.424} &{6.040} & {4.634} & \textbf{\color{red}4.218} & {6.073} & {4.761} & \textbf{\color{red}4.217} & {6.148} & {4.850} & \textbf{\color{red}4.249} & {6.193} & {5.027} & \textbf{\color{red}4.312} \\
		& {\scriptsize\sf R. Forum} & {6.407} & {1.211} &  {1.177} &{2.194} & {1.381} & \textbf{\color{red}1.152} & {2.211} & {1.446} & \textbf{\color{red}1.161} & {2.232} & {1.516} & \textbf{\color{red}1.171} & {2.251} & {1.585} & \textbf{\color{red}1.168} \\
		& {\scriptsize\sf T. of London} & {4.399} & {0.668} &  {0.667} &{1.169} & {0.791} & \textbf{\color{red}0.656} & {1.184} & {0.789} & \textbf{\color{red}0.653} & {1.201} & {0.845} & \textbf{\color{red}0.659} & {1.226} & {0.889} & \textbf{\color{red}0.665} \\
		& {\scriptsize\sf Trafalgar} & {10.77} & {3.702} &  {3.886} &{5.065} & {3.994} & \textbf{\color{red}3.604} & {5.263} & {4.111} & \textbf{\color{red}3.601} & {5.169} & {4.185} & \textbf{\color{red}3.624} & {5.193} & {4.395} & \textbf{\color{red}3.657} \\
		& {\scriptsize\sf U. Square} & {10.56} & {3.992} &  {3.977} &{5.376} & {4.233} & \textbf{\color{red}3.893} & {5.426} & {4.401} & \textbf{\color{red}3.888} & {5.485} & {4.519} & \textbf{\color{red}3.925} & {5.538} & {4.619} & \textbf{\color{red}3.975} \\
		& {\scriptsize\sf V. Cathedral} & {9.506} & {1.957} &  {1.959} &{3.119} & {2.068} & \textbf{\color{red}1.777} & {3.139} & {2.021} & \textbf{\color{red}1.770} & {3.273} & {2.057} & \textbf{\color{red}1.831} & {3.328} & {2.216} & \textbf{\color{red}1.834} \\
		& {\scriptsize\sf Y. Minster} & {8.929} & {2.061} &  {1.986} &{3.242} & {2.171} & \textbf{\color{red}1.930} & {3.301} & {2.411} & \textbf{\color{red}1.923} & {3.352} & {2.242} & \textbf{\color{red}1.952} & {3.398} & {2.334} & \textbf{\color{red}1.933} \\
		\bottomrule
	\end{tabular}
	
	\vfill
	\vspace{5mm}
	
	\begin{tabular}{ccccccccccccccccc}
		\toprule
		\multicolumn{2}{c}{}& \multicolumn{15}{c}{Mean Reprojection Error with the Huber Loss}\\
		\cmidrule(lr){3-17}
		\multicolumn{2}{c}{Dataset} &\multirow{2}{*}{Init} & \multirow{2}{*}{$\ceres$} & \multirow{2}{*}{$\deeplm$} &  \multicolumn{3}{c}{4 Devices} & \multicolumn{3}{c}{8 Devices} & \multicolumn{3}{c}{16 Devices} & \multicolumn{3}{c}{32 Devices}\\
		\cmidrule(lr){6-8} \cmidrule(lr){9-11} \cmidrule(lr){12-14} \cmidrule(lr){15-17}
		\multicolumn{2}{c}{} & & & & {\,\scriptsize $\dr$\,} & {\scriptsize $\admm$} & {\scriptsize $\amm$} & {\,\scriptsize $\dr$\,} & {\scriptsize $\admm$} & {\scriptsize $\amm$} & {\,\scriptsize $\dr$\,} & {\scriptsize $\admm$}  & {\scriptsize $\amm$}  & {\,\scriptsize $\dr$\,} & {\scriptsize $\admm$} &  {\scriptsize $\amm$}\\
		\cmidrule(lr){1-2} \cmidrule(lr){3-17}
		{\multirow{5}{*}{\rotatebox[origin=c]{90}{\scriptsize BAL \cite{agarwal2010bundle}}}} 
		& {\scriptsize\sf Trafalgar} & {1.003} & {0.452}  & - & {0.493} & {0.487} & \textbf{0.454} & {0.494} & {0.491} & \textbf{0.455} & {0.496} & {0.492} & \textbf{0.455} & {0.497} & {0.501} & \textbf{0.457} \\
		& {\scriptsize\sf Ladybug} & {3.267} & {0.704}  & - & {0.834} & {\color{red}0.698} & \textbf{\color{red}0.690} & {0.841} & {\color{red}0.703} & \textbf{\color{red}0.690} & {0.844} & {0.712} & \textbf{\color{red}0.690} & {0.848} & {0.723} & \textbf{\color{red}0.690} \\
		& {\scriptsize\sf Dubrovnik} & {3.721} & {0.423}  & - & {1.996} & {0.473} & \textbf{\color{red}0.423} & {2.036} & {0.483} & \textbf{\color{red}0.423} & {2.050} & {0.490} & \textbf{0.424} & {2.065} & {0.496} & \textbf{0.424} \\
		& {\scriptsize\sf Venice} & {4.750} & {0.468}  & - & {0.515} & {0.516} & \textbf{\color{red}0.465} & {0.515} & {0.524} & \textbf{\color{red}0.465} & {0.516} & {0.530} & \textbf{\color{red}0.465} & {0.517} & {0.544} & \textbf{0.473} \\
		& {\scriptsize\sf Final} & {7.573} & {0.815}  & - & {1.995} & {0.859} & \textbf{\color{red}0.796} & {2.020} & {0.882} & \textbf{\color{red}0.795} & {2.032} & {0.886} & \textbf{\color{red}0.796} & {2.101} & {0.898} & \textbf{\color{red}0.815} \\
		\cmidrule(lr){1-2} \cmidrule(lr){3-17}
		{\multirow{15}{*}{\rotatebox[origin=c]{90}{\scriptsize 1DSfM \cite{wilson2014robust}}}} 
		& {\scriptsize\sf Alamo} & {3.417} & {1.127}  & - & {1.436} & {1.259} & \textbf{\color{red}1.050} & {1.446} & {1.238} & \textbf{\color{red}1.052} & {1.468} & {1.348} & \textbf{\color{red}1.052} & {1.471} & {1.355} & \textbf{\color{red}1.053} \\
		& {\scriptsize\sf Ellis Island} & {11.75} & {4.906}  & - & {6.751} & {5.823} & \textbf{\color{red}4.866} & {6.802} & {5.806} & \textbf{\color{red}4.832} & {6.881} & {6.107} & \textbf{5.191} & {6.939} & {6.152} & \textbf{\color{red}4.895} \\
		& {\scriptsize\sf Gen. Markt} & {8.031} & {3.942}  & - & {4.777} & {4.320} & \textbf{\color{red}3.893} & {4.812} & {4.525} & \textbf{\color{red}3.914} & {4.842} & {4.561} & \textbf{\color{red}3.925} & {4.886} & {4.598} & \textbf{\color{red}3.942} \\
		& {\scriptsize\sf M. Metro.} & {4.445} & {2.344}  & - & {2.711} & {2.444} & \textbf{\color{red}2.300} & {2.725} & {2.477} & \textbf{\color{red}2.311} & {2.734} & {2.534} & \textbf{\color{red}2.308} & {2.744} & {2.620} & \textbf{\color{red}2.315} \\
		& {\scriptsize\sf M. N. Dame} & {6.961} & {3.527}  & - & {4.216} & {3.572} & \textbf{\color{red}3.506} & {4.243} & {3.611} & \textbf{\color{red}3.510} & {4.261} & {3.666} & \textbf{\color{red}3.507} & {4.279} & {3.769} & \textbf{\color{red}3.508} \\
		& {\scriptsize\sf N. Dame} & {8.810} & {2.994}  & - & {3.395} & {3.209} & \textbf{\color{red}2.992} & {3.424} & {3.238} & \textbf{\color{red}2.984} & {3.446} & {3.301} & \textbf{\color{red}2.984} & {3.469} & {3.381} & \textbf{\color{red}2.985} \\
		& {\scriptsize\sf NYC Library} & {5.037} & {1.820}  & - & {2.354} & {1.943} & \textbf{\color{red}1.802} & {2.363} & {1.968} & \textbf{\color{red}1.795} & {2.393} & {2.036} & \textbf{\color{red}1.806} & {2.410} & {2.076} & \textbf{\color{red}1.820} \\
		& {\scriptsize\sf P. del Popolo} & {6.437} & {2.319}  & - & {2.942} & {2.485} & \textbf{\color{red}2.283} & {2.965} & {2.550} & \textbf{\color{red}2.297} & {2.974} & {2.763} & \textbf{2.383} & {2.990} & {2.815} & \textbf{\color{red}2.317} \\
		& {\scriptsize\sf Piccadily} & {10.34} & {4.136}  & - & {5.739} & {4.334} & \textbf{\color{red}3.951} & {5.776} & {4.441} & \textbf{\color{red}3.963} & {5.851} & {4.564} & \textbf{\color{red}3.961} & {5.892} & {4.689} & \textbf{\color{red}3.978} \\
		& {\scriptsize\sf R. Forum} & {6.247} & {1.136}  & - & {2.149} & {1.296} & \textbf{\color{red}1.120} & {2.163} & {1.406} & \textbf{\color{red}1.127} & {2.187} & {1.471} & \textbf{\color{red}1.133} & {2.204} & {1.516} & \textbf{\color{red}1.133} \\
		& {\scriptsize\sf T. of London} & {4.374} & {0.679}  & - & {1.158} & {0.785} & \textbf{\color{red}0.645} & {1.172} & {0.782} & \textbf{\color{red}0.642} & {1.189} & {0.838} & \textbf{\color{red}0.649} & {1.214} & {0.881} & \textbf{\color{red}0.655} \\
		& {\scriptsize\sf Trafalgar} & {9.900} & {3.544}  & - & {4.816} & {3.722} & \textbf{\color{red}3.363} & {4.939} & {3.876} & \textbf{\color{red}3.377} & {5.000} & {3.952} & \textbf{\color{red}3.401} & {4.944} & {4.149} & \textbf{\color{red}3.438} \\
		& {\scriptsize\sf U. Square} & {10.18} & {3.987}  & - & {5.242} & {4.096} & \textbf{\color{red}3.755} & {5.291} & {4.261} & \textbf{\color{red}3.763} & {5.348} & {4.366} & \textbf{\color{red}3.763} & {5.400} & {4.470} & \textbf{\color{red}3.818} \\
		& {\scriptsize\sf V. Cathedral} & {9.085} & {1.748}  & - & {3.008} & {1.928} & \textbf{\color{red}1.709} & {3.027} & {1.923} & \textbf{\color{red}1.681} & {3.156} & {1.956} & \textbf{\color{red}1.740} & {3.206} & {2.088} & \textbf{\color{red}1.734} \\
		& {\scriptsize\sf Y. Minster} & {8.586} & {1.910}  & - & {3.145} & {2.038} & \textbf{\color{red}1.876} & {3.203} & {2.272} & \textbf{\color{red}1.887} & {3.248} & {2.104} & \textbf{\color{red}1.875} & {3.294} & {2.153} & \textbf{\color{red}1.866} \\
		
		\bottomrule
	\end{tabular}
\end{table}	

\subsection{Efficiency}
We report the optimization time of our method $\amm$ (\cref{algorithm::amm}) with 4 and 8 devices and centralized  methods $\ceres$\cite{ceres-solver} and $\deeplm$\cite{huang2021deeplm} to attain the reference/target mean reprojection errors $F_{\text{ref}}$ and $F_\Delta(p)$ on the largest datasets of more than 700 cameras in \cref{table::large_dataset}. Each device uses one GPU and we consider together both the computation and communication time. We choose $F_{\text{ref}}$ to be the smallest objective value separately achieved by $\ceres$ and $\deeplm$ for 40 iterations and $\Delta=2\times 10^{-4}$ to compute $F_{\Delta}(p)$ in \cref{eq::Fdelta}. The results are in \cref{table:ceres_h,table:ceres_t,table:lm_t}.

\begin{table}
	\centering
		\renewcommand{\arraystretch}{1.0}
	\caption{
		 \textbf{Optimization time} with the \textbf{trivial loss}  of our method $\amm$ (\cref{algorithm::amm}) and centralized  method $\ceres$\cite{ceres-solver} to attain the reference/target mean reprojection errors $F_{\text{ref}}$ and $F_\Delta(p)$ on the largest datasets of more than 700 cameras in \cref{table::large_dataset}. The reference reprojection errors are from $\ceres$ and $\Delta=2.5\times10^{-4}$. 
	}\label{table:ceres_t}
	\begin{tabular}{cccccccccc}
		\toprule
		 & & \multicolumn{2}{c}{Error} &  \multicolumn{6}{c}{Time (seconds)}\\
		\cmidrule(lr){3-4} \cmidrule(lr){5-10}
		 \multicolumn{2}{c}{Dataset} & \multirow{2}{*}{$F_{\Delta}(p)$} & \multirow{2}{*}{$F_{\text{ref}}$} &  \multicolumn{2}{c}{$\ceres$}  &  \multicolumn{2}{c}{$\amm$ with 4 Devices} &  \multicolumn{2}{c}{$\amm$ with 8 Devices} \\ 
		 \cmidrule(lr){5-6} \cmidrule(lr){7-8} \cmidrule(lr){9-10} 
		 & & & & $T_\Delta(p)$ & $T_{\text{ref}}$ & $T_\Delta(p)$ & $T_{\text{ref}}$ & $T_\Delta(p)$ & $T_{\text{ref}}$ \\
		\cmidrule(lr){1-2} \cmidrule(lr){3-4} \cmidrule(lr){5-10}
		{\multirow{3}{*}{\rotatebox[origin=c]{90}{\scriptsize BAL \cite{agarwal2010bundle}}}}
		& {\scriptsize\sf Ladybug}  & {0.710} &  {0.707} & $4.72 \times 10^{1}$ & $6.78 \times 10^{1}$ & $7.10 \times 10^{-2}$ & $8.22 \times 10^{-2}$ & $6.93 \times 10^{-2}$ & $8.03 \times 10^{-2}$ \\
		& {\scriptsize\sf Venice}  & {0.474} &  {0.468} & $2.92 \times 10^{2}$ & $1.47 \times 10^{3}$ & $4.70 \times 10^{0}$ & $1.22 \times 10^{1}$ & $2.81 \times 10^{0}$ & $6.65 \times 10^{0}$ \\
		& {\scriptsize\sf Final}  & {0.858} &  {0.855} & $2.94 \times 10^{3}$ & $3.30 \times 10^{3}$ & $1.48 \times 10^{1}$ & $1.59 \times 10^{1}$ & $1.07 \times 10^{1}$ & $1.14 \times 10^{1}$ \\
		\cmidrule(lr){1-2} \cmidrule(lr){3-4} \cmidrule(lr){5-10}
		{\multirow{6}{*}{\rotatebox[origin=c]{90}{\scriptsize 1DSfM \cite{wilson2014robust}}}}
		& {\scriptsize\sf Gen. Markt}  & {4.025} &  {4.024} & $5.41 \times 10^{1}$ & $5.41 \times 10^{1}$ & $5.36 \times 10^{-1}$ & $5.40 \times 10^{-1}$ & $8.17 \times 10^{-1}$ & $8.27 \times 10^{-1}$ \\
		& {\scriptsize\sf Piccadily}  & {4.380} &  {4.379} & $2.16 \times 10^{2}$ & $2.16 \times 10^{2}$ & $1.83 \times 10^{0}$ & $1.84 \times 10^{0}$ & $1.26 \times 10^{0}$ & $1.27 \times 10^{0}$ \\
		& {\scriptsize\sf R. Forum}  & {1.212} &  {1.211} & $2.87 \times 10^{2}$ & $3.02 \times 10^{2}$ & $2.05 \times 10^{0}$ & $2.08 \times 10^{0}$ & $1.55 \times 10^{0}$ & $1.58 \times 10^{0}$ \\
		& {\scriptsize\sf Trafalgar}  & {3.703} &  {3.702} & $5.13 \times 10^{2}$ & $5.13 \times 10^{2}$ & $3.41 \times 10^{0}$ & $3.46 \times 10^{0}$ & $2.11 \times 10^{0}$ & $2.14 \times 10^{0}$ \\
		& {\scriptsize\sf U. Square}  & {3.994} &  {3.992} & $5.34 \times 10^{1}$ & $5.34 \times 10^{1}$ & $8.90 \times 10^{-1}$ & $9.01 \times 10^{-1}$ & $1.06 \times 10^{0}$ & $1.07 \times 10^{0}$ \\
		& {\scriptsize\sf V. Cathedral}  & {1.959} &  {1.957} & $2.96 \times 10^{2}$ & $3.16 \times 10^{2}$ & $9.50 \times 10^{-1}$ & $9.60 \times 10^{-1}$ & $6.15 \times 10^{-1}$ & $6.20 \times 10^{-1}$ \\
		\bottomrule
	\end{tabular}
\end{table}

\begin{table}
	\centering
	\renewcommand{\arraystretch}{1.0}
	\caption{
		\textbf{Optimization time} with the \textbf{trivial loss}  of our method $\amm$ (\cref{algorithm::amm}) and centralized  method $\deeplm$\cite{huang2021deeplm} to attain the reference/target mean reprojection errors $F_{\text{ref}}$ and $F_\Delta(p)$ on the largest datasets of more than 700 cameras in \cref{table::large_dataset}. The reference reprojection errors are from $\deeplm$ and $\Delta=2.5\times10^{-4}$. 
	}\label{table:lm_t}
	\begin{tabular}{cccccccccc}
		\toprule
		& & \multicolumn{2}{c}{Error} &  \multicolumn{6}{c}{Time (seconds)}\\
		\cmidrule(lr){3-4} \cmidrule(lr){5-10}
		\multicolumn{2}{c}{Dataset} & \multirow{2}{*}{$F_{\Delta}(p)$} & \multirow{2}{*}{$F_{\text{ref}}$} &  \multicolumn{2}{c}{$\deeplm$}  &  \multicolumn{2}{c}{$\amm$ with 4 Devices} &  \multicolumn{2}{c}{$\amm$ with 8 Devices} \\ 
		\cmidrule(lr){5-6} \cmidrule(lr){7-8} \cmidrule(lr){9-10} 
		& & & & $T_\Delta(p)$ & $T_{\text{ref}}$ & $T_\Delta(p)$ & $T_{\text{ref}}$ & $T_\Delta(p)$ & $T_{\text{ref}}$ \\
		\cmidrule(lr){1-2} \cmidrule(lr){3-4} \cmidrule(lr){5-10}
		{\multirow{3}{*}{\rotatebox[origin=c]{90}{\scriptsize BAL \cite{agarwal2010bundle}}}}
		& {\scriptsize\sf Ladybug}  & {0.712} &  {0.710} & $1.14 \times 10^{1}$ & $1.76 \times 10^{1}$ & $5.47 \times 10^{-2}$ & $6.56 \times 10^{-2}$ & $6.20 \times 10^{-2}$ & $6.93 \times 10^{-2}$ \\
		& {\scriptsize\sf Venice}  & {0.473} &  {0.466} & $4.07 \times 10^{1}$ & $1.42 \times 10^{2}$ & $6.15 \times 10^{0}$ & $1.56 \times 10^{1}$ & $3.61 \times 10^{0}$ & $8.50 \times 10^{0}$ \\
		& {\scriptsize\sf Final}  & {0.851} &  {0.848} & $4.53 \times 10^{2}$ & $5.21 \times 10^{2}$ & $1.83 \times 10^{1}$ & $2.00 \times 10^{1}$ & $1.27 \times 10^{1}$ & $1.38 \times 10^{1}$ \\
		\cmidrule(lr){1-2} \cmidrule(lr){3-4} \cmidrule(lr){5-10}
		{\multirow{6}{*}{\rotatebox[origin=c]{90}{\scriptsize 1DSfM \cite{wilson2014robust}}}}
		& {\scriptsize\sf Gen. Markt}  & {4.029} &  {4.028} & $1.27 \times 10^{1}$ & $1.27 \times 10^{1}$ & $5.16 \times 10^{-1}$ & $5.19 \times 10^{-1}$ & $7.74 \times 10^{-1}$ & $7.87 \times 10^{-1}$ \\
		& {\scriptsize\sf Piccadily}  & {4.425} &  {4.424} & $2.53 \times 10^{1}$ & $2.60 \times 10^{1}$ & $1.38 \times 10^{0}$ & $1.40 \times 10^{0}$ & $1.06 \times 10^{0}$ & $1.06 \times 10^{0}$ \\
		& {\scriptsize\sf R. Forum}  & {1.178} &  {1.177} & $3.39 \times 10^{1}$ & $3.48 \times 10^{1}$ & $2.97 \times 10^{0}$ & $3.07 \times 10^{0}$ & $2.91 \times 10^{0}$ & $3.02 \times 10^{0}$ \\
		& {\scriptsize\sf Trafalgar}  & {3.888} &  {3.886} & $4.07 \times 10^{1}$ & $4.07 \times 10^{1}$ & $1.35 \times 10^{0}$ & $1.37 \times 10^{0}$ & $9.67 \times 10^{-1}$ & $9.76 \times 10^{-1}$ \\
		& {\scriptsize\sf U. Square}  & {3.978} &  {3.977} & $1.02 \times 10^{1}$ & $1.04 \times 10^{1}$ & $1.03 \times 10^{0}$ & $1.05 \times 10^{0}$ & $1.16 \times 10^{0}$ & $1.18 \times 10^{0}$ \\
		& {\scriptsize\sf V. Cathedral}  & {1.961} &  {1.959} & $3.10 \times 10^{1}$ & $3.19 \times 10^{1}$ & $9.40 \times 10^{-1}$ & $9.50 \times 10^{-1}$ & $6.02 \times 10^{-1}$ & $6.09 \times 10^{-1}$ \\
		\bottomrule
	\end{tabular}
\end{table}

\begin{table}
	\centering
	\renewcommand{\arraystretch}{1.0}
	\caption{
		\textbf{Optimization time} with the \textbf{Huber loss}  of our method $\amm$ (\cref{algorithm::amm}) and centralized  method $\ceres$\cite{ceres-solver} to attain the reference/target mean reprojection errors $F_{\text{ref}}$ and $F_\Delta(p)$ on the largest datasets of more than 700 cameras in \cref{table::large_dataset}. The reference reprojection errors are from $\ceres$ and $\Delta=2.5\times10^{-4}$. 
	}\label{table:ceres_h}
	\begin{tabular}{cccccccccc}
		\toprule
		& & \multicolumn{2}{c}{Error} &  \multicolumn{6}{c}{Time (seconds)}\\
		\cmidrule(lr){3-4} \cmidrule(lr){5-10}
		\multicolumn{2}{c}{Dataset} & \multirow{2}{*}{$F_{\Delta}(p)$} & \multirow{2}{*}{$F_{\text{ref}}$} &  \multicolumn{2}{c}{$\ceres$}  &  \multicolumn{2}{c}{$\amm$ with 4 Devices} &  \multicolumn{2}{c}{$\amm$ with 8 Devices} \\ 
		\cmidrule(lr){5-6} \cmidrule(lr){7-8} \cmidrule(lr){9-10} 
		& & & & $T_\Delta(p)$ & $T_{\text{ref}}$ & $T_\Delta(p)$ & $T_{\text{ref}}$ & $T_\Delta(p)$ & $T_{\text{ref}}$ \\
		\cmidrule(lr){1-2} \cmidrule(lr){3-4} \cmidrule(lr){5-10}
		{\multirow{3}{*}{\rotatebox[origin=c]{90}{\scriptsize BAL \cite{agarwal2010bundle}}}}
		& {\scriptsize\sf Ladybug}  & {0.705} &  {0.704} & $7.57 \times 10^{1}$ & $1.67 \times 10^{2}$ & $8.77 \times 10^{-2}$ & $9.33 \times 10^{-2}$ & $7.93 \times 10^{-2}$ & $7.93 \times 10^{-2}$ \\
		& {\scriptsize\sf Venice}  & {0.469} &  {0.468} & $1.07 \times 10^{3}$ & $1.31 \times 10^{3}$ & $1.05 \times 10^{1}$ & $1.20 \times 10^{1}$ & $6.41 \times 10^{0}$ & $7.19 \times 10^{0}$ \\
		& {\scriptsize\sf Final}  & {0.817} &  {0.815} & $6.59 \times 10^{3}$ & $7.69 \times 10^{3}$ & $3.46 \times 10^{1}$ & $3.80 \times 10^{1}$ & $2.40 \times 10^{1}$ & $2.63 \times 10^{1}$ \\
		\cmidrule(lr){1-2} \cmidrule(lr){3-4} \cmidrule(lr){5-10}
		{\multirow{6}{*}{\rotatebox[origin=c]{90}{\scriptsize 1DSfM \cite{wilson2014robust}}}}
		& {\scriptsize\sf Gen. Markt}  & {3.943} &  {3.942} & $5.96 \times 10^{1}$ & $5.96 \times 10^{1}$ & $5.81 \times 10^{-1}$ & $5.84 \times 10^{-1}$ & $9.08 \times 10^{-1}$ & $9.18 \times 10^{-1}$ \\
		& {\scriptsize\sf Piccadily}  & {4.138} &  {4.136} & $1.80 \times 10^{2}$ & $1.80 \times 10^{2}$ & $1.29 \times 10^{0}$ & $1.30 \times 10^{0}$ & $1.15 \times 10^{0}$ & $1.16 \times 10^{0}$ \\
		& {\scriptsize\sf R. Forum}  & {1.137} &  {1.136} & $3.27 \times 10^{2}$ & $3.32 \times 10^{2}$ & $4.39 \times 10^{0}$ & $4.69 \times 10^{0}$ & $3.25 \times 10^{0}$ & $3.37 \times 10^{0}$ \\
		& {\scriptsize\sf Trafalgar}  & {3.546} &  {3.544} & $3.74 \times 10^{2}$ & $3.74 \times 10^{2}$ & $2.13 \times 10^{0}$ & $2.15 \times 10^{0}$ & $1.63 \times 10^{0}$ & $1.64 \times 10^{0}$ \\
		& {\scriptsize\sf U. Square}  & {3.988} &  {3.987} & $7.20 \times 10^{1}$ & $7.20 \times 10^{1}$ & $4.40 \times 10^{-1}$ & $4.44 \times 10^{-1}$ & $6.12 \times 10^{-1}$ & $6.15 \times 10^{-1}$ \\
		& {\scriptsize\sf V. Cathedral}  & {1.750} &  {1.748} & $5.15 \times 10^{2}$ & $5.15 \times 10^{2}$ & $2.86 \times 10^{0}$ & $2.91 \times 10^{0}$ & $1.21 \times 10^{0}$ & $1.23 \times 10^{0}$ \\
		\bottomrule
	\end{tabular}
\end{table}

\section{assumptions}\label{section::app::assumptions}
We summarize  \cref{assumption::loss,assumption::local_opt,assumption:nonzero,assumption::bounded_intr} made in this paper, where  \cref{assumption::loss} applies to a broad class of robust loss functions like Huber and Welsch \cite{fan2021mm_full}; \cref{assumption:nonzero} requires that camera and point not coincide;  and \cref{assumption::bounded_intr,assumption::local_opt} are common in the  convergence analysis of optimization. Note that \cite{eriksson2016consensus,zhang2017dist} hold similar but stricter assumptions for decentralized bundle adjustment.

\begin{assumption}\label{assumption::loss}
	The robust loss function $\rho(\cdot):\reals^+\rightarrow\reals$ in \cref{eq::Fobj} has the following properties:
	\begin{enumerate}[(a)]
		\item $\rho(s)\geq 0$ and $\rho(0)=0$;\\[-0.7em]
		\item $\rho(s)$ is  differentiable;\label{assumption::loss::cont}\\[-0.7em]
		\item $\rho(s)$ is a concave function;\label{assumption::loss::concave}\\[-0.7em]
		\item $0\leq\nabla\rho(s)\leq 1$ for any $s\in\reals^+$ and $\nabla\rho(0)=1$ where $\nabla\rho(s)$ is the first-order derivative of $\rho(s)$;\label{assumption::loss::drho}\\[-0.7em]
		\item $\rho(s)$ has a Lipschitz continuous gradient. \label{assumption::loss::lipschitz}
	\end{enumerate}
\end{assumption}

\begin{assumption}\label{assumption:nonzero}
	There exists $\epsilon>0$ such that $\|\bfl_j-\bft_i\|>\epsilon$ for any  reprojection pair $(i,\,j)\in\calE$. 
\end{assumption}

\begin{assumption}\label{assumption::bounded_intr}
	The camera intrinsics $\bfd_i\in\Real{3}$ is bounded.
\end{assumption}

\begin{assumption}\label{assumption::local_opt}
	$\bfxakp$ is a local minimum to $\Ealphak$ and $\lEalphak$; see \cref{eq::update_amm,eq::update_mm}.
\end{assumption}

\newpage
\section{Proofs of Propositions}\label{section::app::propositions}
\subsection{Proof of \cref{proposition::majorize}}\label{proof::majorize}

First, we analyze and upper-bound the squared norm $\|\bfe_{ij}\|^2$ of reprojection errors. Recall that the reprojection error $\bfe_{ij}$ in \cref{eq::error} is derived by minimizing $\|\bfp_{ij} -\lambda_{ij}\cdot\bfR_i^\top(\bfl_j - \bft_i)\|^2$. Then, as a result of  \cref{eq::error_opt,eq::lambdaij,eq::error}, it can be shown that
\begin{equation}
\nonumber
	\|\bfe_{ij}\|^2 = \min_{\lambda_{ij}\in\reals} \left\|\bfp_{ij} -\lambda_{ij}\cdot\bfR_i^\top(\bfl_j - \bft_i)\right\|^2
\end{equation}
under the assumption of $\|\bfR_i^\top(\bfl_j-\bfl_i)\|=\|\bfl_j-\bft_i\|\neq 0$. Since $\bfR\in\SOthree$ and $\bfR\bfR^\top=\bfI$, the equation above is equivalent to
\begin{equation}\label{eq::enorm_opt}
	\|\bfe_{ij}\|^2 = \min_{\lambda_{ij}\in\reals} \big\|\bfR_i\bfp_{ij} -\lambda_{ij}\cdot(\bfl_j - \bft_i)\big\|^2.
\end{equation}
With $\lambda_{ij}=\lambdaijk$ in \cref{eq::gamma},  the right-hand side of \cref{eq::enorm_opt} is upper-bounded:
\begin{equation}\label{eq::enorm_ubnd0}
\begin{aligned}
	 \|\bfe_{ij}\|^2 
	 &=\min_{\lambda_{ij}\in\reals} \left\|\bfR_i\bfp_{ij} -\lambda_{ij}\cdot(\bfl_j - \bft_i)\right\|^2\\
	 &\leq \left\|\bfR_i\bfp_{ij} -\lambdaijk\cdot(\bfl_j - \bft_i)\right\|^2\\
	 &= \left\|\bfR_i\bfp_{ij} + \lambdaijk\cdot \bft_i - \lambdaijk\cdot\bfl_j\right\|^2
\end{aligned}
\end{equation}
where the equality ``='' holds if $\bfc_i=\bfcik$ and $\bfl_j=\bfljk$. For any $\bfx_i$ and $\bfy_j\in\Real{n}$, note that
\begin{equation}\label{eq::inequality}
\|\bfx_i-\bfy_j\|^2 =\min_{\bfg_{ij}\in\Real{n}} 2\|\bfx_i-\bfg_{ij}\|^2 + 2\|\bfy_j-\bfg_{ij}\|^2
\end{equation}
where the right-hand side has a unique solution at 
\begin{equation}
\bfg_{ij}=\half\bfx_i + \half\bfy_j.
\end{equation} 
Substituting  $\bfx_i = \bfR_i\bfp_{ij} + \lambdaijk\cdot \bft_i$ and $\bfy_j=\lambdaijk\cdot\bfl_j$ into \cref{eq::inequality} results in
\begin{equation}
\nonumber
 \left\|\bfR_i\bfp_{ij} + \lambdaijk\cdot \bft_i - \lambdaijk\cdot\bfl_j\right\|^2 =
 \min_{\bfg_{ij}\in\Real{3}} 
 2 \left\|\bfR_i\bfp_{ij} + \lambdaijk\cdot \bft_i-\bfg_{ij}\right\|^2 +
 2 \left\|\lambdaijk\cdot\bfl_j-\bfg_{ij}\right\|^2.
\end{equation}
If we let $\bfg_{ij}=\bfgijk$ in \cref{eq::g}, the equation above results in
\begin{equation}\label{eq::enorm_ubnd1}
	\left\|\bfR_i\bfp_{ij} + \lambdaijk\cdot \bft_i - \lambdaijk\cdot\bfl_j \right\|^2 \leq 
	2 \left\|\bfR_i\bfp_{ij} + \lambdaijk\cdot \bft_i-\bfgijk \right\|^2 +
	2 \left\|\lambdaijk\cdot\bfl_j-\bfgijk \right\|^2.
\end{equation}
Here, we might  upper-bound $\|\bfe_{ij}\|^2$ while decoupling camera extrinsics/intrinsics $\bfc_i=(\bfR_i,\,\bft_i,\,\bfd_i)\in \SEthree\times\Real{3}$ and point positions $\bfl_j\in\Real{3}$: 
\begin{equation}\label{eq::enorm_ubnd2}
	\begin{aligned}
	 \|\bfe_{ij}\|^2  
	 &\leq \left\|\bfR_i\bfp_{ij} + \lambdaijk\cdot \bft_i - \lambdaijk\cdot\bfl_j \right\|^2 \\
	 &\leq  2 \left\| \bfR_i\bfp_{ij} + \lambdaijk\cdot \bft_i-\bfgijk \right\|^2 + 2 \left\|\lambdaijk\cdot\bfl_j-\bfgijk \right\|^2
	\end{aligned}
\end{equation}
where the first inequality is from \cref{eq::enorm_ubnd0} and the second inequality is from \cref{eq::enorm_ubnd1}.

Next, note that the robust loss function $\rho(\cdot):\reals^+\rightarrow\reals$ is assumed to be differentiable, concave and nondecreasing; see \cref{assumption::loss}. From \cref{eq::Fij}, the concavity of robust loss function  immediately yields
\begin{equation}
	\nonumber
	\Fij \leq \frac{1}{2}\nabla\rho(\|\bfeijk\|)\cdot\left(\|\bfe_{ij}\|^2-\|\bfeijk\|^2\right)+ \frac{1}{2}\rho(\|\bfeijk\|^2)
\end{equation}
that holds for any $\bfe_{ij}$ and $\bfeijk$. The equation above is equivalent to
\begin{equation}\label{eq::loss_ubnd0}
	\Fij \leq \frac{1}{2}\wijk\cdot\|\bfe_{ij}\|^2+ \aijk
\end{equation}
where $\aijk$ and $\wijk$ are given by \cref{eq::w,eq::a}, respectively. Also, since the robust loss function $\rho(\cdot)$ is nondecreasing, we have $\wijk=\nabla\rho(\|\bfeijk\|^2)\geq 0$. Then, applying \cref{eq::enorm_ubnd2} onto the right-hand side of \cref{eq::loss_ubnd0} results in
\begin{equation}\label{eq::FPQ0}
	\begin{aligned}
	\Fij & \leq \wijk \cdot \left\|\bfR_i\bfp_{ij} + \lambdaijk\cdot \bft_i-\bfgijk \right\|^2 + \wijk\cdot \left\|\lambdaijk\cdot\bfl_j-\bfgijk \right\|^2 + \aijk \\
		  & =  \underbrace{\wijk\cdot \left\|\bfR_i\bfp_{ij} + \lambdaijk\cdot \bft_i-\bfgijk \right\|^2 + \half\aijk}_{P_{ij}(\bfc_i|\bfxk) } + \underbrace{\wijk\cdot \left\|\lambdaijk\cdot\bfl_j-\bfgijk \right\|^2 + \half\aijk}_{Q_{ij}(\bfl_j | \bfxk)} \\
		  & = \Pijk + \Qijk
	\end{aligned}
\end{equation}
where $\Pijk$ and $\Qijk$ are from \cref{eq::P,eq::Q}, respectively. Furthermore, substituting $\bfc_i=\bfcik$ and $\bfl_j=\bfljk$ into $\Fij$, $\Pijk$ and $\Qijk$ yields
\begin{equation}\label{eq::FPQ1}
	F\big(\bfcik,\,\bfljk\big) = P\big(\bfcik|\bfxk\big) + Q\big(\bfljk|\bfxk\big).
\end{equation}
Then, as a result of \cref{eq::FPQ0,eq::FPQ1}, we conclude that
\begin{equation}
\Fij \leq \Pijk + \Qijk
\end{equation}
and the equality ``='' holds at $\bfc_i=\bfcik$ and $\bfl_j=\bfljk$. This completes the proof of \cref{proposition::majorize}.
\subsection{Proof of \cref{proposition::surrogate}}\label{proof::surrogate}
The proof is straightforward from  \cref{proposition::majorize} and \cref{eq::Ealpha}.

\subsection{Proof of \cref{proposition::amm}}\label{proof::amm}
We start the proof by defining the notation for Euclidean and Riemannian gradients \cite{absil2009optimization}. Given a matrix manifold $\calM\subset\Real{m\times n}$ and a function $H(\cdot):\calM\subset\Real{m\times n} \rightarrow \Real{r}$, $\nabla H(\bfx)$ and $\grad H(\bfx)$ represents the Euclidean and Riemannian gradient, respectively; and $\nabla_{\bfx_i} H(\bfx)$ and $\grad_{\bfx_i} H(\bfx)$ represents the Euclidean and Riemannian gradient with respect to $\bfx_i\subset\bfx$, respectively.  With the notion of Riemannian gradient,  the convergence of $\amm$ to first-order critical points is equivalent to  
\begin{equation}
\grad F(\bfxk)\rightarrow\zero.
\end{equation}   
The rest of this proof is organized as follow.  We first prove $F(\bfxk)\rightarrow\Finf$ in App. \hyperref[section::app::prop3::F]{C.3.1}, then $\|\bfxkp-\bfxk\|\rightarrow 0$ and $\|\bfxkp-\bflxk\|\rightarrow 0$  in App. \hyperref[section::app::prop3::x]{C.3.2}, and at last $\grad F(\bfxk) \rightarrow \zero$ in App. \hyperref[section::app::prop3::grad]{C.3.3}. When analyzing the convergence, we also introduce \cref{lemma::adaptive,lemma::lipschitzF,lemma::lipschitzE,lemma::lipschitz_sum_prod,lemma::lipschitz_sum_prod_k} whose proofs are left in \cref{section::app::lemmas}.

\vspace{0.5em}
\begin{enumerate}[before=\itshape,left=0pt]
\item Proof of $F(\bfxk)\rightarrow\Finf$ \label{section::app::prop3::F}
\end{enumerate}
\vspace{0.2em}

For notational simplicity, we introduce $\lFk$ that is recursively defined by:
\begin{equation}\label{eq::lFk0}
	\lFk \triangleq \begin{cases}
		F(\bfx^{(0)}), & \sfk=-1,\\
		(1-\eta)\cdot \lFkm + \eta\cdot F(\bfxk), & \sfk\geq 0 
	\end{cases}
\end{equation} 
where $\eta\in(0,\,1]$ is the same as that in \cref{eq::lFak}. The equation above indicates that $\lFk$ is  an exponential averaging of the objective value $F(\bfxk)$ for $\sfk \geq 0$: 
\begin{equation}\label{eq::lFk1}
	\overline{F}^{(\sfk)} = (1-\eta)\cdot F(\bfx^{(0)}) + \eta \cdot \sum_{\sfn=0}^{\sfk} (1-\eta)^{\sfk-\sfn} \cdot F(\bfx^{(\sfn)})
\end{equation}
Furthermore, we have the following proposition about local per device adaptive restart metrics $\Fak,\, \lFak,\,\Eakp$ in \cref{eq::Fainit,eq::Fak,eq::lFak,eq::Eak}.

\begin{lemma}\label{lemma::adaptive}
	The  adaptive restart metrics $\Fak,\, \lFak,\,\Eakp$ in \cref{eq::Fainit,eq::Fak,eq::lFak,eq::Eak} satisfy the following properties:
	\begin{enumerate}[(a),left=0pt]
		\item $\sum_{\alpha\in\calS}\Fak = F(\bfxk)$;  \label{lemma::adaptive::Fsum}
		\vspace{0.35em}
		\item $\sum_{\alpha\in\calS} \lFak=\lFk$;  \label{lemma::adaptive::lFsum}
		\vspace{0.35em}
		\item $\Fak \leq \Eak$. \label{lemma::adaptive::FakEak}
	\end{enumerate} 
\end{lemma}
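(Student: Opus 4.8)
The plan is to prove the three claims in the order (a), (b), (c), since (b) and (c) reduce quickly to (a) and to the earlier propositions. The crux is a single ``counting'' identity relating the per-device surrogate gaps $\Delta\Ealpha$ to the global majorization gap. From the definition of $\Delta\Ealpha$ in \cref{eq::DEalpha} and the form of $E(\bfx|\bfxk)$ in \cref{eq::Ealpha}, I would first note that every inter-device pair $(i,j)\in\calE''$ lies in \emph{exactly two} of the sets $\calE''_\alpha$---once through the device holding its camera and once through the device holding its point---so summing the factor-$\half$ terms in \cref{eq::DEalpha} over $\alpha\in\calS$ exactly recovers the undivided sum over $\calE''$. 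Combined with $F(\bfx)=\sum_{(i,j)\in\calE'}\Fij+\sum_{(i,j)\in\calE''}\Fij$, this yields the identity
\[
\sum_{\alpha\in\calS}\Delta\Ealpha\big(\bfx|\bfxk\big)=F(\bfx)-E(\bfx|\bfxk),
\]
which I would use as the workhorse for part (a).

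For part (a) I would induct on $\sfk\geq 0$ to show $\sum_{\alpha\in\calS}\Fak=F(\bfxk)$. In the base case $\sfk=0$, the initialization $\bfx^{\alpha(-1)}=\bfx^{\alpha(0)}$ together with \cref{eq::Fainit} gives $\sum_{\alpha\in\calS}E^{\alpha(0)}=\sum_{\alpha\in\calS}\Ealpha\big(\bfx^{\alpha(0)}|\bfx^{(0)}\big)=F(\bfx^{(0)})$ by \cref{proposition::surrogate}, while the identity evaluated at $\bfx=\bfx^{(0)}$ with conditioning iterate $\bfx^{(-1)}=\bfx^{(0)}$ makes the gap vanish, so \cref{eq::Fak} yields $\sum_{\alpha\in\calS}F^{\alpha(0)}=F(\bfx^{(0)})$. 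For the inductive step I would sum \cref{eq::Eak} shifted to index $\sfk$, namely $\Eak=\Ealpha\big(\bfxak|\bfxkm\big)+\Fakm-\Ealpha\big(\bfxakm|\bfxkm\big)$: the terms $\sum_{\alpha\in\calS}\Ealpha\big(\bfxakm|\bfxkm\big)=F(\bfxkm)$ (by \cref{proposition::surrogate}) and $\sum_{\alpha\in\calS}\Fakm=F(\bfxkm)$ (by the inductive hypothesis) cancel, leaving $\sum_{\alpha\in\calS}\Eak=E(\bfxk|\bfxkm)$. Substituting into \cref{eq::Fak} and applying the counting identity with conditioning iterate $\bfxkm$ gives $\sum_{\alpha\in\calS}\Fak=E(\bfxk|\bfxkm)+\big(F(\bfxk)-E(\bfxk|\bfxkm)\big)=F(\bfxk)$. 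I would emphasize that \cref{eq::Eak} holds for the \emph{final} value of the iterate irrespective of whether the adaptive-restart branch fired, so the two-branch update in \cref{algorithm::amm} does not disturb this telescoping.

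Parts (b) and (c) then follow quickly. For (b) I would again induct on $\sfk$: at $\sfk=-1$ the claim is $\sum_{\alpha\in\calS}F^{\alpha(-1)}=F(\bfx^{(0)})$, immediate from \cref{eq::Fainit} and \cref{proposition::surrogate}, and the step follows by summing the exponential-averaging recursion \cref{eq::lFak} and invoking part (a) alongside \cref{eq::lFk0}. For (c) I would read $\Fak-\Eak=\Delta\Ealpha\big(\bfxk|\bfxkm\big)$ directly off \cref{eq::Fak}; in \cref{eq::DEalpha} the term $-\half\xi\|\bfxak-\bfxakm\|^2$ is nonpositive because $\xi>0$, and each per-pair surrogate gap $\Fij-\Pijk-\Qijk$ is nonpositive by the majorization inequality \cref{eq::majorize} of \cref{proposition::majorize} (which holds for any conditioning iterate), so $\Delta\Ealpha\big(\bfxk|\bfxkm\big)\leq 0$ and hence $\Fak\leq\Eak$.

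The only real obstacle I foresee is bookkeeping rather than conceptual: I must keep the conditioning iterates in \cref{eq::Fak,eq::Eak} aligned across the index shift, and verify that the factor $\half$ in $\Delta\Ealpha$ is precisely what compensates the double-counting of inter-device pairs so that the local per-device quantities sum to their global counterparts. Once the counting identity is secured, the induction for (a) and the reductions for (b) and (c) are routine.
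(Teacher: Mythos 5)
Your proposal is correct and follows essentially the same route as the paper's proof: your ``counting identity'' $\sum_{\alpha\in\calS}\Delta\Ealpha(\bfx|\bfxk)=F(\bfx)-E(\bfx|\bfxk)$ is exactly the paper's key algebraic step (its \cref{eq::FEDE0}), the induction for (a) telescopes the same way, (b) follows by summing the exponential-averaging recursion, and (c) uses the nonpositivity of the surrogate gap from \cref{proposition::majorize} together with $\xi>0$. The only (welcome) difference is that you make explicit the double-counting of each inter-device pair across the two sets $\calE''_\alpha$, which the paper leaves as ``an algebraic manipulation.''
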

\begin{proof}
	Please refer to App. \hyperref[section::app::lemma::adaptive]{D.1}.
\end{proof}

From \cref{eq::lFk0,eq::lFk1}, it is straightforward to conclude  that $F(\bfxk)\rightarrow \Finf$ if and only if $\lFk \rightarrow \Finf$.  Moreover, \cref{eq::Fobj,eq::lFk1} indicate that $F(\bfxk)$ and $\lFk$ are bounded below, i.e., $F(\bfxk)$ and $\lFk\geq 0$. As a result of monotone convergence theorem, a sequence converges if nonincreasing and bounded below. Then, the convergence of $F(\bfxk)$ and $\lFk$ is established if we can prove that $\lFk$ is nonincreasing. In addition, Lemma \ref{lemma::adaptive}\ref{lemma::adaptive::lFsum} indicates that  $\lFakp \leq \lFak$ for each device $\alpha$ is sufficient to yield $\lFkp\leq\lFk$. Therefore, the convergence of  $F(\bfxk)$ and $\lFk$ is reduced to  $\lFakp \leq \lFak$, which can be achieved through the proof of $\Fakp\leq \lFakp \leq\lFak$  by induction as the following.

\begin{enumerate} [1.]
\item For $\sfk=-1$, \cref{eq::Fainit} indicates that $\bfx^{\alpha(-1)} = \bfx^{\alpha(0)}$ and
\begin{equation}
	F^{\alpha(-1)} = \Ealpha(\bfx^{\alpha(-1)}|\iterate{\bfx}{}{-1}),\; \overline{F}^{\alpha(-1)} = F^{\alpha(-1)},\; E^{\alpha(0)} = F^{\alpha(-1)}.
\end{equation} 
With the equation above, $\bfx^{\alpha(-1)} = \bfx^{\alpha(0)}$ and $\iterate{\bfx}{}{-1}=\iterate{\bfx}{}{0}$, \cref{eq::Fak,eq::lFak} further result in
\begin{equation}\label{eq::FakplFakplFak0}
F^{\alpha(-1)}=\overline{F}^{\alpha(-1)}=\overline{F}^{\alpha(0)}=F^{\alpha(0)}.
\end{equation}
\item For $\sfk\geq 0$, we assume that $\Fak\leq\lFak\leq \lFakm$ holds. Then,  $\bfxakp$ in $\amm$ results from either line~\ref{line::amm::update_amm} or line~\ref{line::amm::update_mm} of \cref{algorithm::amm}. This means there are two possibilities as the following.
\begin{itemize}
	\item $\bfxakp$ is from line~\ref{line::amm::update_amm} of \cref{algorithm::amm}, or equivalently, \cref{eq::update_amm}.  Then, the adaptive restart scheme is not triggered and line~\ref{line::amm::adaptive_restart::start} of \cref{algorithm::amm} suggests that
	\begin{equation}\label{eq::EaklFak0}
		\Eakp \leq \lFak.		
	\end{equation}
	\item $\bfxakp$ is from line~\ref{line::amm::update_mm} of \cref{algorithm::amm}, or equivalently, \cref{eq::update_mm}. Then, the adaptive restart scheme is triggered. Recall from \cref{assumption::local_opt} that $\bfxakp$ is a local minimum to $\Ealphak$.  This suggests that
	\begin{equation}\label{eq::EalFa}
		E^\alpha(\bfxakp|\bfxk) - E^\alpha(\bfxak|\bfxk)\leq 0.
	\end{equation}
	Applying \cref{eq::EalFa} on  \cref{eq::Eak}, we obtain
	\begin{equation}
		\Eakp \leq \Fak.
	\end{equation}
	Since we assume that $\Fak\leq\lFak$,  the equation above  results in
	\begin{equation}\label{eq::EaklFak1}
		\Eakp \leq \lFak.
	\end{equation}
\end{itemize}
Then, no matter whether the adaptive restart scheme is triggered or not,  it can be shown that
\begin{equation}\label{eq::FakplFk1}
\Fakp\leq\Eakp\leq\lFak
\end{equation}
where the first inequality is from \cref{lemma::adaptive}\ref{lemma::adaptive::FakEak} and  the second inequality is from \cref{eq::EaklFak0,eq::EaklFak1}. Furthermore, recall from \cref{eq::lFak} that $\lFakp$ is a convex combination of $\lFak$ and $\Fakp$. Then, \cref{eq::FakplFk1}  yields
\begin{equation}\label{eq::FakplFakplFak1}
	 \Fakp \leq \lFakp \leq \lFak.
\end{equation}
\item From \cref{eq::FakplFakplFak0,eq::FakplFakplFak1}, we conclude that $\Fakp \leq \lFakp \leq \lFak$ holds for any $\sfk\geq -1$.
\end{enumerate}
Therefore, we have proved by induction that $\lFak$ is nonincreasing, which, as analyzed before, is sufficient to yield not only $\lFk\rightarrow\Finf$ but also $F(\bfxk)\rightarrow\Finf$. This completes the proof of $F(\bfxk)\rightarrow\Finf$.

\vspace{0.5em}
\begin{enumerate}[before=\itshape,left=0pt]
\setcounter{enumi}{1}
\item Proof of $\|\bfxkp-\bfxk\|\rightarrow 0$  and $\|\bfxkp - \bflxk\|\rightarrow 0$ \label{section::app::prop3::x}
\end{enumerate}
\vspace{0.2em}

First, we prove $\|\bfxkp-\bfxk\|\rightarrow 0$. For any $\sfk\geq 0$, we conclude from \cref{eq::Fak,eq::DEalpha} that 
\begin{equation}\label{eq::FakpEakp1}
	\Fakp - \Eakp =  -\frac{1}{2}\xi \|\bfxakp-\bfxak\|^2 + \half\sum_{(i,\,j)\in\calE''_\alpha} \Big( F_{ij}\big(\bfc_i^{(\sfk+1)},\, \bfl_j^{(\sfk+1)}\big) - P_{ij}\big(\bfc_i^{(\sfk+1)}|\bfxk\big) - Q_{ij}\big(\bfl_j^{(\sfk+1)}|\bfxk\big)\Big).
\end{equation}
Recall from \cref{proposition::majorize} that 
\begin{equation}
	F_{ij}\big(\bfc_i^{(\sfk+1)},\, \bfl_j^{(\sfk+1)}\big) \leq P_{ij}\big(\bfc_i^{(\sfk+1)}|\bfxk\big) + Q_{ij}\big(\bfl_j^{(\sfk+1)}|\bfxk\big).
\end{equation}
Then, the right-hand side of \cref{eq::FakpEakp1} can be upper-bounded:
\begin{equation}\label{eq::FakpEakp0}
	\Fakp - \Eakp \leq  -\frac{1}{2}\xi \|\bfxak-\bfxakm\|^2.
\end{equation}
In addition,  we have proved in \cref{eq::FakplFk1}  that 
\begin{equation}\label{eq::EakplFak0}
	\Eakp - \lFak \leq 0
\end{equation}
for any $\sfk\geq 0$. As a result of \cref{eq::FakpEakp0,eq::EakplFak0}, we obtain
\begin{equation}\label{eq::FakplFakp0}
	\Fakp - \lFak \leq  -\frac{1}{2}\xi \|\bfxakp-\bfxak\|^2.
\end{equation}
Then, summing both sides of the equation above over all the devices $\alpha\in\calS$ results in
\begin{equation}\label{eq::FklFk0}
F(\bfxkp) - \lFk = \sum_{\alpha\in\calS} \Fakp - \sum_{\alpha\in\calS}\lFakp \leq - \frac{1}{2}\xi \sum_{\alpha\in\calS}\|\bfxakp-\bfxak\|^2=-\frac{1}{2}\xi\|\bfxkp-\bfxk\|^2
\end{equation}
where the first equality is from Lemmas \ref{lemma::adaptive}\ref{lemma::adaptive::Fsum} and \ref{lemma::adaptive}\ref{lemma::adaptive::lFsum}. Moreover, as a result of \cref{eq::lFk0}, we obtain
\begin{equation}\label{eq::lFakplFak0}
	\lFkp -\lFk = \eta\cdot\big(F(\bfxkp) - \lFk\big).
\end{equation}
Then, applying  \cref{eq::FklFk0} to upper-bound the right-hand side of \cref{eq::lFakplFak0} yields
\begin{equation}\label{eq::lFkplFk0}
\lFkp - \lFk  \leq -\frac{1}{2}\eta\xi\|\bfxkp-\bfxk\|^2 \leq 0.
\end{equation}
In \hyperref[section::app::prop3::F]{App. C.3.1}, we have proved that  $\lFk$ converges, which suggests that
\begin{equation}\label{eq::lFkplFk1}
\lim_{\sfk\rightarrow\infty}  \lFkp-\lFk =0.
\end{equation}
As a result of \cref{eq::lFakplFak0,eq::lFkplFk1}, we conclude 
\begin{equation}
0=\lim_{\sfk\rightarrow\infty} \lFkp-\lFk \leq \lim_{\sfk\rightarrow\infty} -\frac{1}{2}\eta\xi\|\bfxkp-\bfxk\|^2\leq 0.
\end{equation}
With $\xi>0$ and $0<\eta\leq 1$, the equation above yields
\begin{equation}\label{eq::xkpxk}
\|\bfxkp-\bfxk\|\rightarrow 0.
\end{equation}

Next, we prove $\|\bfxkp - \bflxk\|\rightarrow 0$. From \cref{eq::nesterov_scalar}, note that $\sak\geq 1$ and
\begin{equation}\label{eq::gamma_bnd}
	\gammaak = \frac{2\sak-2}{\sqrt{4{\sak}^2+1} + 1}\leq \frac{\sak-1}{\sak}\in[0,\,1).
\end{equation}
With \cref{eq::xkpxk,eq::gamma_bnd}, it can be shown that
\begin{equation}\label{eq::lxakxak}
\bfxak + \gammaak\big(\bfxak-\bfxakm\big) \rightarrow \bfxak.
\end{equation} 
This results in
\begin{equation}
	\bfRik + \gammaak\big(\bfRik-\bfRikm\big) \rightarrow \bfRik.
\end{equation}
Since $\SVDOPlus(\cdot)$ in \cref{eq::proj_rot3d} is continuous around $\bfRik\in\SOthree$ \cite[Sec. 3.4]{levinson2020analysis}, the equation above suggests that
\begin{equation}\label{eq::lRikRik}
\SVDOPlus\left(\bfRik + \gammaak\big(\bfRik-\bfRikm\big)\right) \rightarrow \bfRik.
\end{equation}
In addition, \cref{eq::lxakxak} also results in 
\begin{equation}\label{eq::ltiktik}
	\bftik + \gammaak\big(\bftik-\bftikm\big) \rightarrow \bftik,
\end{equation}
\begin{equation}\label{eq::ldikdik}
	\bfdik + \gammaak\big(\bfdik-\bfdikm\big) \rightarrow \bfdik,
\end{equation}
\begin{equation}\label{eq::lljkljk}
	\bfljk + \gammaak\big(\bfljk-\bfljkm\big) \rightarrow \bfljk.
\end{equation}
From \cref{eq::lRikRik,eq::ltiktik,eq::ldikdik,eq::lljkljk},  \cref{eq::nesterov_x} immediately yields
\begin{equation}\label{eq::xklxk}
	\|\bfxk-\bflxk\|\rightarrow 0.
\end{equation}
Furthermore, with \cref{eq::xklxk,eq::xkpxk} and
\begin{equation}
	\|\bfxkp - \bflxk\| \leq \|\bfxkp - \bfxk\|  + \|\bfxk - \bflxk\|, 
\end{equation}
we  obtain
\begin{equation}
	\|\bfxkp - \bflxk\| \rightarrow 0.
\end{equation}
 This completes the proof of $\|\bfxkp-\bfxk\|\rightarrow 0$ and $\|\bfxkp - \bflxk\|\rightarrow 0$.

\vspace{0.5em}
\begin{enumerate}[before=\itshape,left=0pt]
	\setcounter{enumi}{2}
	\item Proof of $\grad F(\bfxk) \rightarrow \zero$ \label{section::app::prop3::grad}
\end{enumerate}
\vspace{0.2em}

For a function $H(\cdot):\calM\rightarrow \Real{r}$ on matrix manifold $\calM$,  Riemannian gradient $\grad H(\bfx)$ and  Euclidean $\nabla H(\bfx)$ are related in the form of
\begin{equation}
\grad H(\bfx) = \ProjGrad_{\bfx} \big(\nabla H(\bfx)\big)
\end{equation}
where  $\ProjGrad_\bfx(\cdot)$  is a linear operator associated with $\bfx$ and projects Euclidean gradients to the Riemannian tangent space at $\bfx$. In terms of camera extrinsics/intrinsics $(\bfR_i,\bft_i,\bfd_i)\in \SEthree\times\Real{3}$ and point positions $\bfl_j\in\Real{3}$,  such a linear operator $\ProjGrad_\bfx(\cdot)$  is defined by
\begin{subequations}\label{eq::proj_grad}
\begin{equation}\label{eq::grad_R}
	\grad_{\bfR_i} H(\bfx) =\half\nabla_{\bfR_i} H(\bfx) -\half \bfR_i\nabla_{\bfR_i} H(\bfx)^\top\bfR_i,
\end{equation}
\begin{equation}\label{eq::grad_t}
	\grad_{\bft_i} H(\bfx) = \nabla_{\bft_i} H(\bfx),
\end{equation}
\begin{equation}\label{eq::grad_d}
	\grad_{\bfd_i} H(\bfx) = \nabla_{\bfd_i} H(\bfx),
\end{equation}
\begin{equation}\label{eq::grad_l}
	\grad_{\bfl_j} H(\bfx) = \nabla_{\bfl_j} H(\bfx).
\end{equation}
\end{subequations}
Similar to \cite{fan2021mm_full,eriksson2016consensus,zhang2017dist},  we  need the Lipschitz-like continuity of Riemannian gradients  to guarantee the convergence to first-order critical points. Here, we introduce the following two lemmas about the sum and product of Lipschitz continuous functions.

\begin{lemma} \label{lemma::lipschitz_sum_prod}
	Suppose that $G(\cdot)$ and $H(\cdot):\Real{m\times n} \rightarrow \reals$ are Lipschitz continuous functions. Then, we have the following results:
	\begin{enumerate}[(a),left=0pt]
		\item $G(\bfx) + H(\bfx)$ is Lipschitz continuous. \label{lemma::lipschitz_sum}
		\item $G(\bfx) \cdot H(\bfx)$ is bounded and Lipschitz continuous if $G(\bfx)$ and $H(\bfx)$ are bounded. \label{lemma::lipschitz_prod}
	\end{enumerate}
\end{lemma}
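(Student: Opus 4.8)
The plan is to prove both parts directly from the definition of Lipschitz continuity, denoting by $L_G$ and $L_H$ the respective Lipschitz constants of $G$ and $H$, and, where needed, by $M_G$ and $M_H$ uniform bounds on $|G|$ and $|H|$.

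For part~\ref{lemma::lipschitz_sum}, I would simply invoke the triangle inequality: for any $\bfx,\bfy\in\Real{m\times n}$ the estimate
\[
|(G+H)(\bfx)-(G+H)(\bfy)| \leq |G(\bfx)-G(\bfy)| + |H(\bfx)-H(\bfy)| \leq (L_G+L_H)\norm{\bfx-\bfy}
\]
shows at once that $G+H$ is Lipschitz continuous with constant $L_G+L_H$. This step is immediate and presents no obstacle.

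For part~\ref{lemma::lipschitz_prod}, boundedness of the product follows directly from $|G(\bfx)H(\bfx)|\leq M_G M_H$. For the Lipschitz estimate I would use the standard add-and-subtract decomposition
\[
G(\bfx)H(\bfx)-G(\bfy)H(\bfy)=G(\bfx)\big(H(\bfx)-H(\bfy)\big)+\big(G(\bfx)-G(\bfy)\big)H(\bfy),
\]
bound the first summand by $M_G L_H\norm{\bfx-\bfy}$ and the second by $L_G M_H\norm{\bfx-\bfy}$ using the boundedness of one factor together with the Lipschitz continuity of the other, and conclude that $G\cdot H$ is Lipschitz continuous with constant $M_G L_H + L_G M_H$.

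The only point deserving attention---and the conceptual crux of part~\ref{lemma::lipschitz_prod}---is that the boundedness hypothesis is genuinely necessary: the product of two globally Lipschitz functions need not be Lipschitz (for instance $x\mapsto x$ is Lipschitz but $x\mapsto x^2$ is not), so the assumption is exactly what renders both terms of the decomposition individually controllable. I would note that in the surrounding convergence analysis the quantities to which this lemma is applied---such as the weights $\wijk$, the scalars $\lambdaijk$, and the camera/point variables---are indeed bounded under \cref{assumption::loss,assumption::bounded_intr,assumption:nonzero}, so this hypothesis is always available at the point of use.
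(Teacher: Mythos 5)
Your proposal is correct and follows essentially the same argument as the paper: the triangle inequality for the sum, and the standard add-and-subtract decomposition of the product bounded via one factor's boundedness and the other's Lipschitz constant (the paper merely uses a single common constant $L$ and bound $C$ instead of your separate $L_G, L_H, M_G, M_H$, and groups the cross terms with the opposite symmetric choice of evaluation points). Your added remarks on the necessity of the boundedness hypothesis and on where it is supplied in the surrounding analysis are accurate but not part of the paper's proof.
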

\begin{proof}
	Please refer to App. \hyperref[section::app::lemma::lipschitz_sum_prod]{D.2}.
\end{proof}

\begin{lemma}\label{lemma::lipschitz_sum_prod_k}
	Suppose that  $\{\bfxk\}$ is a sequence, $G(\cdot)$ and $H(\cdot):\Real{m\times n} \rightarrow \reals$ are functions,  and there exists a constant $L>0$ such that
	\begin{equation}
		\big\|G(\bfxkp) - G(\bfxk)\big\| \leq L\big\|\bfxkp - \bfxk \big\|,
	\end{equation} 
	\begin{equation}
		\big\|H(\bfxkp) - H(\bfxk)\big\| \leq L\big\|\bfxkp - \bfxk \big\|
	\end{equation} 
	for any $\sfk\geq 0$. Then, we have the following results:
	\begin{enumerate}[(a),left=0pt]
		\item There exists a constant $L'>0$ such that  \label{lemma::lipschitz_sum_k}
		\begin{equation}
			\big|\big(G(\bfxkp) + H(\bfxkp)\big) -  \big(G(\bfxk) + H(\bfxk)\big) \big| \leq L' \big\|\bfxkp - \bfxk\big\|
		\end{equation}
		for any $\sfk\geq 0$.
		\item $G(\bfxk) \cdot H(\bfxk)$ and $G(\bfxkp) \cdot H(\bfxkp)$ are bounded, and there exits a constant $L''>0$ such that
		\begin{equation}
			\big|G(\bfxkp) \cdot H(\bfxkp) -  G(\bfxk) \cdot H(\bfxk) \big| \leq L'' \big\|\bfxkp - \bfxk\big\|
		\end{equation}
		for any $\sfk\geq 0$ if $G(\bfxk)$, $G(\bfxkp)$, $H(\bfxk)$ and $H(\bfxkp)$ are bounded.
	\end{enumerate}
\end{lemma}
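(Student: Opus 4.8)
The plan is to establish both parts directly from elementary inequalities, paralleling the proof of \cref{lemma::lipschitz_sum_prod} but now restricted to the consecutive iterates $\bfxk$ and $\bfxkp$ of the sequence. For part~(a), I would apply the triangle inequality to split the difference of the sums into a sum of differences, $\big|\big(G(\bfxkp) + H(\bfxkp)\big) - \big(G(\bfxk) + H(\bfxk)\big)\big| \leq \big|G(\bfxkp) - G(\bfxk)\big| + \big|H(\bfxkp) - H(\bfxk)\big|$, and then bound each of the two terms on the right by $L\big\|\bfxkp - \bfxk\big\|$ using the hypotheses. The claim then holds with $L' = 2L$.

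For part~(b), the key step is the standard add-and-subtract decomposition $G(\bfxkp)H(\bfxkp) - G(\bfxk)H(\bfxk) = G(\bfxkp)\big(H(\bfxkp) - H(\bfxk)\big) + \big(G(\bfxkp) - G(\bfxk)\big)H(\bfxk)$. Taking absolute values and applying the triangle inequality bounds this by $\big|G(\bfxkp)\big|\cdot\big|H(\bfxkp)-H(\bfxk)\big| + \big|G(\bfxkp)-G(\bfxk)\big|\cdot\big|H(\bfxk)\big|$. Invoking the assumed boundedness of the factors --- say $\big|G(\bfxk)\big|, \big|G(\bfxkp)\big| \leq M_G$ and $\big|H(\bfxk)\big|, \big|H(\bfxkp)\big| \leq M_H$ for all $\sfk$ --- together with the two Lipschitz-like hypotheses, each term is controlled by $M_G L\big\|\bfxkp - \bfxk\big\|$ and $M_H L\big\|\bfxkp - \bfxk\big\|$ respectively, giving the bound with $L'' = L(M_G + M_H)$. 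The boundedness of the products $G(\bfxk)H(\bfxk)$ and $G(\bfxkp)H(\bfxkp)$ is immediate, since each is at most $M_G M_H$.

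I expect no genuine obstacle here: both statements are routine consequences of the triangle inequality and the elementary product decomposition, and the norm on the scalar-valued $G$ and $H$ coincides with the absolute value. The only point requiring care is to record the constants $L'$ and $L''$ explicitly in terms of $L$, $M_G$, and $M_H$, since this sequence-wise Lipschitz-like continuity is exactly the tool that later propagates into the control of the Riemannian gradients needed to drive $\grad F(\bfxk) \to \zero$ in the convergence proof of \cref{proposition::amm}.
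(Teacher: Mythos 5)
Your proposal is correct and follows essentially the same route as the paper, which proves this lemma by repeating the argument of \cref{lemma::lipschitz_sum_prod} with $\bfx$ and $\bfx'$ replaced by $\bfxk$ and $\bfxkp$: triangle inequality for the sum, and the add-and-subtract cross-term decomposition plus boundedness for the product. The only cosmetic difference is which cross term you insert in part (b), which does not affect the argument.
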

\begin{proof}
	Please refer to App. \hyperref[section::app::lemma::lipschitz_sum_prod_k]{D.3}.
\end{proof}

Note that  $\bfR_i\in\SOthree$ is on a compacted manifold. With \cref{lemma::lipschitz_sum_prod,lemma::lipschitz_sum_prod_k},  \cref{eq::proj_grad} suggests that there exists a constant $L'>0$ such that
\begin{equation}
\| \grad H(\bfxkp) - \grad H(\bfxk) \| \leq L'\|\bfxkp - \bfxk\|
\end{equation}
if Euclidean gradients  $\nabla H(\bfxkp)$ and   $\nabla H(\bfxk)$ are bounded and there exists a constant $L>0$ such that 
\begin{equation}
	\|\nabla H(\bfxkp) - \nabla H(\bfxk)\| \leq L \|\bfxkp - \bfxk\|.
\end{equation}
As a matter of fact,  $F(\bfx)$, $E(\bfx|\bfxk)$, $E(\bfx|\bflxk)$ indeed have bounded Euclidean gradients satisfying the  equation above at $\bfxk$, $\bflxk$ $\bfxkp$.
\begin{lemma} \label{lemma::lipschitzF}
$ F(\bfx)$ has bounded and Lipschitz continuous Euclidean gradients $\nabla F(\bfx)$ under \cref{assumption::bounded_intr,assumption::loss,assumption:nonzero}.
\end{lemma}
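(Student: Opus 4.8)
The plan is to exploit the finite-sum structure of $F$ and reduce the claim to a single reprojection term. Since $F(\bfx)=\sum_{(i,j)\in\calE}\Fij$ is a \emph{finite} sum, \cref{lemma::lipschitz_sum_prod}\ref{lemma::lipschitz_sum} shows it suffices to prove that each $\nabla\Fij$ is bounded and Lipschitz continuous in $(\bfc_i,\bfl_j)$. Differentiating $\Fij=\half\rho(\|\bfe_{ij}\|^2)$ gives $\nabla\Fij=\half\,\nabla\rho(\|\bfe_{ij}\|^2)\cdot\nabla\|\bfe_{ij}\|^2$, so by \cref{lemma::lipschitz_sum_prod}\ref{lemma::lipschitz_prod} (products of bounded Lipschitz functions) the task splits into two pieces: (i) $\nabla\rho(\|\bfe_{ij}\|^2)$ is bounded and Lipschitz, and (ii) $\|\bfe_{ij}\|^2$ is bounded with a bounded, Lipschitz gradient. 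Part (i) follows from \cref{assumption::loss}: $0\le\nabla\rho\le1$ gives boundedness, and the assumed Lipschitz gradient of $\rho$ composed with the bounded Lipschitz map $\|\bfe_{ij}\|^2$ from (ii) yields a bounded Lipschitz composition. The whole argument therefore reduces to controlling $\bfe_{ij}$ and its first derivatives.

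The central step is to rewrite the reprojection error so that the troublesome denominator becomes harmless. Setting $\bfv\triangleq\bfR_i^\top(\bfl_j-\bft_i)$ and $\hat{\bfv}\triangleq\bfv/\|\bfv\|$, orthogonality of $\bfR_i$ gives $\|\bfv\|=\|\bfl_j-\bft_i\|$ and, from \cref{eq::error},
\begin{equation}\nonumber
\bfe_{ij}=\big(\bfI-\hat{\bfv}\hat{\bfv}^\top\big)\bfp_{ij}.
\end{equation}
Two observations make everything bounded. First, $\bfI-\hat{\bfv}\hat{\bfv}^\top$ is an orthogonal projector, so $\|\bfe_{ij}\|\le\|\bfp_{ij}\|$; moreover $\bfp_{ij}$ is \emph{affine} in $\bfd_i$ with fixed coefficients determined by the observation $\bfu_{ij}$, so \cref{assumption::bounded_intr} makes $\bfp_{ij}$ bounded with constant (hence bounded, Lipschitz) derivative. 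Second, $\hat{\bfv}$ is a unit vector, and because the map $\bfw\mapsto\bfw/\|\bfw\|$ has Jacobian of norm $1/\|\bfw\|$ and second derivative of size $O(1/\|\bfw\|^2)$, \cref{assumption:nonzero} (which guarantees $\|\bfl_j-\bft_i\|>\epsilon$) bounds the first two derivatives of $\hat{\bfv}$ with respect to $\bft_i$ and $\bfl_j$ uniformly by $O(1/\epsilon)$ and $O(1/\epsilon^2)$; its dependence on $\bfR_i$ is linear through $\bfR_i^\top$ applied to the \emph{bounded} unit direction $(\bfl_j-\bft_i)/\|\bfl_j-\bft_i\|$, with $\bfR_i$ ranging over the compact manifold $\SOthree$. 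This is where all three assumptions enter, and it is the crux of the proof: even though $\bft_i,\bfl_j$ are unbounded, the projector depends only on the \emph{direction} $\hat{\bfv}$, whose derivatives are controlled by $1/\epsilon$.

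With these building blocks I would assemble the result. On the admissible set $\{\|\bfl_j-\bft_i\|>\epsilon,\ \bfR_i\in\SOthree,\ \bfd_i\ \text{bounded}\}$ each of $\hat{\bfv}$ and $\bfp_{ij}$ is bounded and smooth with bounded, Lipschitz first derivatives; applying \cref{lemma::lipschitz_sum_prod} entrywise to $\bfe_{ij}=(\bfI-\hat{\bfv}\hat{\bfv}^\top)\bfp_{ij}$ shows $\bfe_{ij}$—and then $\|\bfe_{ij}\|^2=\bfe_{ij}^\top\bfe_{ij}$—is bounded with bounded, Lipschitz gradient, giving (ii). Combining (i) and (ii) through the product rule and \cref{lemma::lipschitz_sum_prod}, then summing over the finite index set $\calE$, yields bounded and Lipschitz $\nabla F(\bfx)$. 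I expect the main obstacle to be the bookkeeping in the second paragraph: bounding the first and second derivatives of the scale-invariant projector $\bfI-\hat{\bfv}\hat{\bfv}^\top$ uniformly over an unbounded translation/point domain, for which the reformulation in terms of $\hat{\bfv}$ together with \cref{assumption:nonzero} is the essential device.
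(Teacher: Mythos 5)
Your proposal is correct and follows essentially the same route as the paper's proof: reduce to a single term $\Fij$, identify the same bounded Lipschitz building blocks (the compactness of $\SOthree$, the unit direction $(\bfl_j-\bft_i)/\|\bfl_j-\bft_i\|$ controlled by $1/\epsilon$ via \cref{assumption:nonzero}, $\bfp_{ij}$ affine in the bounded intrinsics, and $0\le\nabla\rho\le1$ with Lipschitz gradient), and assemble them with \cref{lemma::lipschitz_sum_prod}. The paper simply writes out the explicit componentwise gradient formulas \cref{eq::DFij} rather than arguing through the projector $\bfI-\hat{\bfv}\hat{\bfv}^\top$ and the chain rule, but the substance is identical.
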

\begin{proof}
	Please refer to App. \hyperref[section::app::lemma::lipschitzF]{D.4}.
\end{proof} 

\begin{lemma} \label{lemma::lipschitzE}
	Suppose $\{\bfxk\}$ and $\{\bflxk\}$ are sequences  resulting from \cref{algorithm::amm}. Then $\nabla E(\bfxk|\bfxk)$, $\nabla E(\bfxkp|\bfxk)$, $\nabla E(\bflxk|\bflxk)$ and $\nabla E(\bfxkp|\bflxk)$ are bounded, and there exists a constant $L>0$ such that
	\begin{equation}\label{eq::lipschitz_E}
		\|\nabla E(\bfxkp|\bfxk) - \nabla E(\bfxk|\bfxk)\|\leq L\|\bfxkp-\bfxk\|
	\end{equation}
	and
	\begin{equation}\label{eq::lipschitz_lE}
		\|\nabla E(\bfxkp|\bflxk) - \nabla E(\bflxk|\bflxk)\|\leq L\|\bfxkp-\bflxk\|
	\end{equation}
	 for any $\sfk\geq 0$ under \cref{assumption::bounded_intr,assumption::loss,assumption:nonzero}.
\end{lemma}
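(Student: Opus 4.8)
The plan is to split the surrogate $E(\bfx\mid\bfy)$ — with $\bfy$ a generic conditioning iterate, later specialized to $\bfxk$ or $\bflxk$ — along the three groups of terms in \cref{eq::Ealpha}: the proximal term $\tfrac12\xi\sum_{\alpha}\|\bfxa-\bfy^\alpha\|^2$, the intra-device sum $\sum_{(i,j)\in\calE'}\Fij$, and the inter-device sum $\sum_{(i,j)\in\calE''}\big(\Pijk+\Qijk\big)$. The proximal gradient equals $\xi(\bfxa-\bfy^\alpha)$, which at the four relevant points is either $\zero$ or $\xi$ times a difference of (extrapolated) iterates, hence bounded and exactly $1$-Lipschitz in the evaluation point. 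The intra-device sum is a restriction of $F$ to $\calE'$, so its gradient inherits the bound and Lipschitz estimate of \cref{lemma::lipschitzF} directly. All the work is thus concentrated on the inter-device sum, and the guiding observation is that for a \emph{fixed} $\bfy$ the quantities $\wijk,\aijk,\lambdaijk,\bfgijk$ are frozen constants, so $\bfx\mapsto\nabla_{\bfx}E(\bfx\mid\bfy)$ is the gradient of a polynomial in $\bfx$ with those constants as coefficients; since in both \cref{eq::lipschitz_E} and \cref{eq::lipschitz_lE} the two evaluation points share the \emph{same} conditioning iterate, I never compare across different coefficient sets.

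Next I would bound the frozen coefficients uniformly in $\sfk$. Because $\bfe_{ij}$ in \cref{eq::error} is a nonexpansive projection of $\bfp_{ij}$, we get $\|\bfeijk\|\le\|\bfpijk\|$, and $\bfpijk$ is bounded since $\bfu_{ij}$ is fixed data and $\bfd_i$ is bounded by \cref{assumption::bounded_intr}; then \cref{assumption::loss}\ref{assumption::loss::drho} gives $\wijk=\nabla\rho(\|\bfeijk\|^2)\in[0,1]$ and, with continuity of $\rho$, bounds $\aijk$ through \cref{eq::a}, while Cauchy--Schwarz in \cref{eq::gamma} together with \cref{assumption:nonzero} gives $|\lambdaijk|\le\|\bfpijk\|/\epsilon$. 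The vector $\bfgijk$ of \cref{eq::g} need not be bounded on its own, but it only ever enters through the two residuals $\bfR_i\bfp_{ij}+\lambdaijk\bft_i-\bfgijk$ and $\lambdaijk\bfl_j-\bfgijk$, and the specific $\tfrac12$-averaging in \cref{eq::g} makes these collapse, at $\bfx=\bfxk$, to $\tfrac12\big(\bfRik\bfpijk-\lambdaijk(\bfljk-\bftik)\big)$ and $\tfrac12\big(\lambdaijk(\bfljk-\bftik)-\bfRik\bfpijk\big)$ respectively, where $\lambdaijk(\bfljk-\bftik)$ is the orthogonal projection of the bounded vector $\bfRik\bfpijk$ and is therefore bounded by $\|\bfpijk\|$. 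Evaluated instead at $\bfxkp$ (or $\bflxk$), each residual differs from its value at $\bfxk$ by a $\bfR_i\bfp_{ij}$-increment plus $\lambdaijk$ times the coordinate increment, \ie by $O(\|\bfxkp-\bfxk\|)$ (resp.\ $O(\|\bfxkp-\bflxk\|)$), which is bounded by \hyperref[section::app::prop3::x]{App. C.3.2}. Hence every residual is uniformly bounded, which is exactly what lets me avoid assuming that the a-priori-unbounded coordinates $\bft_i,\bfl_j$ stay in a compact set.

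With bounded coefficients and bounded residuals, I would assemble the two claims through \cref{lemma::lipschitz_sum_prod,lemma::lipschitz_sum_prod_k}. Every entry of $\nabla_{\bfx}E(\bfx\mid\bfy)$ is a finite sum of products of bounded factors: the constants above, entries of $\bfR_i\in\SOthree$ (compact, hence bounded and Lipschitz), the affine-in-$\bfd_i$ map $\bfp_{ij}$ with $\bfd_i$ bounded, and the bounded residuals. \cref{lemma::lipschitz_sum_prod}\ref{lemma::lipschitz_prod} makes each product bounded and Lipschitz, \cref{lemma::lipschitz_sum_prod}\ref{lemma::lipschitz_sum} sums them, giving boundedness of $\nabla E(\bfxk\mid\bfxk)$, $\nabla E(\bfxkp\mid\bfxk)$, $\nabla E(\bflxk\mid\bflxk)$ and $\nabla E(\bfxkp\mid\bflxk)$ (recalling $\bflxk$ is an extrapolation of iterates with ratio $\gammaak\in[0,1)$). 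For the Lipschitz inequalities, differencing two evaluation points under a \emph{shared} $\bfy$ cancels the troublesome constant parts: the $\bft_i,\bfl_j$-dependence of the gradient is linear with coefficient $\lambdaijk^2$ (bounded), while the $\bfR_i,\bfd_i$-dependence is polynomial with bounded coefficients over the compact/bounded domain, so \cref{lemma::lipschitz_sum_prod_k}\ref{lemma::lipschitz_sum_k} converts the per-term estimates into $\|\nabla E(\bfxkp\mid\bfxk)-\nabla E(\bfxk\mid\bfxk)\|\le L\|\bfxkp-\bfxk\|$ and its analogue, which are \cref{eq::lipschitz_E,eq::lipschitz_lE}.

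I expect the main obstacle to be the uniform (in $\sfk$) boundedness of the residuals rather than any generic compactness of the iterates: this rests entirely on the algebraic cancellation produced by the exact form of $\bfgijk$ in \cref{eq::g} and on $\|\bfxkp-\bfxk\|\to0$, $\|\bfxkp-\bflxk\|\to0$ from \hyperref[section::app::prop3::x]{App. C.3.2}. The delicate ingredient inside this is controlling the $1/\|\bfljk-\bftik\|$ factors hidden in $\lambdaijk$, for which \cref{assumption:nonzero} is indispensable; without it $\lambdaijk$, and hence the residuals and the Lipschitz constant, could blow up. Once the residual bound is secured with a constant independent of $\sfk$, the remaining derivation is routine chain-rule bookkeeping routed through the two Lipschitz lemmas.
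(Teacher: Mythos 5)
Your proposal is correct and follows essentially the same route as the paper's proof: reduce to the $\Pijk$, $\Qijk$ terms (the $\Fij$ part being covered by \cref{lemma::lipschitzF}), bound $\wijk$ and $\lambdaijk$ uniformly via \cref{assumption::loss,assumption:nonzero,assumption::bounded_intr}, exploit the algebraic cancellation from the specific form of $\bfgijk$ so that the residual at $\bfxk$ reduces to $\tfrac12\|\bfeijk\|$ (your ``orthogonal projection'' phrasing is an equivalent way of seeing this), transfer boundedness to $\bfxkp$ and $\bflxk$ via the Lipschitz increment together with $\|\bfxkp-\bfxk\|\rightarrow 0$ from App.~\hyperref[section::app::prop3::x]{C.3.2}, and assemble everything through \cref{lemma::lipschitz_sum_prod,lemma::lipschitz_sum_prod_k}. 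The only (cosmetic) difference is that you explicitly dispatch the proximal term $\tfrac12\xi\sum_\alpha\|\bfxa-\bfxak\|^2$, which the paper leaves implicit.
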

\begin{proof}
	Please refer to App. \hyperref[section::app::lemma::lipschitzE]{D.5}.
\end{proof} 

Then, as discussed before, \cref{lemma::lipschitzF,lemma::lipschitzE} suggest that there exists a constant $L'>0$ such that 
\begin{equation}\label{eq::lipschitz_Fk}
	\|\grad F(\bfxkp) - \grad F(\bfxk)\| \leq L'\|\bfxkp-\bfxk\|,
\end{equation}
\begin{equation}\label{eq::lipschitz_Ek}
	\|\grad E(\bfxkp|\bfxk) - \grad E(\bfxk|\bfxk)\| \leq L'\|\bfxkp-\bfxk\|,
\end{equation}
\begin{equation}\label{eq::lipschitz_lEk}
	\|\grad E(\bfxkp|\bflxk) - \grad E(\bflxk|\bflxk)\| \leq L'\|\bfxkp-\bflxk\|
\end{equation}
for any $\sfk\geq 0$. With \cref{eq::lipschitz_lEk,eq::lipschitz_Fk,eq::lipschitz_Ek}, the proof of $\grad F(\bfxk) \rightarrow \zero$ is as the following. 

As a result of \cref{eq::proj_grad}, it is tedious but straightforward to show from \cref{eq::Fij,eq::P,eq::Q,eq::error} that $\Fij$ and $\Pijk+\Qijk$ have the same Riemannian gradient at $\bfc_i=\bfcik$ and $\bfl_j=\bfljk$:
\begin{subequations}\label{eq::gradFij}
\begin{equation}
	\grad_{\bfc_i} F_{ij}\big(\bfcik,\bfljk\big) = \grad P_{ij}\big(\bfcik|\bfxk\big),
\end{equation}
\begin{equation}
	\grad_{\bfl_j} F_{ij}\big(\bfcik,\bfljk\big) = \grad Q_{ij}\big(\bfljk|\bfxk\big).
\end{equation}
\end{subequations}
The equation above further suggests  that $F(\bfx)$ and $E(\bfx|\bfxk)$ in \cref{eq::Fobj,eq::Ealpha} have the same Riemannian gradient at $\bfx=\bfxk$:
\begin{equation}\label{eq::gradFgradE0}
\grad F(\bfxk) = \grad E(\bfxk|\bfxk).
\end{equation}
Recall from \cref{eq::Esum}  that $E(\bfx|\bfxk)=\sum_{\alpha\in\calS}\Ealphak$  that decouples  $\bfxa$ on different devices. Then, \cref{eq::gradFgradE0} yields
\begin{equation}\label{eq::gradFagradEa0}
\grad_{\bfxa} F(\bfxk) = \grad \Ealpha(\bfxak|\bfxk).
\end{equation}
Note that \cref{algorithm::amm} suggests that $\bfxkp$ results from \cref{eq::update_mm} or \cref{eq::update_amm}. Since $\bfxakp$ is a local minimum, we obtain either
\begin{equation}\label{eq::gradEa0}
	\grad \Ealpha(\bfxakp|\bfxk) = \zero.
\end{equation}
or
\begin{equation}\label{eq::gradEa1}
	\grad \Ealpha(\bfxakp|\bflxk) = \zero
\end{equation}
In summary, there are two possibilities:
\begin{itemize}
\item If $\bfxakp$ results from \cref{eq::update_mm}, then
\begin{equation}
	\begin{aligned}
   &\big\|\grad_{\bfxa} F(\bfxkp)\big\| \\
= &\big\|\grad_{\bfxa} F(\bfxkp) - \grad \Ealpha(\bfxakp|\bfxk)\big\| \\
= &\big\|\grad_{\bfxa} F(\bfxkp)-\grad_{\bfxa} F(\bfxk) +  \grad \Ealpha(\bfxak|\bfxk) - \grad \Ealpha(\bfxakp|\bfxk)\big\| \\
\leq & \big\|\grad_{\bfxa} F(\bfxkp)-\grad_{\bfxa} F(\bfxk)\big\| + \big\| \grad \Ealpha(\bfxakp|\bfxk) - \grad \Ealpha(\bfxak|\bfxk) \big\|
	\end{aligned}
\end{equation}
where the first inequality is from \cref{eq::gradEa0}, the second equality is from $\grad_{\bfxa} F\big(\bfxk\big) =  \grad \Ealpha\big(\bfxak|\bfxk\big)$ in \cref{eq::gradFagradEa0}, and the last inequality is from triangle inequality. The equation above further suggests that
\begin{equation}\label{eq::bnd_gradFa0}
	\begin{aligned}
		&\big\|\grad_{\bfxa} F(\bfxkp)\big\|  \\
\leq & \big\|\grad_{\bfxa} F(\bfxkp)-\grad_{\bfxa} F(\bfxk)\big\| + \big\| \grad \Ealpha(\bfxakp|\bfxk) - \grad \Ealpha(\bfxak|\bfxk) \big\| \\
\leq &\big\| \grad F(\bfxkp) - \grad F(\bfxk) \big\| + \big\|\grad E(\bfxkp|\bfxk) - \grad E(\bfxk|\bfxk)\big\| \\
\leq & 2L' \big\|\bfxkp - \bfxk\big\|
	\end{aligned}
\end{equation}
where the last inequality is from \cref{eq::lipschitz_Ek,eq::lipschitz_Fk}.
\item If  $\bfxakp$ results from \cref{eq::update_amm}, then following a similar procedure, we obtain
\begin{equation}\label{eq::bnd_gradFa1}
\big\|\grad_{\bfxa} F(\bfxkp)\big\| \leq 2L'\big\|\bfxkp - \bflxk\big\|.
\end{equation}
\end{itemize}
As a result of \cref{eq::bnd_gradFa0,eq::bnd_gradFa1}, it is straightforward to show that
\begin{equation}
\big\|\grad_{\bfxa} F(\bfxkp)\big\| \leq 2L' \big\|\bfxkp - \bfxk\big\| + 2L'\big\|\bfxkp - \bflxk\big\|.
\end{equation}
Recall from  App. \hyperref[section::app::prop3::x]{C.3.2} that $\|\bfxkp - \bfxk\big\|\rightarrow 0$ and $\|\bfxkp - \bflxk\big\|\rightarrow 0$. Then, the equation above results in 
\begin{equation}
\big\|\grad_{\bfxa} F(\bfxkp)\big\|  \rightarrow 0,
\end{equation}
from which we  conclude that $\grad F(\bfxk)\rightarrow \zero$. This completes the proof.

\section{Proofs of Lemmas}\label{section::app::lemmas}

\subsection{Proof of \cref{lemma::adaptive}}\label{section::app::lemma::adaptive}
\vspace{0.5em}
\begin{enumerate}[before=\itshape,left=0pt]
	\item Proof of \cref{lemma::adaptive}\ref{lemma::adaptive::Fsum}
\end{enumerate}
\vspace{0.2em}

\cref{lemma::adaptive}\ref{lemma::adaptive::Fsum} is proved by induction as the following.

\begin{enumerate} [1.]
	\item For $\sfk=-1$, \cref{eq::Fainit} indicates that
	\begin{equation}
	\bfx^{\alpha(-1)} = \bfx^{\alpha(0)},\; F^{\alpha(-1)} = \Ealpha(\bfx^{\alpha(-1)}|\iterate{\bfx}{}{-1}),\; \overline{F}^{\alpha(-1)} = F^{\alpha(-1)},\; E^{\alpha(0)} = F^{\alpha(-1)}.
	\end{equation}
Note that \cref{proposition::surrogate} indicates that
\begin{equation}\label{eq::FEk0}
	F(\bfxk) = E(\bfxk|\bfxk)
\end{equation} 
Therefore, we obtain
\begin{equation}
\sum_{\alpha\in\calS} F^{\alpha(-1)} = \sum_{\alpha\in\calS}\Ealpha(\bfx^{\alpha(-1)}|\iterate{\bfx}{}{-1}) = E(\iterate{\bfx}{}{-1}|\iterate{\bfx}{}{-1}) = F(\iterate{\bfx}{}{-1})
\end{equation}
where the second equality is from \cref{eq::Esum} and the third equality is from \cref{eq::FEk0}. Furthermore, with \cref{eq::FakplFakplFak0} and $\iterate{\bfx}{}{-1} = \iterate{\bfx}{}{0}$, the equation above results in
\begin{equation}\label{eq::Fsumk0}
\sum_{\alpha\in\calS}  F^{\alpha(0)} = \sum_{\alpha\in\calS} F^{\alpha(-1)} =F(\iterate{\bfx}{}{-1}) = F(\iterate{\bfx}{}{0}).
\end{equation}

\item For $\sfk\geq 0$, we assume that
\begin{equation}\label{eq::Fsumk}
	\sum_{\alpha\in\calS} \Fak = F(\bfxk).
\end{equation}
From \cref{eq::Ealpha,eq::DEalpha}, an algebraic manipulation indicates that
\begin{equation}\label{eq::FEDE0}
F(\bfx) = \sum_{(i,\,j)\in\calE} \Fij = \sum_{\alpha\in\calS} \Ealphak + \sum_{\alpha\in\calS} \Delta E^\alpha(\bfx|\bfxk).
\end{equation}
In addition, \cref{eq::Fak} indicates that
\begin{equation}
	\sum_{\alpha\in\calS} \Fakp = \sum_{\alpha\in\calS} \Eakp + \sum_{\alpha\in\calS} \Delta\Ealpha\big(\bfxakp|\bfxk\big) 
\end{equation}
Substituting \cref{eq::Eak} into the equation above to expand $\Eakp$ results
\begin{equation}\label{eq::FakpSum0}
	\sum_{\alpha\in\calS} \Fakp = \sum_{\alpha\in\calS} \Ealpha\big(\bfxakp|\bfxk\big) + \sum_{\alpha\in\calS} \Delta\Ealpha\big(\bfxakp|\bfxk\big)  + \sum_{\alpha\in\calS} \Fak - \sum_{\alpha\in\calS} \Ealpha\big(\bfxak|\bfxk\big).
\end{equation}
 Applying \cref{eq::FEDE0} on the right-hand side of \cref{eq::FakpSum0}, we further obtain
\begin{equation}
	\sum_{\alpha\in\calS} \Fakp = F(\bfxkp) + \sum_{\alpha\in\calS} \Fak - \sum_{\alpha\in\calS} \Ealpha\big(\bfxak|\bfxk\big)
\end{equation}
With \cref{eq::Fsumk,eq::Esum}, the equation above is equivalent to 
\begin{equation}\label{eq::Fsumk2}
	\sum_{\alpha\in\calS} \Fakp = F(\bfxkp) + F(\bfxk)-  E\big(\bfxk|\bfxk\big) = F(\bfxkp)
\end{equation}
where the last equality is from \cref{eq::FEk0}.
\item From \cref{eq::Fsumk0,eq::Fsumk2}, we conclude that $\sum_{\alpha\in\calS} \Fak = F(\bfxk)$ for any $\sfk\geq 0$. This completes the proof.
\end{enumerate}

\vspace{0.5em}
\begin{enumerate}[before=\itshape,left=0pt]
	\setcounter{enumi}{1}
	\item Proof of \cref{lemma::adaptive}\ref{lemma::adaptive::lFsum}
\end{enumerate}
\vspace{0.2em}

With \cref{eq::lFak,eq::FakplFakplFak0} , it is straightforward to show that
\begin{equation}\label{eq::lFak0}
	\lFak = (1-\eta)\cdot F^{\alpha(0)} + \eta \cdot \sum_{\sfn=0}^{\sfk} (1-\eta)^{\sfk-\sfn} \cdot F^{\alpha(\sfn)}
\end{equation}
for any $\sfk\geq  0$. Summing both sides of the equation above over $\alpha\in\calS$, we obtain
\begin{equation}
	\sum_{\alpha\in\calS} \lFak = (1-\eta)\cdot \sum_{\alpha\in\calS} F^{\alpha(0)} + \eta \cdot \sum_{\sfn=0}^{\sfk} (1-\eta)^{\sfk-\sfn} \cdot \sum_{\alpha\in\calS} F^{\alpha(\sfn)}
\end{equation}
Furthermore, \cref{lemma::adaptive}\ref{lemma::adaptive::Fsum} indicates that the equation above can be further simplified to
\begin{equation}\label{eq::lFaksum0}
	\sum_{\alpha\in\calS} \lFak = (1-\eta)\cdot F(\bfx^{(0)}) + \eta \cdot \sum_{\sfn=0}^{\sfk} (1-\eta)^{\sfk-\sfn} \cdot F(\bfx^{(\sfn)}).
\end{equation}
Applying \cref{eq::lFk1} on the right-hand side of \cref{eq::lFaksum0} results in
\begin{equation}
\sum_{\alpha\in\calS} \lFak = \lFk.
\end{equation}
This completes the proof.

\vspace{0.5em}
\begin{enumerate}[before=\itshape,left=0pt]
	\setcounter{enumi}{2}
	\item Proof of \cref{lemma::adaptive}\ref{lemma::adaptive::FakEak}
\end{enumerate}
\vspace{0.2em}

As a result of \cref{proposition::surrogate}, it is known that $\Fij - \Pijk - \Qijk \leq 0$. Then, \cref{eq::DEalpha} indicates that $\Delta\Ealphak\leq 0$ always holds. From \cref{eq::Fak}, it is straightforward to conclude $\Fak\leq \Eak$. This completes the proof. 

\vspace{1em}

\subsection{Proof of \cref{lemma::lipschitz_sum_prod}}\label{section::app::lemma::lipschitz_sum_prod}
\vspace{0.5em}
\begin{enumerate}[before=\itshape,left=0pt]
	\item Proof of \cref{lemma::lipschitz_sum_prod}\ref{lemma::lipschitz_sum}
\end{enumerate}
\vspace{0.2em}

Since $G(\cdot)$ and $H(\cdot):\Real{m\times n} \rightarrow \reals$ are Lipschitz continuous functions, then there exists $L>0$ such that
\begin{equation}\label{eq::lipschitz_GH}
	\big|G(\bfx) - G(\bfx')\big|\leq L\cdot\|\bfx - \bfx'\|\;\, \text{and}\;\, \big|H(\bfx) - H(\bfx')\big|\leq L\cdot\|\bfx - \bfx'\|.
\end{equation}
Then, we obtain
\begin{equation}
\begin{aligned}
   & \big|\big(G(\bfx) + H(\bfx)\big) - \big(G(\bfx') + H(\bfx')\big)\big| \\
= & \big|G(\bfx)  - G(\bfx') + H(\bfx)- H(\bfx')\big| \\
\leq & \big|G(\bfx)  - G(\bfx') \big| + \big| H(\bfx)- H(\bfx')\big|\\
\leq & 2L\cdot \|\bfx - \bfx'\|
\end{aligned}
\end{equation}
where the first inequality is from triangle inequality. This completes the proof.

\vspace{0.5em}
\begin{enumerate}[before=\itshape,left=0pt]
	\setcounter{enumi}{1}
	\item Proof of \cref{lemma::lipschitz_sum_prod}\ref{lemma::lipschitz_prod}
\end{enumerate}
\vspace{0.2em}
With $G(\bfx)$ and $H(\bfx)$ bounded, there exists $C>0$ such that
\begin{equation}\label{eq::bounded_GH}
	\big|G(\bfx)\big| \leq C\;\, \text{and} \;\, \big|H(\bfx)\big| \leq C
\end{equation}
for any $\bfx\in\Real{m\times n}$. Then,  we obtain $\big|G(\bfx)\cdot H(\bfx)\big|\leq C^2$, and thus, $\big| G(\bfx)\cdot H(\bfx) \big|$ is bounded. In addition, it can be shown that
\begin{equation}
	\begin{aligned}
		  &\big|G(\bfx)\cdot H(\bfx) - G(\bfx')\cdot H(\bfx')\big| \\
		=& \big|G(\bfx)\cdot H(\bfx)  - G(\bfx')\cdot H(\bfx)   +  G(\bfx')\cdot H(\bfx) - G(\bfx')\cdot H(\bfx') \big|\\
		\leq & \big|G(\bfx)\cdot H(\bfx)  - G(\bfx')\cdot H(\bfx)\big| + \big| G(\bfx')\cdot H(\bfx) - G(\bfx')\cdot H(\bfx') \big| \\
		= & \big|H(\bfx)\big|\cdot\big|G(\bfx)  - G(\bfx')\big| + \big|G(\bfx')\big| \cdot \big|  H(\bfx) -  H(\bfx') \big| \\
		\leq & C \cdot\big|G(\bfx)  - G(\bfx')\big| + C \cdot \big|  H(\bfx) -  H(\bfx') \big|\\
		\leq & 2C\cdot L \cdot\big\|\bfx - \bfx'\big\|
	\end{aligned}
\end{equation}
where the first inequality from triangle inequality, the second inequality is from \cref{eq::bounded_GH} and the third inequality is from \cref{eq::lipschitz_GH}. This completes the proof.

\subsection{Proof of \cref{lemma::lipschitz_sum_prod_k}}\label{section::app::lemma::lipschitz_sum_prod_k}
The proofs are almost the same as those of \cref{lemma::lipschitz_sum_prod} by simply replacing $\bfx$  with $\bfxk$ and $\bfx'$ with $\bfxkp$. 

\subsection{Proof of \cref{lemma::lipschitzF}}\label{section::app::lemma::lipschitzF}

We conclude from \cref{eq::Fobj} that $F(\bfx)$ has bounded and Lipschitz continuous Euclidean gradients if and only if $\Fij$ has  bounded and Lipschitz continuous Euclidean gradients. From \cref{eq::Fij}, the Euclidean gradient  $\nabla \Fij$ is given by
\begin{subequations}\label{eq::DFij}
	\begin{equation}
		\nabla_{\bfR_i}\Fij =  -\nabla \rho\big(\|\bfe_{ij}\|^2\big)\cdot\frac{(\bfl_j-\bft_i)(\bfl_j-\bft_i)^\top}{\|\bfl_j-\bft_i\|^2}\bfR_i\left(\bfe_{ij}\bfp_{ij}^\top + \bfp_{ij}\bfe_{ij}^\top\right),
	\end{equation}
	\begin{equation}
		\nabla_{\bft_i} \Fij = \nabla\rho\big(\|\bfe_{ij}\|^2\big)\cdot \frac{\big(\bfl_j - \bft_i\big)^\top\bfR_i \bfp_{ij}}{\big\|\bfl_j - \bft_i\big\|^2}\cdot\bfR_i\bfe_{ij},
	\end{equation}
	\begin{equation}
		\nabla_{\bfd_i} \Fij =\nabla \rho\big(\|\bfe_{ij}\|^2\big)\cdot\begin{bmatrix}
			0 & 0 & 0\\
			0 & 0 & 0\\
			1  & \|\bfu_{ij}\|^2 & \|\bfu_{ij}\|^4
		\end{bmatrix} \left(\bfI-\frac{\bfR_i^\top(\bfl_j-\bft_i)(\bfl_j-\bft_i)^\top\bfR_i}{\|\bfl_j-\bft_i\|^2}\right)\bfe_{ij},
	\end{equation}
	\begin{equation}
		\nabla_{\bfl_j} \Fij = - \nabla\rho\big(\|\bfe_{ij}\|^2\big) \cdot \frac{\big(\bfl_j - \bft_i\big)^\top\bfR_i \bfp_{ij}}{\big\|\bfl_j - \bft_i\big\|^2}\cdot\bfR_i\bfe_{ij}
	\end{equation}
\end{subequations}
where $\bfp_{ij}$ and $\bfe_{ij}$ are from \cref{eq::error,eq::ray}, respectively. Note that all of these equations above consist of the sum and product of  $\bfR_i$, $\frac{\bfl_j - \bft_i}{\|\bfl_j - \bft_i\|}$, $\frac{1}{\|\bfl_j - \bft_i\|}$, $\bfp_{ij}$, $\bfe_{ij}$, $\nabla \rho\big(\|\bfe_{ij}\|^2\big)$. As a result of to \cref{lemma::lipschitz_sum_prod}, we only need to prove that these functions are bounded in App. \hyperref[section::app::lemma3::bounded]{D.4.1}  and Lipschitz continuous in App. \hyperref[section::app::lemma3::lipschitz]{D.4.2}

\vspace{0.5em}
\begin{enumerate}[before=\itshape,left=0pt]
	\item Proof of Boundedness \label{section::app::lemma3::bounded}
\end{enumerate}
\vspace{0.2em}

\begin{itemize}
	\item $\bfR_i$: Since $\bfR_i\in \SOthree$ is on a compact manifold, it is bounded.
	\vspace{0.5em}
	\item$\dfrac{\bfl_j - \bft_i}{\|\bfl_j - \bft_i\|}$: Note that $\dfrac{\bfl_j - \bft_i}{\|\bfl_j - \bft_i\|}$ is a unit vector under \cref{assumption:nonzero}, and as a result, bounded.
	\vspace{0.5em}
	\item $\dfrac{1}{\|\bfl_j - \bft_i\|}$: From \cref{assumption:nonzero}, it is straightforward to show $\left|\dfrac{1}{\|\bfl_j - \bft_i\|}\right| < \dfrac{1}{\epsilon}$.
	\vspace{0.5em}
	\item $\bfp_{ij}$: With \cref{assumption::bounded_intr} that $\bfd_i\in\Real{3}$ is bounded, \cref{eq::ray} suggests that $\|\bfp_{ij}\|$ is bounded.
	\vspace{0.5em}
	\item $\bfe_{ij}$: According to \cref{eq::error}, we obtain $\|\bfe_{ij}\|\leq \|\bfp_{ij}\|$. Then, since $\|\bfp_{ij}\|$ is bounded, $\bfe_{ij}$ is also bounded.
	\vspace{0.5em}
	\item $\nabla \rho\big(\|\bfe_{ij}\|^2\big)$: The boundedness of $\nabla \rho\big(\|\bfe_{ij}\|^2\big)$ is immediate from \cref{assumption::loss}\ref{assumption::loss::drho}.
\end{itemize}

\vspace{0.5em}
\begin{enumerate}[before=\itshape,left=0pt]
	\setcounter{enumi}{1}
	\item Proof of Lipschitz Continuity \label{section::app::lemma3::lipschitz}
\end{enumerate}
\vspace{0.2em}

%
\begin{itemize}
\item $\bfR_i$: Since $\|\bfR_i - \bfR_i'\| \leq \|\bfR_i - \bfR_i'\|$, we conclude that $\bfR_i$ is Lipschitz continuous.
\vspace{0.5em}
\item $\dfrac{\bfl_j - \bft_i}{\|\bfl_j - \bft_i\|}$: If $\|\bfl_j - \bft_i\|\neq 0$  and $\|\bfl_j' - \bft_i'\|\neq 0$, it can be shown that
\begin{equation}
\begin{aligned}
       &\left\|\frac{\bfl_j - \bft_i}{\|\bfl_j - \bft_i\|} - \frac{\bfl'_j - \bft'_i}{\|\bfl'_j - \bft'_i\|}\right\| \\
\leq & \left\|\frac{\bfl_j - \bft_i}{\|\bfl_j - \bft_i\|} - \frac{\bfl'_j - \bft'_i}{\|\bfl_j - \bft_i\|}\right\| + \left\|\frac{\bfl'_j - \bft'_i}{\|\bfl_j - \bft_i\|} - \frac{\bfl'_j - \bft'_i}{\|\bfl'_j - \bft'_i\|}\right\| \\
= &  \frac{ \|\bfl_j - \bft_i - \bfl'_j + \bft'_i\| }{\|\bfl_j - \bft_i\| } + \frac{\left| \|\bfl_j - \bft_i\| - \|\bfl'_j - \bft'_i\| \right|}{\|\bfl_j - \bft_i\| }\\
\leq &  \frac{ 2\|\bfl_j - \bft_i - \bfl'_j + \bft'_i\| }{\|\bfl_j - \bft_i\| }\\
\leq & \frac{ 2\|\bfl_j -  \bfl'_j \| + 2\|\bft_i -\bft'_i\| }{\|\bfl_j - \bft_i\| }\\
\leq & \frac{2}{\epsilon}\|\bfl_j -  \bfl'_j \| + \frac{2}{\epsilon}\|\bft_i -\bft'_i\|
\end{aligned}
\end{equation}
where the first three inequalities are from the triangle inequality and the last inequality is from  \cref{assumption:nonzero}, \ie, $\|\bfl_j - \bft_i\| > \epsilon$.

\vspace{0.5em}
\item $\dfrac{1}{\|\bfl_j - \bft_i\|}$: If $\|\bfl_j - \bft_i\|\neq 0$  and $\|\bfl_j' - \bft_i'\|\neq 0$, it can be shown that
\begin{equation}
	\begin{aligned}
		&\left|\frac{1}{\|\bfl_j - \bft_i\|} - \frac{1}{\|\bfl'_j - \bft'_i\|}\right| \\
		= &   \frac{\left| \|\bfl_j - \bft_i\| - \|\bfl'_j - \bft'_i\| \right|}{\|\bfl_j - \bft_i\| \cdot \|\bfl'_j - \bft'_i\|}\\
		\leq &  \frac{ \|\bfl_j - \bft_i - \bfl'_j + \bft'_i\| }{\|\bfl_j - \bft_i\| \cdot \|\bfl'_j - \bft'_i\|}\\
		\leq & \frac{ \|\bfl_j -  \bfl'_j \| + \|\bft_i -\bft'_i\| }{{\|\bfl_j - \bft_i\| \cdot \|\bfl'_j - \bft'_i\|} }\\
		\leq & \frac{1}{\epsilon^2}\|\bfl_j -  \bfl'_j \| + \frac{1}{\epsilon^2}\|\bft_i -\bft'_i\|
	\end{aligned}
\end{equation}
where the first two inequalities are from the triangle inequality and the last inequality is from  \cref{assumption:nonzero}, \ie, $\|\bfl_j - \bft_i\| > \epsilon$.

\vspace{0.5em}
\item $\bfp_{ij}$: As a result of \cref{eq::ray}, it is straightforward to show that $\bfp_{ij}$ is Lipschitz continuous with respect to $\bfd_i\in\Real{3}$.

\vspace{0.5em}
\item $\bfe_{ij}$: In \cref{eq::error}, it can be seen that $\bfe_{ij}$ consists of the sum and product of  $\bfR_i$, $\frac{\bfl_j - \bft_i}{\|\bfl_j - \bft_i\|}$, $\bfp_{ij}$, all of which have been proved to bounded and Lipschitz continuous under \cref{assumption:nonzero,assumption::bounded_intr}. Then, \cref{lemma::lipschitz_sum_prod} suggests that $\bfe_{ij}$ is Lipschitz continuous.

\vspace{0.5em}
\item $\nabla \rho\big(\|\bfe_{ij}\|^2\big)$: It can be shown that
\begin{equation}
\begin{aligned}
        &  \left|\nabla \rho\big(\|\bfe_{ij}\|^2\big) - \nabla \rho\big(\|\bfe'_{ij}\|^2\big)\right| \\
 \leq & L \left| \|\bfe_{ij}\|^2 - \|\bfe'_{ij}\|^2 \right| \\
 = & L  \left|\| \bfe_{ij}\| + \|\bfe'_{ij}\|\right| \cdot \left|\|\bfe_{ij}\| - \|\bfe'_{ij}\|\right| \\
 \leq &  L \left( \|\bfe_{ij}\| + \|\bfe'_{ij}\| \right) \cdot \|\bfe_{ij} - \bfe'_{ij}\|
\end{aligned}
\end{equation}
where the first inequality is from the Lipschitz continuity of $\nabla \rho(\cdot)$ in \cref{assumption::loss}\ref{assumption::loss::lipschitz} and the last inequality is from the triangle inequality. Note that we have proved before that $\bfe_{ij}$ is bounded and Lipschitz continuous. Thus,  the equation above indicates that $\nabla \rho\big(\|\bfe_{ij}\|^2\big)$ is Lipschitz continuous.
\end{itemize}
From the discussions above, we conclude that $\nabla \Fij$ is bounded and Lipschitz continuous, which, as discussed before, further suggests that $F(\bfx)$ is bounded and Lipschitz continuous. This completes the proof.
\vspace{0.5em}

\subsection{Proof of \cref{lemma::lipschitzE}}\label{section::app::lemma::lipschitzE}

Recall from \cref{eq::Ealpha} that $E(\bfx|\bfxk)$ consists of $\Fij$, $\Pijk$, $\Qijk$,  and we have proved in App. \hyperref[section::app::lemma::lipschitzF]{D.4} that $\Fij$ has bounded and Lipschitz continuous Euclidean gradients. Therefore, we only need to prove that $\nabla\Pijk$ and $\nabla\Qijk$ are bounded at $\bfxk$ and $\bfxkp$, and there exists a constant $L>0$ such that 
\begin{equation}\label{eq::Plipschitz}
\big\|\nabla P_{ij}\big(\bfcikp |\bfxk \big) - \nabla P_{ij}\big(\bfcik |\bfxk \big) \big\| \leq L \big\|\bfcikp - \bfcik\big\|
\end{equation}
and
\begin{equation}\label{eq::Qlipschitz}
\big\|\nabla Q_{ij}\big(\bfljkp |\bfxk \big) - \nabla Q_{ij}\big(\bfljk |\bfxk \big) \big\| \leq L \big\|\bfljkp - \bfljk\big\|
\end{equation}
for any $\sfk\geq 0$. As a result of \cref{eq::P,eq::Q}, $\nabla \Pijk$ and $\nabla \Qijk$ are given by 
\begin{equation}\label{eq::dPR}
\nabla_{\bfR_i} \Pijk =   2\wijk\cdot \big(\bfR_i\bfp_{ij} + \lambdaijk\cdot\bft_i - \iterate{\bfg}{ij}{k}\big)\bfp_{ij}^\top,
\end{equation}
\begin{equation}\label{eq::dPt}
	\nabla_{\bft_i} \Pijk =2\lambdaijk \cdot  \wijk \cdot \big(\bfR_i\bfp_{ij} + \lambdaijk\cdot\bft_i - \iterate{\bfg}{ij}{k}\big),
\end{equation}
\begin{equation}\label{eq::dPd}
	\nabla_{\bfd_i} \Pijk =2\lambdaijk \cdot  
	\begin{bmatrix}
		0 & 0 & 0\\
		0 & 0 & 0\\
		1 & \|\bfu_{ij}\|^2 & \|\bfu_{ij} \|^4
	\end{bmatrix} \bfR_i^\top \wijk \cdot  \big(\bfR_i\bfp_{ij} + \lambdaijk\cdot\bft_i - \iterate{\bfg}{ij}{k}\big),
\end{equation}
\begin{equation}\label{eq::dQl}
\nabla_{\bfl_j} \Qijk = 2 \wijk \cdot \big(\lambdaijk\cdot\bfl_j - \bfgijk\big).
\end{equation}
In these equations above, \cref{eq::w,eq::gamma} and \cref{assumption::bounded_intr,assumption:nonzero,assumption::loss}  indicate that
\begin{equation}\label{eq::bounded_w}
	0 \leq \wijk \leq 1
\end{equation}
and  $\lambdaijk$ is bounded for any $\sfk\geq 0$.
Then, as a result of \cref{eq::dPt,eq::dQl},  $\nabla_{\bft_i}  \Pijk$ and $\nabla_{\bfl_j} \Qijk$ are Lipschitz continuous. 
In addition,  $\bfR_i\bfp_{ij}$ is Lipschitz continuous according to \cref{lemma::lipschitz_sum_prod}. Since the sum of Lipschitz continuous functions is Lipschitz continuous,   $\bfR_i\bfp_{ij} + \lambdaijk\cdot\bft_i - \iterate{\bfg}{ij}{k}$ in \cref{eq::dPR,eq::dPd} is Lipschitz continuous. Furthermore, the boundedness of $\lambdaijk$ suggests that  there exists a constant $M>0$ such that
\begin{equation}\label{eq::lipschiz_Pij_err}
\big\|\big(\bfRikp\bfpijkp + \lambdaijk\cdot\bftikp -\bfgijk \big) - \big(\bfRik\bfpijk + \lambdaijk\cdot\bftik -\bfgijk \big)\big\| \leq M\cdot \|\bfcikp - \bfcik\|
\end{equation}
for any $\sfk\geq 0$.  Recall that we have proved that $\bfR_i$ and $\bfp_{ij}$ are bounded and Lipschitz continuous in App. \hyperref[section::app::lemma::lipschitzF]{D.2}. According to \cref{lemma::lipschitz_sum_prod_k},  we might conclude from \cref{eq::dPR,eq::dPt,eq::dPd,eq::dQl} that  $\nabla\Pijk$ and $\nabla\Qijk$  are bounded at $\bfxk$ and $\bfxkp$ and \cref{eq::Plipschitz,eq::Qlipschitz} hold for any $\sfk\geq 0$ by proving $\bfR_i\bfp_{ij} + \lambdaijk\cdot\bft_i - \iterate{\bfg}{ij}{k}$  and $\lambdaijk\cdot\bfl_j - \bfgijk$ are  bounded at $\bfxkp$ and $\bfxk$. In the following, we prove the boundedness of $\bfR_i\bfp_{ij} + \lambdaijk\cdot\bft_i - \iterate{\bfg}{ij}{k}$ and $\lambdaijk\cdot\bfl_j - \bfgijk$ at $\bfxkp$ and $\bfxk$.

With \cref{eq::g}, it is straightforward to show
\begin{equation}\label{eq::approx0}
\left\|\bfRik\bfpijk + \lambdaijk\cdot\bftik - \iterate{\bfg}{ij}{k}\right\| = \half\left\|\bfRik\bfpijk - \lambdaijk\cdot\big(\bfljk - \bftik\big)\right\| = \half\left\|\bfpijk - \lambdaijk\cdot{\bfRik}^\top\big(\bfljk - \bftik\big)\right\|
\end{equation}
where the last equality is from ${\bfRik}^\top\bfRik=\bfI$. Substituting \cref{eq::gamma} into \cref{eq::approx0} and simplifying the resulting equation with \cref{eq::error}, we obtain
\begin{equation}
\left\|\bfRik\bfpijk + \lambdaijk\cdot\bftik - \iterate{\bfg}{ij}{k}\right\| = \half\left\|\bfeijk\right\|.
\end{equation}
Note that we have proved in App. \hyperref[section::app::lemma3::bounded]{D.4.1} that $\|\bfe_{ij}\|$ is bounded under \cref{assumption:nonzero,assumption::bounded_intr}. Then, the equation above suggests that there exists a constant $C>0$ such that
\begin{equation}\label{eq::bounded_dPij1}
\left\|\bfRik\bfpijk + \lambdaijk\cdot\bftik - \iterate{\bfg}{ij}{k}\right\|\leq C
\end{equation}
for any $\sfk\geq 0$. In addition, it can be shown that
\begin{equation}\label{eq::lipschitz_dPij1}
\begin{aligned}
	& \big\|\bfRikp\bfpijk + \lambdaijk\cdot\bftikp - \iterate{\bfg}{ij}{k} \big\|  - \big\|\bfRik\bfpijk + \lambdaijk\cdot\bftik - \iterate{\bfg}{ij}{k} \big\| \\
\leq &  \left| \big\|\bfRikp\bfpijk + \lambdaijk\cdot\bftikp - \iterate{\bfg}{ij}{k} \big\|  - \big\|\bfRik\bfpijk + \lambdaijk\cdot\bftik - \iterate{\bfg}{ij}{k} \big\|  \right| \\
\leq & \big\|\big(\bfRikp\bfpijkp + \lambdaijk\cdot\bftikp -\bfgijk \big) - \big(\bfRik\bfpijk + \lambdaijk\cdot\bftik -\bfgijk \big)\big\| \\
\leq & M\cdot \|\bfcikp - \bfcik\|
\end{aligned}
\end{equation}
where the first and second inequalities are from triangle inequality, and the last inequality is from \cref{eq::lipschiz_Pij_err}. Then, as a result of \cref{eq::bounded_dPij1,eq::lipschitz_dPij1}, we obtain
\begin{equation}\label{eq::bounded_dPijkp0}
\big\|\bfRikp\bfpijk + \lambdaijk\cdot\bftikp - \iterate{\bfg}{ij}{k} \big\|  \leq  C  +  M\cdot \|\bfcikp - \bfcik\|
\end{equation}
for any $\sfk\geq 0$. Moreover, we have proved that $\|\bfxkp-\bfxk\|\rightarrow 0$ in App. \hyperref[section::app::prop3::x]{C.3.2}. Thus, $\|\bfxkp - \bfxk\|$ as well as $\|\bfcikp - \bfcik\|$ are bounded, i.e., there exists $N>0$ such that
\begin{equation}\label{eq::bounded_dPijkp1}
\|\bfcikp - \bfcik\| \leq \|\bfxkp - \bfxk\| \leq N
\end{equation}
for any $\sfk\geq 0$. Applying \cref{eq::bounded_dPijkp1} on the right-hand side of \cref{eq::bounded_dPijkp0}, we obtain
\begin{equation}\label{eq::bounded_dPijkp2}
\big\|\bfRikp\bfpijk + \lambdaijk\cdot\bftikp - \iterate{\bfg}{ij}{k} \big\|  \leq  C +  MN
\end{equation}
for any $\sfk\geq 0$. From \cref{eq::bounded_dPijkp2,eq::lipschitz_dPij1}, we conclude that $\bfR_i\bfp_{ij} + \lambdaijk\cdot\bft_i - \iterate{\bfg}{ij}{k}$ are bounded at $\bfxkp$ and $\bfxk$. In a similar way, it can  be shown that $\lambdaijk\cdot\bfl_j - \bfgijk$ are bounded at $\bfxkp$ and $\bfxk$. 

We have proved that \cref{eq::dPR,eq::dPd,eq::dPt,eq::dQl} consist of bounded and Lipschitz continuous functions. As discussed before, this suggests that $\nabla E(\bfxk|\bfxk)$ and $\nabla E(\bfxkp|\bfxk)$ are bounded and \cref{eq::lipschitz_E} holds for any $\sfk\geq 0$. In addition, with almost the same procedure, we can also prove that $\nabla E(\bflxk|\bflxk)$ and $\nabla E(\bfxkp|\bflxk)$ are bounded and and \cref{eq::lipschitz_lE} holds for any $\sfk\geq 0$. This completes the proof.

\end{appendices}

\end{document}